\theoremstyle{plain}
\newtheorem{theorem}{Theorem}[section]
\newtheorem{lem}[theorem]{Lemma}
\newtheorem*{lem*}{Lemma}
\newtheorem{cor}{Corollary}
\newtheorem*{cor*}{Corollary}
\theoremstyle{definition}
\newtheorem{defn}{Definition}[section]
\newtheorem*{defn*}{Definition}
\newtheorem{assump}{Assumption}
\theoremstyle{remark}
\newtheorem{rem}{Remark}
\newtheorem*{rem*}{Remark}
\newcommand{\aname}{{\textsf{STEM}}}
\title{\bf STEM: A Stochastic Two-Sided Momentum Algorithm Achieving Near-Optimal Sample and Communication Complexities for Federated Learning}
\date{} 
\author{\large Prashant Khanduri$^{\ast \dagger}$, Pranay Sharma$^\diamond$, Haibo Yang$^\ast$, Mingyi Hong$^\dagger$, Jia Liu$^\ast$,\\
Ketan Rajawat$^\ddagger$, and Pramod K. Varshney$^\diamond$  \\[.5cm]
	\small $^{\ast }$Department  of Electrical and Computer Engineering, \\
	\small The Ohio State University, OH, USA \\
	\small $^{\dagger}$Department  of Electrical and Computer Engineering, \\
	\small University of Minnesota, MN, USA\\
	\small $^{\diamond}$Department of Electrical Engineering and Computer Science,\\
	\small Syracuse University, NY, USA\\
	\small $^{\ddagger}$Department of Electrical Engineering,\\
	\small Indian Institute of Technology Kanpur, India\\
	\small Email: \texttt{khand095@umn.edu, psharm04@syr.edu,  yang.5952@buckeyemail.osu.edu}\\ 
	\small \texttt{mhong@umn.edu}, \texttt{liu@ece.osu.edu},
	 \texttt{ketan@iitk.ac.in}, \texttt{varshney@syr.edu}} 
\begin{document}

\maketitle

\begin{abstract}
Federated Learning (FL) refers to the paradigm where multiple worker nodes (WNs) build a joint model by using local data. Despite extensive research, for a generic non-convex FL problem, it is not clear, how to choose the WNs' and the server's update directions, the  minibatch sizes, and the local update frequency, so that the WNs use the minimum number of samples and communication rounds to achieve the desired solution. This work addresses the above question and considers a class of stochastic algorithms where the WNs perform a few local updates before communication. We show that when both the WN's and the server's directions are chosen based on a stochastic momentum estimator, the algorithm requires $\tilde{\mathcal{O}}(\epsilon^{-3/2})$ samples and  $\tilde{\mathcal{O}}(\epsilon^{-1})$ communication rounds to compute an $\epsilon$-stationary solution. To the best of our knowledge, this is the first FL algorithm that achieves such {\it near-optimal} sample and communication complexities simultaneously. Further, we show that there is a trade-off curve between local update frequencies and local minibatch sizes, on which the above sample and communication complexities can be maintained. Finally, we show that for the classical FedAvg (a.k.a. Local SGD, which is a momentum-less special case of the \aname), a similar trade-off curve exists,  
albeit with worse sample and communication complexities. Our insights on this trade-off provides guidelines for choosing the four important design elements for FL algorithms, the update frequency, directions, and minibatch sizes to achieve the best performance.  
\end{abstract}

\section{Introduction}
\label{sec: Intro}
%In the current age of Big data, learning systems have to consistently draw inferences from large (often streaming) data samples. 
%Training machine learning models while accessing massive datasets from a centralized location is not always feasible because of both computation and memory constraints \cite{Xing_Engineering_2016}. 
In Federated Learning (FL), multiple worker nodes (WNs) collaborate with the goal of learning a joint model, by only using local data. Therefore it has become popular for machine learning problems where datasets are massively distributed % To alleviate this shortcoming and speedup computations, modern machine learning applications rely on distributed training referred to as federated learning (FL)
\cite{Konevcny_Arxiv_2016}. In FL, the data is often collected at or off-loaded to multiple WNs which in collaboration with a server node (SN) jointly aim to learn a centralized model \cite{Li_Smola_NIPS_2014_CommunicationEfficient,Dean_NIPS_2012large}. The local WNs share the computational load and since the data is local to each WN, FL also provides some level of data privacy \cite{Leaute_JAIR_2013}. A classical distributed optimization problem that $K$ WNs aim to solve:
% \begin{align}
% \label{eq: DecentralizedProb}
%      \min_{x \in \mathbb{R}^d} ~& \bigg\{ f(x) := \frac{1}{K} \sum_{k = 1}^K \underbrace{\mathbb{E}_{\xi^{(k)} \sim \mathcal{D}^{(k)}}[ f^{(k)}(x; \xi^{(k)})]}_{f^{(k)}(x)}\bigg\},
% \end{align}
\begin{align}
\label{Eq: Problem}
     \min_{x \in \mathbb{R}^d} ~&\bigg\{  f(x) \coloneqq \frac{1}{K} \sum_{k = 1}^K f^{(k)}(x) \coloneqq \frac{1}{K} \sum_{k = 1}^K \mathbb{E}_{\xi^{(k)} \sim \mathcal{D}^{(k)}}[ f^{(k)}(x; \xi^{(k)})] \bigg\}.
\end{align}
where $f^{(k)} : \mathbb{R}^d \to \mathbb{R}$ denotes the smooth (possibly non-convex) objective function and $\xi^{(k)} \sim \mathcal{D}^{(k)}$ represents the sample/s drawn from distribution $\mathcal{D}^{(k)}$ at the $k^\text{th}$ WN with $k \in [K]$. When the distributions $\mathcal{D}^{(k)}$ are different across the WNs, it is referred to as the \emph{heterogeneous} data setting. %Each WN has access to only the stochastic samples $\xi^{(k)} \sim \mathcal{D}^{(k)}$ of the locally available function $\{f^{(k)}(\cdot)\}_{k=1}^K$. %where $\mathcal{D}^{(k)}$ represents the distribution at the $k$th WN. 

% The objective is to minimize a smooth non-convex function $f : \mathbb{R}^d \to \mathbb{R}$ given as:  
% \begin{align}
% \label{Eq: Problem}
%      \min_{x \in \mathbb{R}^d} ~&\bigg\{  f(x) \coloneqq \frac{1}{K} \sum_{k = 1}^K f^{(k)}(x) \coloneqq \frac{1}{K} \sum_{k = 1}^K \mathbb{E}_{\xi^{(k)} \sim \mathcal{D}^{(k)}}[ f^{(k)}(x; \xi^{(k)})] \bigg\}.
% \end{align}
% over a network with $K$ WNs and a SN. Each WN has access to only the stochastic samples $\xi^{(k)} \sim \mathcal{D}^{(k)}$ of the locally available function $\{f^{(k)}(\cdot)\}_{k=1}^K$, where $\mathcal{D}^{(k)}$ represents the distribution at the $k$th WN. 

\begin{figure}[t]
% \begin{subfigure}{.45\textwidth}
%   \centering
%   % include first image
%   \includegraphics[width=1\linewidth, height = 1.8 in]{Tradeoff_FedAvg.png}  
%   \caption{Communication complexity.}
%   \label{Fig: Trade-off_FedAvg}
% \end{subfigure}
% \begin{subfigure}{.45\textwidth}
%   \centering
%   % include second image
%   \includegraphics[width=1\linewidth, height = 1.8 in]{Updates_vs_Batch_FedAvg.png}  
%   \caption{Minibatch sizes vs Local Updates.}
%   \label{Fig: BvsI_FedAvg}
%\end{subfigure} \vspace{0.1 in}\\

  \centering
  % include first image
 \subfigure[Communication complexity.]{\label{Fig: Trade-off_SAMVR}\includegraphics[width=0.45\linewidth]{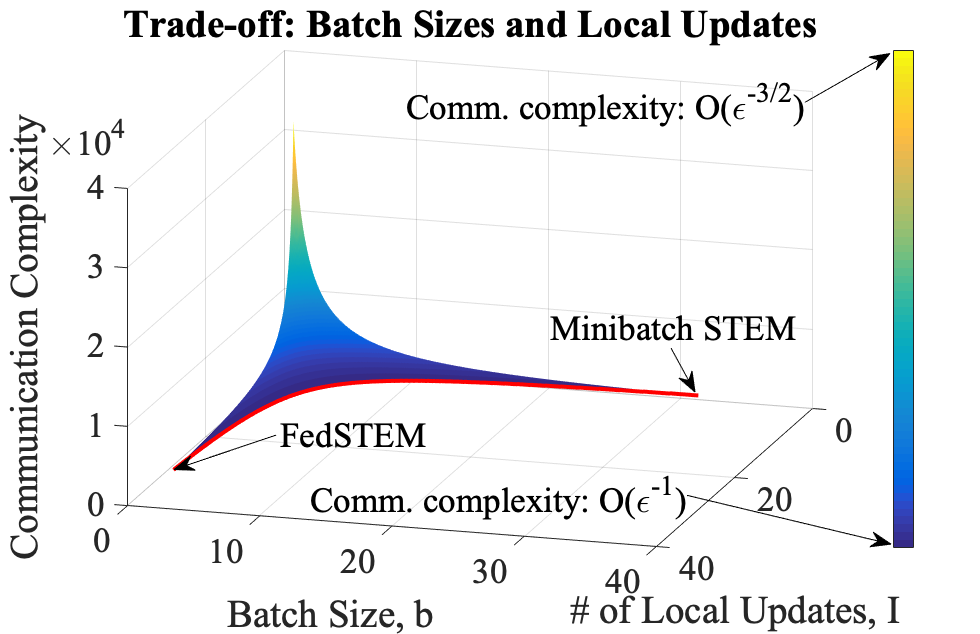}  }  
 \subfigure[Minibatch sizes vs Local Updates.]{\label{Fig: BvsI_SAMVR}\includegraphics[width=0.45\linewidth]{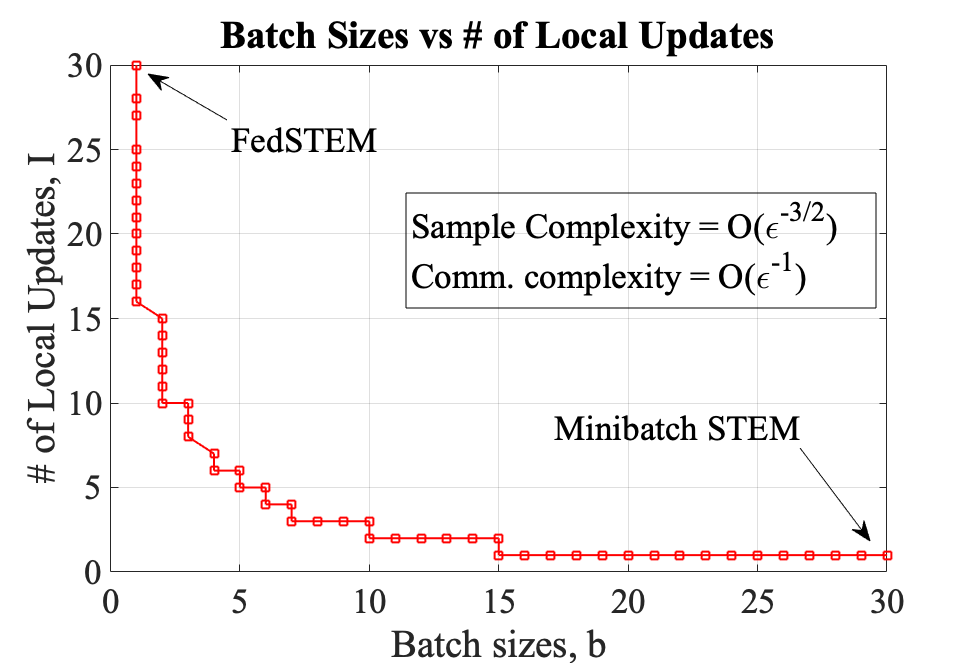}}
\caption{\small The 3D surface in (a) plots the communication complexity of the proposed \aname~for different minibatch sizes and number of local updates. The surface is generated such that each point represents  \aname~ with a particular choice of $(b,I)$, so that it requires $\tilde{\mathcal{O}}(\epsilon^{-3/2})$ samples to achieve $\epsilon$-stationarity, but a . Plot (b) shows the optimal trade off between the minibatch sizes and the number of local updates at each WN (i.e., achieving the lowest communication and sample complexities). Both  plots are generated for an accuracy of $\epsilon = 10^{-3}$ and all the constants dependent on system parameters (variance of stochastic gradients, heterogeneity parameter, optimality gap, Lipschitz constants, etc.) are assumed to be $1$. Fed~\aname~is a special case of \aname~where $\mathcal{O}(1)$ minibatch is used; Minibatch~\aname~is a special case of \aname~where $\mathcal{O}(1)$ local updates are used.} 
\label{Fig: FedAvgvsSAMVR}
\end{figure}

% \begin{figure}[t]
% \begin{subfigure}{.49\textwidth}
%   \centering
%   % include first image
%   \includegraphics[width=1\linewidth, height = 2.0 in]{Tradeoff_SAMVR.png}  
%   \caption{Communication complexity for FedAvg (flexible).}
%   \label{Fig: Trade-off_SAMVR}
% \end{subfigure}
% \begin{subfigure}{.49\textwidth}
%   \centering
%   % include second image
%   \includegraphics[width=1\linewidth, height = 2.0 in]{Updates_vs_Batch_SAMVR.png}  
%   \caption{Communication complexity for \aname.}
%   \label{Fig: BvsI_SAMVR}
% \end{subfigure}
% \caption{Communication complexity of \aname~for different choices of batch sizes and number of local updates with $\epsilon = 10^{-4}$ and $K = 5$. Note that the above three-dimensional surface on the left is generated such that on the surface \aname~requires $\mathcal{O}(\epsilon^{-3/2})$ samples (and sample gradient evaluations) to achieve an $\epsilon$-stationary point. Also, the plot on the right shows the trade-off between the number of local updates and batch sizes for sample complexity $\mathcal{O}(\epsilon^{-3/2})$ and communication complexity $\mathcal{O}(\epsilon^{-1})$.}
% \label{Fig: SAMVR}
% \end{figure}

The optimization performance of non-convex FL algorithms is typically measured by the total number of samples accessed (cf. Definition \ref{Def: ComputationComplexity}) and the total rounds of communication (cf. Definition \ref{Def: CommunicationComplexity}) required by each WN to achieve an $\epsilon$-stationary solution (cf. Definition \ref{Def: StationaryPt}). To minimize the sample and the communication complexities, FL algorithms rely on the following {\em four} key design elements: (i) the WNs' local model update directions, (ii) Minibatch size to compute each local direction, (iii) the number of local updates before WNs share their parameters, and (iv) the SN's update direction. How to find effective FL algorithms by (optimally) designing these parameters has received significant research interest recently. %How to design these parameters  has been of prime interest to the FL community to design these parameters optimally. 

\noindent{\bf Contributions.} The main contributions of this work are listed below: 

{\bf 1)} We propose the \underline{S}tochastic \underline{T}wo-Sid\underline{e}d \underline{M}omentum (\aname) algorithm, that utilizes a momentum-assisted stochastic gradient directions for {\it both} the WNs and SN updates. %which can  %trade-off between the minibatch sizes and local update frequency;  see Figures \ref{Fig: Trade-off_SAMVR} and \ref{Fig: BvsI_SAMVR}.
We show that there exists an {\it optimal} trade off between the minibatch sizes and local update frequency, %on which %On this trade-off curve %(cf. Figure \ref{Fig: BvsI_SAMVR}),
such that on the trade-off curve \aname~requires $\tilde{\mathcal{O}}(\epsilon^{-3/2})\footnote{The notation $\tilde{\mathcal{O}}(\cdot)$ hides the logarithmic factors.}$ samples  and $\tilde{\mathcal{O}}(\epsilon^{-1})$ communication rounds  to reach an $\epsilon$-stationary solution; see Figure \ref{Fig: FedAvgvsSAMVR} for an illustration. 
These complexity results are the best achievable for first-order stochastic FL algorithms (under certain assumptions, cf. Assumption \ref{Ass: Lip_Smoothness}); see \cite{Fang_NIPS_2018_spider, Zhou_NIPS_2018_SNVRG, Cutkosky_NIPS2019, Dinh_Arxiv_2019}   and \cite{Zhang_FedPD_Arxiv_2020, Li_Arxiv_2018_FedProx}, as well as Remark \ref{rmk:complexity} of this paper for discussions regarding optimality. %sample and communication complexities, respectively. 
To the best of our knowledge, \aname~is the first algorithm which -- (i) {\it simultaneously} achieves the optimal sample and communication complexities for FL and (ii) can optimally trade off the minibatch sizes and the local update frequency. %Our result indicates that one needs to carefully choose local update frequency/direction, and minibatch sizes to achieve the best performance.

{{\bf 2)}  A momentum-less special case of our STEM result further reveals some interesting insights of the classical FedAvg algorithm (a.k.a. the Local SGD) \cite{Mcmahan_PMLR_2017,Yu_Zhu_2018parallel,Woodworth_Minibatch_Arxiv_2020}.
Specifically, we show that for FedAvg, there also exists a trade-off between the minibatch sizes and the local update frequency, on which it  requires $\mathcal{O}(\epsilon^{-2})$ samples and $\mathcal{O}(\epsilon^{-3/2})$ communication rounds to achieve an $\epsilon$-stationary solution.} %Interestingly, the points on this trade-off curve represents different instantiations of FedAvg algorithm with different minibatch sizes and local updates.
%this trade-off curve interpolate between two popular FL algorithms -- the {\it Local SGD} (a.k.a. FedAvg) , which utilizes a minibatch size of $\mathcal{O}(1)$ and employs multiple local updates before communication, and the {\it Minibatch SGD} \cite{Woodworth_Local_Arxiv_2020, Woodworth_Minibatch_Arxiv_2020}, which uses large batch sizes but utilizes only $\mathcal{O}(1)$ local updates before communication\footnote{The communication complexity can be improved from $\mathcal{O}(\epsilon^{-3/2})$ to $\mathcal{O}(\epsilon^{-1})$ when the communication happens after {\it each} local update. The algorithm in such case resembles a centralized algorithm \cite{Ghadimi_Siam_2013_SGD, Dekel_Shamir_JMLR_2012optimal}}. 
%This result help to better understand the two classical FL algorithms. %We name such an algorithm {\it Flexible FedAvg.} %However, since the focus of this paper is on the momentum version of the Minibatch/Local SGD, we will not focus too much on this setting.}. 

Collectively, our insights on the trade-offs provide practical guidelines for choosing different design elements for FL algorithms.
{\renewcommand{\arraystretch}{1.2}\begin{table*}[t]
		\centering\small
	 {\begin{tabular}{c c c c c c} 
				\toprule  
				 {Algorithm} & Work & Sample &  Communication & Minibatch $(b)$   & Local Updates $(I)$ /round    \\  
				\midrule  	 
% 		\rowcolor{gray!10} Flexible FedAvg$^\diamond$ & this work &  {$\mathcal{O}(\epsilon^{-2})$} &  $\mathcal{O}(\epsilon^{-3/2})$ & $\mathcal{O}\big( \epsilon^{-\frac{2 (1 - \nu)}{(4 - \nu)}} \big)$ & $\mathcal{O}\big( \epsilon^{-\frac{3\nu}{2(4 - \nu)}} \big)$ \\
			 \hline
				 \multirow{3}{*}{FedAvg$^\diamond$}
				%  & \cite{Yu_Zhu_2018parallel}$^\dagger$/\cite{Yu_Jin_Arxiv_2019linear}  & \multirow{4}{*}{$\mathcal{O}(\epsilon^{-2})$}  & $\mathcal{O}(\epsilon^{-3/2})$ & \multirow{4}{*}{$\mathcal{O}(1)$} & $\mathcal{O}(\epsilon^{-1/2})$ \\ 
				& \cite{Yu_Zhu_2018parallel} /\cite{Yu_Jin_Arxiv_2019linear}  & \multirow{3}{*}{$\mathcal{O}(\epsilon^{-2})$}   & $\mathcal{O}(\epsilon^{-3/2})$ & {$\mathcal{O}(1)$}  & $\mathcal{O}(\epsilon^{-1/2})$ \\    & \cite{Karimireddy_scaffold_2020}/\cite{Yang_ICLR_2021}   &   &  $\mathcal{O}(\epsilon^{-2})$ &  $\mathcal{O}(1)$ & $\mathcal{O}(1)$
				% \\   & \cite{Yang_ICLR_2021}  &   & $\mathcal{O}( \epsilon^{-2})$ &   & $\mathcal{O}(1)$ 
				\\   
				& this work   &   & $\mathcal{O}(\epsilon^{-3/2})$ &  $\mathcal{O}\big( \epsilon^{-\frac{2 (1 - \nu)}{(4 - \nu)}} \big)$ &  $\mathcal{O}\big( \epsilon^{-\frac{3\nu}{2(4 - \nu)}} \big)$      \\
				\hline
			SCAFFOLD$^{\ast}$ & \cite{Karimireddy_scaffold_2020}  & $\mathcal{O}(\epsilon^{-2})$ &  $\mathcal{O}(\epsilon^{-2})$ & $\mathcal{O}(1)$ & $\mathcal{O}(1)$ \\
				\hline
				FedPD/FedProx$^\ddagger$ & \cite{Zhang_FedPD_Arxiv_2020}/\cite{Li_Arxiv_2018_FedProx} & $\mathcal{O}(\epsilon^{-2})$ & $\mathcal{O}(\epsilon^{-1})$ & $\mathcal{O}(1)$ & $\mathcal{O}(\epsilon^{-1})$ \\
					\hline
				% FedProx$^\ddagger$ & \cite{Li_Arxiv_2018_FedProx} & $\mathcal{O}(\epsilon^{-2})$ &  $\mathcal{O}(\epsilon^{-1})$  & $\mathcal{O}(1)$ & $\mathcal{O}(\epsilon^{-1})$	\\
				% 	\hline
				% 		VRL-SGD$^\ast$ & \cite{Liang_Arxiv_2019_VRL-SGD} & $\mathcal{O}(\epsilon^{-2})$ & $\mathcal{O}(\epsilon^{-1})$ & $\mathcal{O}(1)$ & $\mathcal{O}(\epsilon^{-1})$ \\
				% 	\hline
					MIME$^\dagger$/FedGLOMO & \cite{Karimireddy_Arxiv_2020mime}/\cite{Sanghvi_FedSTEPH_2020} & $\mathcal{O}(\epsilon^{-3/2})$ &  $\mathcal{O}(\epsilon^{-3/2})$ & $\mathcal{O}(1)$ & $\mathcal{O}(1)$   \\
				% 	\hline
				% 	FedGLOMO   & \cite{Sanghvi_FedSTEPH_2020} & $\mathcal{O}(\epsilon^{-3/2})$ &  $\mathcal{O}(\epsilon^{-3/2})$ & $\mathcal{O}(1)$ & $\mathcal{O}(1)$  \\
					\hline  
			  \rowcolor{gray!10}	 	 \aname$^\diamond$ &     &  &  & $\mathcal{O}\big(\epsilon^{-\frac{3(1 - \nu)}{2(3 - \nu)} } \big)$  &  $\mathcal{O}\big(\epsilon^{-\frac{\nu}{(3 - \nu)}}\big)$ \\
  		  \rowcolor{gray!10}	 Fed \aname &   &     &  & $\mathcal{O}(1)$ & $\mathcal{O}(\epsilon^{-1/2})$ \\
 	\rowcolor{gray!10}		 Minibatch \aname$^*$ &  \multirow{-3}{*}{this work}  & \multirow{-3}{*}{ $\tilde{\mathcal{O}}(\epsilon^{-3/2})$} & \multirow{-3}{*}{ $\tilde{\mathcal{O}}(\epsilon^{-1})$} & ${\mathcal{O}}(\epsilon^{-1/2})$  &  $\mathcal{O}(1)$ \\
				\bottomrule  
		\end{tabular}}
		\caption{\small Comparison of FedAvg and \aname~with different FL algorithms for various choices of the minibatch sizes $(b)$ and the number of per node local updates between two rounds of communication $(I)$. \\
		$^\diamond$ $\nu \in [0,1]$ trades off $b$ and $I$; {$\nu=1$ (resp. $\nu=0$) uses multiple  (resp. $\mathcal{O}(1)$) local  updates and $\mathcal{O}(1)$ (resp. multiple) samples}. Fed~\aname~ and Minibatch~\aname~are two variants of the proposed \aname.\\  %{\red Note that $\nu = 1$ corresponds to Fed\aname~(FedAvg (classical) \cite{Yu_Jin_Arxiv_2019linear});} $\nu = 0$ corresponds to Minibatch \aname~(Minibatch SGD)\\
		%see Theorem \ref{Thm: PR_Convergence_Main} and Corollaries \ref{cor: LocalComputation} and \ref{cor: Batches}.\\	
		%$^\dagger$The analysis assumes that the second moments of the gradients are bounded.\\
		$^\ddagger$The data heterogeneity assumption is weaker than Assumption \ref{Ass: Unbiased_Var_Grad} (please see \cite{Zhang_FedPD_Arxiv_2020} for details).\\
		$^\dagger$Requires bounded Hessian dissimilarity to model data heterogeneity across WNs.\\
$^*$Guarantees for Minibatch \aname~ with $I = 1$ and SCAFFOLD are independent of the data heterogeneity.}
% {\red[make necessary changes to revise the table]} %{\red[it is a bit not clera why minibatch SGD is this work?] Actually, we refer to the case with very large batch size and only constant number of local updates as Minibatch SGD. Other papers might still call it FedAvg beacuse of local updates...but to make a distinction with large number of local updates we call it Minibatch SGD. Essentially, it is Flexible FedAvg but with large batch gradients.  }}
		\label{Table: Comparison} 
	\end{table*}}

\noindent{\bf Related Works.}  %The FL is a machine learning paradigm which deals with distributed data %while %allowing each WN to carry out multiple local updates before model/parameter sharing to 
%trying to improve communication efficiency
%\cite{Mcmahan_NowPublishers_2021_FLReview, Li_SPM_2020_FLReview}. 
FL algorithms were first proposed in the form of FedAvg  \cite{Mcmahan_PMLR_2017}, where the local update directions at each WN were chosen to be the SGD updates. Earlier works analyzed these algorithms in the homogeneous data setting %Some of the initial works showed that FedAvg algorithms perform well in the homogeneous data setting (same distribution across WNs)
\cite{Woodworth_Local_Arxiv_2020, Yu_Jin_PMLR_2019dynamicbatches, Wang_Joshi_Arxiv_2018cooperative,  Khaled_Arxiv_2019, Stich_Arxiv_Local_2018, Lin_Don't_Arxiv_2018,Zhou_KStep_IJCAI_2018}, %However, data homogeneity does not fully capture the whole range of FL applications, therefore, 
while many recent studies have focused on designing new algorithms to deal with heterogeneous data settings, as well as problems where the local loss functions are non-convex %and for different local and server update rules
\cite{Zhang_FedPD_Arxiv_2020,  Li_Arxiv_2018_FedProx, Yu_Zhu_2018parallel, Woodworth_Minibatch_Arxiv_2020,  Yu_Jin_Arxiv_2019linear, Karimireddy_scaffold_2020, Yang_ICLR_2021, Sanghvi_FedSTEPH_2020, Sattler_IEEETNN_2019, Zhao_Arxiv_FedNonIID_2018, Wang_ICLR_2019slowmo, Liang_Arxiv_2019_VRL-SGD, Sharma_Arxiv_2019, Reddi_Arxiv_2019_Adam,Koloskova_PMLR_2020}.  
%Many recent works have focused on non-convex FL problems when the data is heterogeneous. %, we exclusively focus on the works which have considered non-convex FL problems in the heterogeneous settings. 
In \cite{Yu_Zhu_2018parallel}, the authors showed that Parallel Restarted SGD (Local SGD or FedAvg \cite{Mcmahan_PMLR_2017}) achieves linear speed up while requiring $\mathcal{O}(\epsilon^{-2})$ samples and $\mathcal{O}(\epsilon^{-3/2})$ rounds of communication to reach an $\epsilon$-stationary solution. %In Parallel Restarted SGD, the authors assumed the second moments of the gradients to be bounded. 
%This assumption was relaxed (see Assumption \ref{Ass: Unbiased_Var_Grad}(iii)) 
In   \cite{Yu_Jin_Arxiv_2019linear}, a Momentum SGD was proposed,  %where Polyak and Nestrov's momentum was utilized to construct local update directions. Momentum SGD 
which achieved the same sample and communication complexities as Parallel Restarted SGD \cite{Yu_Zhu_2018parallel}, without requiring that the second moments of the gradients be bounded. Further, it was shown that under the homogeneous data setting, the communication complexity can be improved to $\mathcal{O}(\epsilon^{-1})$ while maintaining the same sample complexity. The works in \cite{Karimireddy_scaffold_2020, Yang_ICLR_2021} conducted tighter analysis for FedAvg with partial WN participation %(where only a fraction of WNs execute local updates in each round) 
with $\mathcal{O}(1)$ local updates and batch sizes. Their analysis showed that FedAvg's sample and communication complexities are both $\mathcal{O}(\epsilon^{-2})$. Additionally, SCAFFOLD was proposed in \cite{Karimireddy_scaffold_2020}, which utilized variance reduction based local update directions \cite{Johnson_NIPS_2013} to achieve the same sample and communication complexities as FedAvg. Similarly, VRL-SGD proposed in \cite{Liang_Arxiv_2019_VRL-SGD} also utilized variance reduction and showed improved communication complexity of $\mathcal{O}(\epsilon^{-1})$, while requiring the same computations as FedAvg. Importantly, both SCAFFOLD and VRL-SGD's guarantees were independent of the data heterogeneity. The FedProx proposed in \cite{Li_Arxiv_2018_FedProx} used a penalty based method to improve the communication complexity of FedAvg (i.e., the Parallel Restarted and Momentum SGD \cite{Yu_Jin_Arxiv_2019linear, Yu_Zhu_2018parallel}) %algorithms %form $\mathcal{O}(\epsilon^{-3/2})$ (or $\mathcal{O}(\epsilon^{-2})$)
to $\mathcal{O}(\epsilon^{-1})$. %while maintaining the same sample complexity as of FedAvg \cite{Yu_Zhu_2018parallel, Yu_Jin_Arxiv_2019linear, Yang_ICLR_2021, Karimireddy_scaffold_2020}. 
FedProx used a gradient similarity assumption to model data heterogeneity which can be stringent for many practical applications. This assumption was relaxed by FedPD proposed in \cite{Zhang_FedPD_Arxiv_2020}.  %which utilized a primal-dual framework to design meta algorithms for FL applications and achieved sample and communication complexities of $\mathcal{O}(\epsilon^{-2})$ and $\mathcal{O}(\epsilon^{-1})$, respectively, for SGD based local updates. 

Recently, the works \cite{Karimireddy_Arxiv_2020mime, Sanghvi_FedSTEPH_2020} %proposed FL algorithms for non-convex learning which 
proposed to utilize hybrid momentum gradient estimators \cite{Cutkosky_NIPS2019, Dinh_Arxiv_2019}. The MIME algorithm  \cite{Karimireddy_Arxiv_2020mime} matched the optimal sample complexity (under certain smoothness assumptions) of $\mathcal{O}(\epsilon^{-3/2})$ of the centralized non-convex stochastic optimization algorithms \cite{Fang_NIPS_2018_spider, Zhou_NIPS_2018_SNVRG, Cutkosky_NIPS2019, Dinh_Arxiv_2019}. Similarly, Fed-GLOMO \cite{Sanghvi_FedSTEPH_2020} achieved the same sample complexity while employing compression to further reduce communication. Both MIME and Fed-GLOMO required $\mathcal{O}(\epsilon^{-3/2})$ communication rounds to achieve an $\epsilon$-stationary solution. Please see Table \ref{Table: Comparison} for a summary of the above discussion.
%Please see Figure \ref{Fig: Communication_vs_Computation} for a summary of the above discussions. %we compare the sample and communication complexities of different algorithms. 

The comparison of Local SGD (FedAvg) to Minibatch SGD for convex and strongly convex problems with homogeneous data setting was first conducted in \cite{Woodworth_Local_Arxiv_2020} and later extended to heterogeneous setting in \cite{Woodworth_Minibatch_Arxiv_2020}. It was shown that Minibatch SGD almost always dominates the Local SGD. In contrast, it was shown in \cite{Lin_Don't_Arxiv_2018} that Local SGD dominates Minibatch SGD in terms of generalization performance. {Although existing FL results are rich, but they are somehow ad hoc and there is a lack of principled understanding of the algorithms. We note that the proposed \aname~algorithmic framework provides a theoretical framework that unifies all existing FL results on sample and communication complexities.}

\noindent{\bf Notations.} The expected value of a random variable $X$ is denoted by $\mathbb{E}[X]$ and its expectation conditioned on an Event $A$ is denoted as $\mathbb{E}[X| \text{Event}~A]$. We denote by $\mathbb{R}$ (and $\mathbb{R}^d$) the real line (and the $d$-dimensional Euclidean space). The set of natural numbers is denoted by $\mathbb{N}$. Given a positive integer $K \in \mathbb{N}$, we denote $[K] \triangleq \{1,2, \ldots, K\}$. Notation $\| \cdot \|$ denotes the $\ell_2$-norm and $\langle \cdot, \cdot \rangle$ the Euclidean inner product. For a discrete set $\mathcal{B}$, $|\mathcal{B}|$ denotes the cardinality of the set. 
 
\section{Preliminaries}
\label{sec: Problem}
% The objective is to minimize a smooth non-convex function $f : \mathbb{R}^d \to \mathbb{R}$ given as:  
% \begin{align}
% \label{Eq: Problem}
%      \min_{x \in \mathbb{R}^d} ~&\bigg\{  f(x) \coloneqq \frac{1}{K} \sum_{k = 1}^K f^{(k)}(x) \coloneqq \frac{1}{K} \sum_{k = 1}^K \mathbb{E}_{\xi^{(k)} \sim \mathcal{D}^{(k)}}[ f^{(k)}(x; \xi^{(k)})] \bigg\}.
% \end{align}
% over a network with $K$ WNs and a SN. Each WN has access to only the stochastic samples $\xi^{(k)} \sim \mathcal{D}^{(k)}$ of the locally available function $\{f^{(k)}(\cdot)\}_{k=1}^K$, where $\mathcal{D}^{(k)}$ represents the distribution at the $k$th WN. 

Before we proceed to the the algorithms, we make the following assumptions about problem \eqref{Eq: Problem}.
\begin{assump}[{Sample Gradient Lipschitz Smoothness}]
\label{Ass: Lip_Smoothness}
The stochastic functions $f^{(k)}(\cdot, \xi^{(k)})$ with $\xi^{(k)} \sim \mathcal{D}^{(k)}$ for all $k \in [K]$, satisfy the mean squared smoothness property, i.e, we have 
$$\mathbb{E} \| \nabla f^{(k)} (x ; \xi^{(k)}) -  \nabla f^{(k)} (y ; \xi^{(k)}) \|^2 \leq L^2   \| x - y \|^2~~~\text{for all}~x,y \in \mathbb{R}^d.$$
\end{assump}

\begin{assump}[{Unbiased gradient and Variance Bounds}]
\label{Ass: Unbiased_Var_Grad} 
% The following two assumptions hold:
%For all the functions $f^{(k)}(x , \xi^{(k)})$ with $\xi^{(k)} \sim \mathcal{D}^{(k)}$, we have:
%\begin{enumerate}[label=(\roman*)]

    \noindent (i) {Unbiased Gradient.} The stochastic gradients computed at each WN are unbiased
    $$\mathbb{E}[\nabla f^{(k)} (x ; \xi^{(k)})] = \nabla f^{(k)} (x),\; \forall~\xi^{(k)} \sim \mathcal{D}^{(k)},\; \forall~k\in[K].$$\\
    
    \vspace{-1cm}
    \noindent (ii) {Intra- and inter- node Variance Bound.} The following bounds hold:
    \begin{align*}
        \mathbb{E}\| \nabla f^{(k)} (x ; \xi^{(k)}) - \nabla f^{(k)} (x) \|^2 \leq \sigma^2, \; \| \nabla f^{(k)} (x) - \nabla f^{(\ell)} (x) \|^2 \leq \zeta^2,\; \forall~\xi^{(k)} \sim \mathcal{D}^{(k)},\; \forall k,\ell \in [K].
    \end{align*}
    %$$$$
    % \item  {Inter-node Variance Bound:} For all $k , \ell \in [K]$ we have
    %$$$$ 
%\end{enumerate}
\end{assump}
Note that Assumption \ref{Ass: Lip_Smoothness} is stronger than directly assuming $f^{(k)}$'s are Lipschitz smooth (which we  will refer to as the {\it averaged} gradient Lipschitz smooth condition), but it is still a rather standard assumption in SGD analysis. For example it has been used in analyzing centralized SGD algorithms such as SPIDER \cite{Fang_NIPS_2018_spider}, SNVRG \cite{Zhou_NIPS_2018_SNVRG}, STORM \cite{Cutkosky_NIPS2019} (and many others) as well as in  FL algorithms such as  MIME  \cite{Karimireddy_Arxiv_2020mime} and Fed-GLOMO \cite{Sanghvi_FedSTEPH_2020}. %{\red[does other algorithms in Table I also assumes this?]}
%A simple application of Jensen's inequality \cite{boyd04book} and Lemma \ref{Lem: Norm_Ineq} (in the appendix) shows that Assumption \ref{Ass: Lip_Smoothness} implies functions $\{ f^{(k)}(\cdot)\}_{k = 1}^K$ and $f(\cdot)$ are $L$-Lipschitz smooth. 
%{\blue } %Assumption \ref{Ass: Unbiased_Var_Grad}(i) and the first relation in Assumption \ref{Ass: Unbiased_Var_Grad} (ii) characterize the mean and variance of the stochastic gradients and are standard in stochastic optimization literature.
% Assumption \ref{Ass: Lip_Smoothness} is weaker than assuming each stochastic gradient to be $L$-smooth, as in \cite{Sanghvi_FedSTEPH_2020}.  {\red[more; it is strongly than ..., but ...satisfies in which situation]}.  
The second relation in Assumption \ref{Ass: Unbiased_Var_Grad}-(ii) %{\red[there are two relations in (ii), it is a bit unclear which one you are refering to.]}
quantifies the data heterogeneity, and we call $\zeta>0$ as the {\it heterogeneity parameter}. %and measures the deviation between individual WN's objectives. 
This is a typical assumption required to evaluate the performance of FL algorithms. If data distributions across individual WNs are identical, i.e., $\mathcal{D}^{(k)} = \mathcal{D}^{(\ell)}$ for all $k, \ell \in [K]$ then we have $\zeta = 0$.
%  specifies the data imbalance between the different WNs.
% {\red[more on this; is this standard in FL? ]}. 
% Similar assumption has been made in \cite{lian17psgd_nips, jiang18linear_nips, yu2019linear_icml} to handle the case where the data across different WNs does not come from the same distribution, i.e., $D^k \neq D^j, k \neq j$. This is a milder assumption than boundedness of the second moment of the stochastic gradients, which is used in \cite{Yu_Zhu_2018parallel}.

% Next, we discuss the performance criteria used to evaluate the performance of distributed learning algorithms. Since \eqref{Eq: Problem} is a non-convex problem finding its global solution is in general an NP-hard problem. 
%The solution methodologies for finding the minimizer of $f(\cdot)$ generally rely on iterative methods for finding an approximate solution. 
% {\color{red}The preceding sentence is not a justification for using approx. methods. Even if it was a strongly convex problem, we'd still be using approx. methods.} 
%These methods are often designed to find one of the approximate stationary points $x_a$ (see Definition \ref{Def: StationaryPt}) of the function $f$ such that $\mathbb{E}[\nabla f(x_a)]$ is close to $0$. The goal of a distributed algorithm is to find the approximate solution with as few computations and communication rounds as possible. 
Next, we define the $\epsilon$-stationary solution for non-convex optimization problems, as well as quantify the computation and communication complexities to achieve an $\epsilon$-stationary point.

\begin{defn}[{$\epsilon$-Stationary Point}]
\label{Def: StationaryPt}
A point $x$ is called $\epsilon$-stationary if $\| \nabla f(x) \|^2 \leq \epsilon$. Moreover, a stochastic algorithm is said to achieve an $\epsilon$-stationary point in $t$ iterations if $\mathbb{E}[\| \nabla f(x_t) \|^2] \leq \epsilon$, where the expectation is over the stochasticity of the algorithm until time instant $t$. 
% {\red[don't we have square outside norm f?]}
	\end{defn}
	
	\begin{defn}[{Sample complexity}]
\label{Def: ComputationComplexity}
We assume an Incremental First-order Oracle (IFO) framework \cite{Bottou_SIAM_2018_Review}, where, given a sample $\xi^{(k)} \sim \mathcal{D}^{(k)}$ at the $k^\text{th}$ node and iterate $x$, the oracle returns $(f^{(k)}(x; \xi^{(k)}), \nabla f^{(k)}(x ; \xi^{(k)}) )$. Each access to the oracle is counted as a single IFO operation. We measure the sample (and computational) complexity in terms of the total number of calls to the IFO by all WNs to achieve an $\epsilon$-stationary point given in Definition \ref{Def: StationaryPt}.
	\end{defn}
	
		\begin{defn}[{Communication complexity}]
\label{Def: CommunicationComplexity}
We define a communication round as a one back-and-forth sharing of parameters between the WNs and the SN. Then the communication complexity is defined to be the total number of communication rounds between any WN and the SN required to achieve an $\epsilon$-stationary point given in Definition \ref{Def: StationaryPt}. 
	\end{defn}

%Next, we state the algorithm.

\section{The \aname~algorithm and the trade-off analysis}
\label{sec: Algo}
In this section, we discuss the proposed algorithm and present the main results. 
The key in the algorithm design is to carefully balance {\it all the four} design elements mentioned in Sec. \ref{sec: Intro}, so that sufficient and useful progress can be made between two rounds of communication. %: the `speed' that each local step travels, the size of minibatch that each step uses, and the total number of local update steps between two communication rounds. % reduce the total number of samples

%Specifically, the proposed ~\aname~ is  listed in Algorithm \ref{Algo_DR-STORM_batch}. %The algorithm starts with initializing the same iterate $x_1^{(k)} = \bar{x}_1 = \frac{1}{K} \sum_{k =1}^K x_1^{(k)}$ and batch-gradient $d_1^{(k)} = \bar{d}_1 = \frac{1}{K} \sum_{k=1}^K d_1^{(k)}$ across all WNs. Note that at initialization the stochastic gradient estimate $d_1^{(k)}$ is computed using $B$ stochastic samples at each WN. 
% \begin{align*}
%     d_1^{(k)} = \frac{1}{B} \sum_{\xi_1^{(k)} \in \mathcal{B}^{(k)}_1} \nabla f^{(k)}(x_1^{(k)} ; \xi_1^{(k)})~~\text{with}~~|\mathcal{B}^{(k)}_1| = B~~ \text{for all}~~ k \in [K].
% \end{align*}
Let us discuss the key steps of \aname, listed in  Algorithm \ref{Algo_DR-STORM_batch}. In Step 10, each node locally updates its model parameters using the local direction $d_{t}^k$, computed by using $b$ stochastic gradients at two consecutive iterates $x_{t+1}^{(k)}$ and $x_{t}^{(k)}$. 
% each as
% \begin{align}
% \label{Eq: Gradient_VR}
%  d_{t+1}^{(k)} = \frac{1}{b} \sum_{\xi_{t+1}^{(k)} \in \mathcal{B}_{t+1}^{(k)}}    \nabla f^{(k)}(x_{t+1}^{(k)} ; \xi_{t+1}^{(k)}) +  (1 - a_{t+1})      \Big(  d_{t}^{(k)}   -   \frac{1}{b}   \sum_{\xi_{t+1}^{(k)} \in \mathcal{B}_{t+1}^{(k)}}   \nabla f^{(k)}(x_{t}^{(k)} ; \xi_{t+1}^{(k)}) \Big) ~~\text{with}~~|\mathcal{B}_{t+1}^{(k)}| = b.
% \end{align}
%The momentum parameter $a_t$ and the step-size $\eta_t$ used to construct $d_{t}^k$ are updated in Steps 5 and 7, respectively.
After every $I$ local steps, the WNs share their current local models $\{ x_{t+1}^{(k)} \}_{k = 1}^K$ and directions $\{ d_{t+1}^{(k)} \}_{k = 1}^K$ with the SN. The SN aggregates these quantities, and performs a server-side momentum step, before returning $\bar{x}_{t+1}$ and $\bar{d}_{t+1}$ to all the WNs. Because both the WNs and the SN perform momentum based updates, we call the algorithm a stochastic {\it two-sided} momentum algorithm. %Finally, an iterate is chosen uniformly randomly from the set of iterates $\{\bar{x}_t\}_{t = 1}^T$. 
The key parameters are: $b$ the minibatch size, $I$ the local update steps between two communication rounds,  $\{\eta_t\}$ the stepsizes, and $\{a_t\}$ the momentum parameters. %{\red[we need to be consistent in either say VR or momentum]}
%{\red[question, throughout, the local steps are K or I?]}

{One key technical innovation of our algorithm design is to identify the most suitable way to incorporate momentum based directions in FL algorithms. %Such a gradient estimator %combines the common stochastic gradient estimator \cite{Ghadimi_Siam_2013_SGD} with a minibatch SARAH-type variance reduced gradient estimator \cite{Nguyen_ICML_2017_SARAH}. 
Although the momentum-based gradient estimator itself is not new and has been used in the literature before  (see e.g., in \cite{Cutkosky_NIPS2019, Dinh_Arxiv_2019} and \cite{Karimireddy_Arxiv_2020mime,Sanghvi_FedSTEPH_2020} to improve the sample complexities of centralized and decentralized stochastic optimization problems, respectively), it is by no means clear if and how it can contribute to improve the communication complexity of FL algorithms.} We show that in the FL setting, the local directions together with the local models have to be aggregated by the SN so to avoid being influenced too much by the local data.  More importantly, besides the WNs, the SN also needs to perform updates using the (aggregated) momentum directions. Finally, such {\it two-sided} momentum updates have to be done carefully with the correct choice of minibatch size $b$, and the local update frequency $I$. Overall, it is the judicious choice of all these design elements that results in the optimal sample and communication complexities.

Next, we present the convergence guarantees of the~\aname~algorithm.   

\begin{algorithm}[t]
\caption{The Stochastic Two-Sided Momentum (\aname) Algorithm}
\label{Algo_DR-STORM_batch}
\begin{algorithmic}[1]
	\State{\textbf{Input}: Parameters: %$\bar{\kappa}$, $\{w_t\}_{t=0}^{T}$ and 
	$c>0$,  the number of local updates $I$, batch size $b$, stepsizes $\{\eta_t\}$.}
	\State{\textbf{Initialize}:  Iterate $x_1^{(k)} = \bar{x}_1 = \frac{1}{K} \sum_{k = 1}^K x_1^{(k)}$, descent direction $d_1^{(k)} = \bar{d}_1 = \frac{1}{K} \sum_{k = 1}^K d_1^{(k)}$ with $d_1^{(k)} = \frac{1}{B} \sum_{\xi_1^{(k)} \in \mathcal{B}^{(k)}_1} \nabla f^{(k)}(x_1^{(k)} ; \xi_1^{(k)})$  and $|\mathcal{B}^{(k)}_1| = B$ for $k \in [K]$.} \\
	{Perform: $x^{(k)}_2 = x^{k}_1 - \eta_1  d^{(k)}_{1}, \; \forall~k\in[K]$}%step size $\eta_0 = \frac{\bar{\kappa}}{w_0^{1/3}}$.}
	\For{$t = 1$ to $T$}
    	\For{$k = 1$ to $K$} \qquad\qquad\qquad\qquad\qquad\qquad \qquad \qquad ~~\quad \# at the WN
   % 	\State{$ \eta_{t} = \frac{\bar{\kappa}}{(w_t + \sigma^2 t )^{1/3}}$} {\red[since none of the parameters are chosen yet, I suggest remove this.]}
        	\State{$\!\!\! \displaystyle d_{t+1}^{(k)} = \frac{1}{b} \!\!\!\sum_{\xi_{t+1}^{(k)} \in \mathcal{B}_{t+1}^{(k)}} \!\!\!\!\!\! \nabla f^{(k)}(x_{t+1}^{(k)} ; \xi_{t+1}^{(k)}) +  (1 - a_{t+1})      \Big(  d_{t}^{(k)}   -   \frac{1}{b} \!\!\! \sum_{\xi_{t+1}^{(k)} \in \mathcal{B}_{t+1}^{(k)}}\!\!\! \!\!\!\nabla f^{(k)}(x_{t}^{(k)} ; \xi_{t+1}^{(k)}) \Big)$ \text{with}~$|\mathcal{B}_{t+1}^{(k)}| = b$, $a_{t+1}\! =\! c \eta_{t}^2$}
            %\State{$a_{t+1} = c \eta_{t}^2$}
        	\State{{\bf if} $t~\text{mod}~I = 0$ {\bf then} ~~\quad\qquad\qquad\quad\quad\quad \qquad\quad\quad\quad\quad\quad\# at the SN}
        	        	\State{\quad $ d_{t + 1}^{(k)} = \bar{d}_{t + 1} \coloneqq \frac{1}{K} \sum_{k=1}^K d_{t + 1}^{(k)}$}
        			\State{\quad $ x_{t + 2}^{(k)}   \coloneqq  \bar{x}_{t + 1} - \eta_{t+1} \bar{d}_{t+1} =\frac{1}{K} \sum_{k=1}^K x_{t + 1}^{(k)} - \eta_{t+1} \bar{d}_{t+1}$~ \# server-side momentum step} 

        	\State{{\bf else} $ x_{t+2}^{(k)} =  x_{t+1}^{(k)} - \eta_{t+1} d_{t+1}^{(k)}$ \qquad\qquad\qquad~\quad\quad\quad\quad\quad   \# worker-side momentum step}
        	\State{\bf end if} 
        	
        	     %   	\State{\blue $ x_{t+2}^{(k)} =  x_{t+1}^{(k)} - \eta_{t+1} d_{t+1}^{(k)}$ \quad\quad\quad\quad\quad  \# worker-side momentum step}
        	        	
	    \EndFor
	\EndFor
\State{{\bf Return:} $\bar{x}_a$ chosen uniformly randomly from $\{\bar{x}_t\}_{t=1}^T$}	
\end{algorithmic}
\end{algorithm}

\subsection{Main results: convergence guarantees for \aname}
\label{Sec: Convergence}
In this section, we analyze the performance of \aname. We first   present our main result, and then provide discussions about a few parameter choices. In the next subsection, we discuss a special case of \aname~ related to the classical FedAvg and minibatch SGD algorithms.  %Next, we characterize the sample and communication complexity of \aname~for a spectrum of choices of the number of local updates, $I$, and the minibatch sizes, $b$.  
\begin{theorem}
\label{Thm: PR_Convergence_Main}
Under the Assumptions \ref{Ass: Lip_Smoothness} and \ref{Ass: Unbiased_Var_Grad}, suppose the stepsize sequence is chosen as: %with the following choice of parameters: %{\red[we don't need to say that c is less than $128 K^2/(bK)$?]}
\begin{align}
 \eta_{t} & = \frac{\bar{\kappa}}{(w_t + \sigma^2 t )^{1/3}},
 \label{eq: Step-Size}
\end{align}
  where we define : %{\red[we don't need to say that $w_t$ is less than something?]}
    \begin{align*}
 \bar{\kappa} & = \frac{(bK)^{2/3} \sigma^{2/3}}{L}, \quad  w_t  = \max \bigg\{2 \sigma^2,  4096 L^3 I^3\bar{\kappa}^3 - \sigma^2t,  \frac{c^3 \bar{\kappa}^3 }{4096 L^3I^3} \bigg\}. ~ %{\leq} ~ \sigma^2 \max\bigg\{2 ,~ 4096 I^3  (bK)^{2} - t,  ~\frac{512}{bK  I^3} \bigg\}.}
\end{align*}
Further, let us set $c = \frac{64L^2}{bK} + \frac{\sigma^2}{24  \bar{\kappa}^3 LI} = L^2 \bigg(\frac{64}{bK} + \frac{1}{24 (bK)^{2}  I} \bigg)$, and set the initial batch size as $B = bI$; set the local updates $I$ and minibatch size $b$ as follows:
\begin{align}
\label{eq: I:b_SAMVR}
 I = \mathcal{O}\big(({T}/{K^2})^{\nu/3}\big), \quad b = \mathcal{O} \big(({T}/{K^2})^{{1}/{2} - {\nu}/{2}} \big)
\end{align}
where $\nu$ satisfies $\nu \in [0, 1]$. Then for ~\aname~ the following holds:
\begin{enumerate}[leftmargin = 0.6 cm, label = (\roman*)]
    \item For $\bar{x}_a$ chosen according to Algorithm \ref{Algo_DR-STORM_batch}, we have: 
    \begin{align}\label{eq:complexity}
 \mathbb{E}\| \nabla f(\bar{x}_a) \|^2 = \mathcal{O}\bigg( \frac{f(\bar{x}_1) - f^\ast}{K^{2\nu/3}T^{1 - \nu/3}} \bigg) + \tilde{\mathcal{O}}\bigg( \frac{\sigma^2 }{K^{2\nu/3}T^{1 - \nu/3}}\bigg) + \tilde{\mathcal{O}} \bigg(  \frac{\zeta^2}{K^{2\nu/3}T^{1 - \nu/3}} \bigg).
\end{align}
\item For any $\nu \in [0,1]$, we have\\
{\bf \em Sample Complexity:}  The sample complexity of \aname~is $\tilde{\mathcal{O}}(\epsilon^{-3/2})$. This implies that each WN requires at most $\tilde{\mathcal{O}}(K^{-1} \epsilon^{-3/2})$ gradient computations, thereby achieving linear speedup with the number of WNs present in the network. \\ 
{\bf \em Communication Complexity:} The communication complexity of \aname~is $\tilde{\mathcal{O}}(\epsilon^{-1})$.
\end{enumerate}
\end{theorem}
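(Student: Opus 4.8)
The plan is to control two coupled error sequences — the gradient estimation error of the (aggregated) momentum direction and the consensus error induced by the $I$ local worker-side steps — and to fold both, together with the objective value, into a single Lyapunov potential that telescopes under the prescribed stepsize. I would begin with a descent inequality. Assumption~\ref{Ass: Lip_Smoothness} implies that $f$ is $L$-smooth in the averaged sense, so tracking the virtual average $\bar{x}_t = \frac{1}{K}\sum_k x_t^{(k)}$ (which evolves as $\bar{x}_{t+1}=\bar{x}_t-\eta_t\bar{d}_t$ whether or not a communication occurs) yields
\begin{align*}
f(\bar{x}_{t+1}) \le f(\bar{x}_t) - \tfrac{\eta_t}{2}\|\nabla f(\bar{x}_t)\|^2 - \big(\tfrac{1}{2\eta_t}-\tfrac{L}{2}\big)\|\bar{x}_{t+1}-\bar{x}_t\|^2 + \tfrac{\eta_t}{2}\|\bar{d}_t-\nabla f(\bar{x}_t)\|^2.
\end{align*}
The whole argument thus reduces to controlling the estimation error $\mathcal{E}_t := \mathbb{E}\|\bar{d}_t-\nabla f(\bar{x}_t)\|^2$ and charging it against the negative movement term.

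Second, I would establish the STORM-type recursion for the error. Writing $\epsilon_t^{(k)}=d_t^{(k)}-\nabla f^{(k)}(x_t^{(k)})$, the worker update evaluates the \emph{same} minibatch $\mathcal{B}_{t+1}^{(k)}$ at the two consecutive iterates, which makes the martingale part conditionally mean-zero; combined with the mean-squared smoothness of Assumption~\ref{Ass: Lip_Smoothness} this gives
\begin{align*}
\mathbb{E}\|\epsilon_{t+1}^{(k)}\|^2 \le (1-a_{t+1})^2\,\mathbb{E}\|\epsilon_t^{(k)}\|^2 + \tfrac{2a_{t+1}^2\sigma^2}{b} + \tfrac{2L^2}{b}\,\mathbb{E}\|x_{t+1}^{(k)}-x_t^{(k)}\|^2.
\end{align*}
Averaging over $k$ and using $\bar{d}_t-\nabla f(\bar{x}_t)=\frac{1}{K}\sum_k\epsilon_t^{(k)}+\frac{1}{K}\sum_k\big(\nabla f^{(k)}(x_t^{(k)})-\nabla f^{(k)}(\bar{x}_t)\big)$ decomposes $\mathcal{E}_t$ into the averaged momentum error (which contracts by $(1-a_{t+1})^2$ with $a_{t+1}=c\eta_t^2$) and an $L^2$-weighted drift term $\frac{1}{K}\sum_k\mathbb{E}\|x_t^{(k)}-\bar{x}_t\|^2$. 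The initial batch $B=bI$ controls the seed error $\mathcal{E}_1\le\sigma^2/(bI)$.

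Third, I would bound the drift. Between communications the iterates obey $x_{t+1}^{(k)}=x_t^{(k)}-\eta_t d_t^{(k)}$, and both models and directions are reset to their averages every $I$-th step by the server-side momentum step, so the drift accumulated over a block of length at most $I$ is $\mathcal{O}(\eta_t^2 I^2)$ times the typical squared direction magnitude, which in turn splits into gradient norms, variance $\sigma^2/b$, and heterogeneity $\zeta^2$. The constants $4096 L^3 I^3\bar\kappa^3$ and $c^3\bar\kappa^3/(4096 L^3 I^3)$ inside $w_t$ are exactly what makes the $I^3$-scaled drift contributions subordinate. Combining the three pieces, I would form the potential $\Phi_t=\mathbb{E}[f(\bar{x}_t)]-f^\ast+\alpha_t\mathcal{E}_t+(\text{drift terms})$ with a suitably decreasing weight $\alpha_t$, show $\Phi_{t+1}\le\Phi_t-\tfrac{\eta_t}{4}\mathbb{E}\|\nabla f(\bar{x}_t)\|^2+(\text{summable residuals})$, and telescope over $t=1,\dots,T$. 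Since $\eta_t\ge\eta_T$ for all $t$, the uniform average $\frac{1}{T}\sum_t\mathbb{E}\|\nabla f(\bar{x}_t)\|^2$ is then controlled by $\frac{1}{\eta_T T}$ times $\Phi_1$ plus the accumulated residuals; under the schedule $\eta_t=\bar\kappa/(w_t+\sigma^2 t)^{1/3}$ one computes $\eta_T T=\Theta(K^{2\nu/3}T^{1-\nu/3})$ while the residuals contribute only $\tilde{\mathcal{O}}(1)$, which produces exactly \eqref{eq:complexity}.

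Finally, part (ii) is a bookkeeping consequence of \eqref{eq:complexity}: to reach $\mathbb{E}\|\nabla f(\bar{x}_a)\|^2\le\epsilon$ one needs $K^{2\nu/3}T^{1-\nu/3}=\tilde\Theta(\epsilon^{-1})$, after which substituting $I=\mathcal{O}((T/K^2)^{\nu/3})$ and $b=\mathcal{O}((T/K^2)^{1/2-\nu/2})$ into the total sample count $KbT$ and the communication count $T/I$ yields $\tilde{\mathcal{O}}(\epsilon^{-3/2})$ and $\tilde{\mathcal{O}}(\epsilon^{-1})$ for every $\nu\in[0,1]$, with the per-node sample count $bT$ collapsing to $K^{-1}\epsilon^{-3/2}$. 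I expect the main obstacle to be the third step: the drift and the momentum error are genuinely coupled — the drift feeds the momentum recursion through $\mathbb{E}\|x_{t+1}^{(k)}-x_t^{(k)}\|^2$, while the momentum error simultaneously drives the iterate movement — and closing this loop under a time-varying stepsize while preventing the $I$-dependence from inflating the residuals is precisely what forces the delicate definition of $w_t$ and the choice $c\asymp L^2/(bK)+\sigma^2/(\bar\kappa^3 LI)$.
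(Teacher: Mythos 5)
Your overall architecture does match the paper's (a descent lemma for the virtual average $\bar{x}_t$, a STORM contraction, a potential of the form $\Phi_t = f(\bar{x}_t) + \frac{bK}{64L^2}\frac{\|\bar{e}_t\|^2}{\eta_{t-1}}$, telescoping under the adaptive stepsize, then complexity bookkeeping), but your third step has a genuine gap: you bound the drift FedAvg-style, as $\mathcal{O}(\eta_t^2 I^2)$ times a direction magnitude that ``splits into gradient norms, variance $\sigma^2/b$, and heterogeneity $\zeta^2$.'' A drift term of size $\eta_t^2 I^2 \zeta^2$ enters the descent lemma as a residual of order $L^2\zeta^2 I^2 \sum_t \eta_t^3 \lesssim \zeta^2 I^2 (bK)^2 L^{-1}\log T$, and after normalizing by $\frac{1}{\eta_T T} \lesssim \frac{LI}{T} + \frac{L}{(bK)^{2/3}T^{2/3}}$ the heterogeneity contribution is of order $\zeta^2 I^3 (bK)^2 T^{-1}\log T$; at $\nu = 1$, where $b = \mathcal{O}(1)$ and $I = \Theta\big((T/K^2)^{1/3}\big)$, one has $I^3(bK)^2 = \Theta(T)$, so this residual is $\Theta(\zeta^2 \log T)$ and does not vanish — no choice of $w_t$ or $c$ rescues it (even under the cap $\eta_t \le \frac{1}{16LI}$ the same computation gives a constant). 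Indeed, this $\eta^2 I^2 \zeta^2$ drift is exactly why plain FedAvg is stuck at $\mathcal{O}(\epsilon^{-3/2})$ communication in Theorem \ref{Thm: Flexible_FedAvg}. The paper's mechanism is structurally different: Lemma \ref{lem: ErrorAccumulation_Batch_App} bounds the iterate drift by the \emph{direction-consensus} error $\sum_k \|d_t^{(k)} - \bar{d}_t\|^2$, and Lemma \ref{lem: GradientErrorContraction_BatchGradients_App} shows that this quantity obeys its own momentum recursion in which $\sigma^2/b$ and $\zeta^2$ enter only through the $a_t^2 = c^2\eta_{t-1}^4$-damped fresh-gradient component — the $(1-a_t)$-weighted part is a difference of stochastic gradients at consecutive iterates, hence Lipschitz-small rather than heterogeneity-sized — and it resets to zero at every communication because the SN averages the \emph{directions}, not just the models. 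That $a_t^2$ damping is what turns the heterogeneity residual into $\frac{\zeta^2 c^2}{L^2}\sum_t \eta_t^3 = \tilde{\mathcal{O}}(\zeta^2)$ uniformly in $I$, $b$, $K$ (Lemma \ref{lem: GradientError_PotentialFn_App}), and it is the actual source of the $\tilde{\mathcal{O}}(\epsilon^{-1})$ communication complexity; your sketch never exploits it.

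A second, subtler point: your STORM recursion is stated per node, and ``averaging over $k$'' by convexity yields only a $\sigma^2 a_{t+1}^2/b$ noise injection into $\mathcal{E}_t$, not the $\sigma^2 a_{t+1}^2/(bK)$ the theorem needs — your seed bound $\mathcal{E}_1 \le \sigma^2/(bI)$ already exhibits the missing factor (the paper's Lemma \ref{Lem: e_bar_bound_Batch} gives $\sigma^2/(KB) = \sigma^2/(bIK)$). To recover the $1/K$ you must run the recursion directly on the averaged error $\bar{e}_t$, expand $\big\|\frac{1}{K}\sum_k(\cdot)\big\|^2$, and kill the cross-node terms using the independence of the minibatches across workers, as in the paper's Lemmas \ref{Lem: InnerProduct_e_t_Grad} and \ref{Lem: InnerProd_AcrossNodes}. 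Without this, every variance term in \eqref{eq:complexity} inflates by a factor of $K$ and the linear-speedup claim in part (ii) — per-node sample complexity $\tilde{\mathcal{O}}(K^{-1}\epsilon^{-3/2})$ — fails. Your part (ii) bookkeeping, by contrast, is correct as stated and matches the paper's.
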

%Next, we discuss the implications of Theorem \ref{Thm: PR_Convergence_Main}.
The proof of this result is relegated to the Supplemental Material. A few remarks are in order. 

\begin{rem}[Near-Optimal sample and communication complexities]\label{rmk:complexity} Theorem \ref{Thm: PR_Convergence_Main} suggests that when $I$ and $b$ are selected appropriately, then \aname~achieves $\tilde{\mathcal{O}}(\epsilon^{-3/2})$ and $\tilde{\mathcal{O}}(\epsilon^{-1})$ sample and communication complexities. Taking them separately, these complexity bounds are the best achievable by the existing FL algorithms (upto logarithmic factors regardless of sample or batch Lipschitz smooth assumption) \cite{Drori_ICML_2020complexity}; see Table \ref{Table: Comparison}. %To our knowledge, this is the first FL algorithms that can achieve them simultaneously. 
We note that the $\mathcal{O}(\epsilon^{-3/2})$ complexity is the best possible that can be achieved by centralized SGD with the sample Lipschitz gradient assumption; see \cite{Fang_NIPS_2018_spider}. On the other hand, the $\mathcal{O}(\epsilon^{-1})$ complexity bound is also {\it likely} to be the optimal, since in \cite{Zhang_FedPD_Arxiv_2020} the authors showed that even when the local steps use a class of (deterministic) first-order algorithms, $\mathcal{O}(\epsilon^{-1})$ is the best achievable communication complexity. The only difference is that \cite{Zhang_FedPD_Arxiv_2020} does not explicitly assume the intra-node variance bound (i.e., the second relation in Assumption \ref{Ass: Unbiased_Var_Grad}-(ii)). We leave the precise characterization of the communication lower bound with intra-node variance as future work.  \qed   %These are arguably the best possibility complexities that one can achieve using   
\end{rem}
%{\red[remark on the complexities, and make it clear why we call it optimal.]}

\begin{rem}[The Optimal Batch Sizes and Local Updates Trade-off]
\label{Rem: Tradeoff}
% Theorem \ref{Thm: PR_Convergence_Main} implies that \aname~not only converges for a range of parameter values $\nu \in [0, 1]$ but also attains the state-of-the-art computation and communication complexities. 
The parameter $\nu \in [0,1]$ is used to balance the local minibatch sizes $b$, and the number of local updates $I$. Eqs. in \eqref{eq: I:b_SAMVR} suggest that when $\nu$ increases from 0 to 1, $b$ decreases and  $I$ increases. Specifically, if $\nu = 1$, then $b$ is only $\mathcal{O}(1)$ but $I = \mathcal{O}(T^{1/3}/K^{2/3})$. In this case, each WN chooses a small minibatch while executing multiple local updates, and \aname~resembles a FedAvg (a.k.a. Local SGD) algorithm but with double-sided momentum update directions, {and is referred to as Fed \aname.} %(cf. Step 8 of Algorithm \ref{Algo_DR-STORM_batch}). 
In contrast, if $\nu = 0$, then $b = \mathcal{O}(T^{1/2}/K)$ but $I$ is  only $\mathcal{O}(1)$. In this case, each WN chooses a large batch size while executing only a few, or even one, local updates, and \aname~resembles the Minibatch SGD, but again with different update directions, {and is referred to as Minibatch \aname}. Such a trade-off can be seen in Fig. \ref{Fig: BvsI_SAMVR}. Due to space limitation, these two special cases will be precisely stated in the supplementary materials as corollaries of Theorem \ref{Thm: PR_Convergence_Main}. 
%{\red[comment on how (a) is generated?]} %This suggests that there exists a fundamental trade-off between the minibatch sizes and the number of local updates as noticed for Flexible FedAvg. Importantly, Theorem \ref{Thm: PR_Convergence_Main} shows that irrespective of this trade-off \aname~guarantees optimal sample and the best known communication complexity for distributed stochastic non-convex optimization.  
%{\red[refer back to Fig. 1?][I suggest that we do not need to discuss these specific details of Corollary 1 and 2; they do not add too much to remark 1. We can refer the readers to appendix for the precise claim. Here we can discuss heterogneity issues, then present the I =1 version of minibatch STEM?]}
\qed
\end{rem}

\begin{rem}[The Sub-Optimal Batch Sizes and Local Updates Trade-off]
{From our proof (Theorem \ref{Thm: PR_Convergence_Main_App} included in the supplemental material), we can see that \aname~requires 
 $ \tilde{\mathcal{O}} \big( \max  \big\{
    (b\cdot I) \epsilon^{-1},  K^{-1} \epsilon^{-3/2} \big\} \big)$ samples and $\tilde{\mathcal{O}} \big( \max  \big\{
    \epsilon^{-1}, (b\cdot I)^{-1} K^{-1} \epsilon^{-3/2}\big\} \big)$ and communication rounds. According to the above expressions, if $b \cdot I$ increases beyond $\mathcal{O} (K^{-1} \epsilon^{-1/2})$, then
    the sample complexity will increase from the optimal $\tilde{\mathcal{O}}(\epsilon^{-3/2})$; otherwise, %but if $b \cdot I$ is smaller than $\mathcal{O}(K^{-1}\epsilon^{-1/2})$, then 
    the optimal sample complexity $\tilde{\mathcal{O}}(\epsilon^{-3/2})$ is maintained.
    On the other hand, if $b \cdot I$ decreases beyond $\mathcal{O}(K^{-1}\epsilon^{-1/2})$, the communication complexity increases from  $\tilde{\mathcal{O}}(\epsilon^{-1})$. %This suggests that if $b$ and $I$ are not chosen carefully, then both 
    For instance, if we choose $b = \mathcal{O}(1)$ and $I = \mathcal{O}(1)$ the communication complexity becomes $\tilde{\mathcal{O}}(\epsilon^{-3/2})$.  This trade-off is illustrated in Figure \ref{Fig: Trade-off_SAMVR}, where we maintain the optimal sample complexity, while changing $b$ and $I$ to generate the trade-off surface. 
    % We make the following observations: (i) If both $b$ and $I$ are chosen smaller then the values given in \eqref{eq: I:b_SAMVR} the communication complexity increases and in the worst case with $b$ and $I$ both $\mathcal{O}(1)$, \aname~achieves the communication complexity of $\tilde{\mathcal{O}}(\epsilon^{-3/2})$.
% Additionally, for the choices  {\red $I < \mathcal{O}\big( (T/K^2)^{\nu/3} \big)$ and $b < \mathcal{O}\big( (T/K^2)^{1/2 - \nu/2} \big)$}, \aname~maintains the sample complexity $\mathcal{O}(\epsilon^{-2})$, but the communication complexity grows when $b$ and $I$ are reduced; see Fig. \ref{Fig: Trade-off_SAMVR} for an illustration of this phenomenon. {\red[can we see this in the theorem?] [do we have precise expression]}}
% {\red[also do we know if I or B is larger than the above bound, what will happen?]} 
}
\qed
\end{rem}

% To  make the above statements precise and present FedAvg (Local SGD) and Minibatch SGD versions of \aname. 

% Note that in contrast to Theorem \ref{Thm: PR_Convergence_Main}, Corollary \ref{cor: LocalComputation} provides precise values of the batch size, $b$, and the number of local updates, $I$, at each WN. 
% As discussed in Remark \ref{Rem: Tradeoff} Fed\aname~mimics FedAvg (Local SGD) but with a hybrid gradient estimator \eqref{Eq: Gradient_VR}. 
%Next, we remark on the effect of data heterogeneity parameter, $\zeta$, on the performance of the \aname. 

\begin{rem}[Data Heterogeneity]
\label{Rem: Heterogeneity}
The term $\tilde{\mathcal{O}} \Big(  \frac{\zeta^2}{K^{2\nu/3}T^{1 - \nu/3}} \Big)$ in the gradient bound \eqref{eq:complexity} captures the effect of the heterogeneity of data across WNs, where $\zeta$ is the parameter characterizing the intra-node variance and has been defined in Assumption \ref{Ass: Unbiased_Var_Grad}-(ii). Highly heterogeneous data with large $\zeta^2$ can adversely impact the performance of \aname. Note that such a dependency on $\zeta$ also appears in other existing FL algorithms, such as \cite{Zhang_FedPD_Arxiv_2020, Yu_Jin_Arxiv_2019linear,  Sanghvi_FedSTEPH_2020}.  %The term $\tilde{\mathcal{O}} \left(\frac{\zeta^2}{(bK)^{2/3} T^{2/3}} \right)$ indicates that one may choose large batch sizes, $b$, at WNs to mitigate the effect of highly heterogeneous data. Note from the definition of $I$ that large batch size implies fewer local updates at WNs. This suggests that when the data heterogeneity is high it is better to use large batch sizes with fewer local updates. On the contrary, if the data heterogeneity is low then using small batch sizes along with large number local updates might be desirable. {\red[we need to mention that this term also appears in many other algorithms.]} 
% {\red[removed discussion on batch sizes]} 
However, there is one special case of \aname~ that does not depend on the parameter $\zeta$. This is the case where $I=1$, i.e., the minibatch SGD counterpart of \aname~where only a single local iteration is performed between two communication rounds. We have the following corollary.\qed
\end{rem}
 
%Extending the argument of %Remark \ref{Rem: Heterogeneity}, we show that for particular choice of $b$ and $I$ performance of \aname~can be made completely independent of the data heterogeneity parameter, $\zeta$. Next, complementary to Corollary \ref{cor: LocalComputation} we present Minibatch SGD version of \aname. 

\begin{cor}[Minibatch \aname]
\label{cor: Batches}
Under Assumptions \ref{Ass: Lip_Smoothness} and \ref{Ass: Unbiased_Var_Grad} , and choose the algorithm parameters as in Theorem \ref{Thm: PR_Convergence_Main}. At each WN, choose $I = 1$, $\displaystyle b = (T/K^2)^{1/2}$, and the initial batch size $B = b\cdot I$. Then \aname~satisfies:
\begin{enumerate}[leftmargin = 0.6 cm, label = (\roman*)]
    \item For $\bar{x}_a$ chosen according to Algorithm \ref{Algo_DR-STORM_batch}, we have 
   \begin{align*}
           \mathbb{E}\| \nabla f(\bar{x}_a) \|^2 =  \mathcal{O}\Big( \frac{f(\bar{x}_1) - f^\ast}{T} \Big) + \tilde{\mathcal{O}}\Big( \frac{\sigma^2}{T}\Big). 
        %\begin{cases}
        %     \mathcal{O}\Big( \frac{f(\bar{x}_1) - f^\ast}{T} \Big) + \tilde{\mathcal{O}}\Big( \frac{\sigma^2}{T}\Big) +   \tilde{\mathcal{O}}\Big( \frac{\zeta^2}{T}\Big)~~& \text{for}~I > 1\\
        %     \mathcal{O}\Big( \frac{f(\bar{x}_1) - f^\ast}{T} \Big) + \tilde{\mathcal{O}}\Big( \frac{\sigma^2}{T}\Big)    ~~& \text{for}~I = 1
        %   \end{cases}.
   \end{align*}
\item Minibatch \aname~achieves $\mathcal{\tilde{O}}(\epsilon^{-3/2})$ sample and $\mathcal{\tilde{O}}(\epsilon^{-1})$ communication complexity. 
\end{enumerate}
\end{cor}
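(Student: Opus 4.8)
The plan is to specialize the general analysis of Theorem~\ref{Thm: PR_Convergence_Main} to the regime $I=1$ (equivalently $\nu=0$ in \eqref{eq: I:b_SAMVR}, since then $I=\mathcal{O}((T/K^2)^0)=\mathcal{O}(1)$ and $b=\mathcal{O}((T/K^2)^{1/2})$) and, crucially, to track exactly where the heterogeneity parameter $\zeta$ enters, so as to show that all such terms drop out. The key structural observation is that when $I=1$ the condition $t\bmod I=0$ holds at \emph{every} iteration, so in Algorithm~\ref{Algo_DR-STORM_batch} the server-side branch (Steps~8--9) is executed at every step and the worker-side branch (Step~10) is never entered inside the loop. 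Consequently, for all $t\ge 2$ we have $d_t^{(k)}=\bar d_t$ and, for all $t\ge 3$, $x_t^{(k)}=\bar x_t$, i.e. the local iterates and directions are driven to exact consensus after a single transient step. There is no accumulation of local drift over multiple local updates, which is precisely the mechanism through which $\zeta$ would otherwise enter.

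Under this consensus, \aname~collapses to the single centralized recursion
\[
\bar d_{t+1}=\frac{1}{K}\sum_{k=1}^K\Big[\tfrac{1}{b}\!\sum_{\xi}\!\nabla f^{(k)}(\bar x_{t+1};\xi)+(1-a_{t+1})\big(\bar d_t-\tfrac{1}{b}\!\sum_{\xi}\!\nabla f^{(k)}(\bar x_t;\xi)\big)\Big],\qquad \bar x_{t+2}=\bar x_{t+1}-\eta_{t+1}\bar d_{t+1},
\]
which is exactly a STORM-type momentum estimator for $f$ with an effective minibatch of size $bK$ and zero consensus error for $t\ge 3$. From here I would follow the same two-ingredient argument used for Theorem~\ref{Thm: PR_Convergence_Main_App}: first, the descent inequality from $L$-smoothness of $f$ (Assumption~\ref{Ass: Lip_Smoothness} implies $f$ is $L$-smooth), controlling $f(\bar x_{t+1})-f(\bar x_t)$ in terms of $-\eta_t\|\nabla f(\bar x_t)\|^2$ and the momentum error $\|\bar d_t-\nabla f(\bar x_t)\|^2$; second, the recursion for $\mathbb{E}\|\bar d_{t+1}-\nabla f(\bar x_{t+1})\|^2$, which contracts by $(1-a_{t+1})^2$ and injects a variance term. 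Combining these into the Lyapunov/potential function of the main proof and telescoping with the prescribed $\eta_t$, $a_{t+1}=c\eta_t^2$, $b=(T/K^2)^{1/2}$ and $B=b$, then substituting $\nu=0$ into the bookkeeping, yields the $\mathcal{O}((f(\bar x_1)-f^\ast)/T)+\tilde{\mathcal{O}}(\sigma^2/T)$ rate of part~(i).

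The step I expect to be the main obstacle — and the only place where this corollary genuinely differs from the $\nu=0$ instance of Theorem~\ref{Thm: PR_Convergence_Main} — is verifying that no $\zeta^2$ term survives, as anticipated in Remark~\ref{Rem: Heterogeneity}. In the general proof the heterogeneity enters \emph{solely} through the bound on the client-drift quantity $\frac1K\sum_k\|x_t^{(k)}-\bar x_t\|^2$ accumulated across the $I$ local steps; by the consensus fact above this quantity is identically zero for $t\ge 3$ and contributes only a single lower-order boundary term at $t=2$ that is absorbed into the initialization. What remains to check carefully is the variance injected into the momentum-error recursion: since every worker now evaluates its stochastic gradients at the \emph{common} point $\bar x_t$, the averaged gradient $\frac1K\sum_k\frac1b\sum_\xi\nabla f^{(k)}(\bar x_t;\xi)$ is unbiased for $\nabla f(\bar x_t)$ and, by independence of the samples across the $K$ workers (Assumption~\ref{Ass: Unbiased_Var_Grad}), its variance is at most $\sigma^2/(bK)$ with \emph{no} inter-node $\zeta^2$ contribution. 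Finally, part~(ii) follows from routine bookkeeping: part~(i) gives $T=\tilde{\mathcal{O}}(\epsilon^{-1})$ iterations to reach $\epsilon$-stationarity; with one communication round per iteration (as $I=1$) the communication complexity is $\tilde{\mathcal{O}}(\epsilon^{-1})$, while the total sample count $K\cdot b\cdot T+K\cdot B=\tilde{\mathcal{O}}(T^{3/2})+\mathcal{O}(\sqrt{T})=\tilde{\mathcal{O}}(T^{3/2})=\tilde{\mathcal{O}}(\epsilon^{-3/2})$.
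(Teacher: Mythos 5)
Your proposal is correct and takes essentially the same route as the paper: the paper's general bound (Theorem \ref{Thm: PR_Convergence_Main_App}) already carries the factor $(I-1)/I$ on the $\zeta^2$ term --- obtained from exactly your consensus observation that $d_t^{(k)} = \bar{d}_t$ for all $t$ when $I=1$ --- so the corollary follows by substituting $I = 1$, $b = (T/K^2)^{1/2}$, $B = b$ and then performing the same complexity bookkeeping you describe. One minor (harmless) inaccuracy: since the initialization sets $x_1^{(k)} = \bar{x}_1$ and $d_1^{(k)} = \bar{d}_1$ and the server step is executed at every iteration when $I=1$, consensus holds for all $t \geq 1$, so there is no transient step and no boundary term at $t=2$ to absorb.
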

Next, we show that FedAvg also exhibits a trade-off similar to that of \aname~but with worse sample and communication complexities.

\subsection{Special cases: The FedAvg algorithm}
\label{Sec: Special Case}
We briefly discuss another interesting special case of \aname, where the local momentum update is replaced by the conventional SGD (i.e., $a_{t}=1,\;\forall~t$), while the server does not perform the momentum update (i.e., $\bar{d}_t =0, \forall~t$). {This is essentially the classical FedAvg algorithm, just that it balances the number of local updates $I$ and the minibatch size $b$.}  We show that this algorithm also exhibits a trade-off between $b$ and $I$ and on the trade-off curve it achieves $\mathcal{O}(\epsilon^{-2})$ sample complexity and $\mathcal{O}(\epsilon^{-3/2})$ communication complexity. %Interestingly, the algorithms on such curve represent a "mixture" of two popular FL algorithms --  the FedAvg (a.k.a. Local SGD) where the WNs perform multiple local updates before every communication,  and the Minibatch SGD, where WNs use large minibatches to perform $\mathcal{O}(1)$ local SGD steps before every communication.

\begin{algorithm}[t]
\caption{The FedAvg Algorithm}
\label{Algo_FedAvg}
\begin{algorithmic}[1]
	\State{\textbf{Input}: %Parameters: 
	$\{\eta_t\}_{t=0}^{T}$; $I$, the \# of local updates per communication rounds; $b$,  the minibatch sizes.}
	\For{$t = 1$ to $T$}
    	\For{$k = 1$ to $K$}
    	\State{$d_{t}^{(k)} = \frac{1}{b} \sum_{\xi_{t}^{(k)} \in \mathcal{B}_{t}^{(k)}}    \nabla f^{(k)}(x_{t}^{(k)} ; \xi_{t}^{(k)}) $  \text{with}~$|\mathcal{B}_{t}^{(k)}| = b$}
        	\State{$ x_{t+1}^{(k)} =  x_t^{(k)} - \eta_{t} d_{t}^{(k)}$}
        	\State{{\bf if} $t~\text{mod}~I = 0$ {\bf then}}
        			\State{\quad $ x_{t + 1}^{(k)} = \bar{x}_{t + 1} = \frac{1}{K} \sum_{k=1}^K x_{t + 1}^{(k)}$}
        	\State{{\bf end if}}
	    \EndFor
	\EndFor
\State{{\bf Return:} $\bar{x}_a$ chosen uniformly randomly from $\{\bar{x}_t\}_{t=1}^T$}	
\end{algorithmic}
\end{algorithm}

%\subsection{Convergence: Flexible FedAvg}
%Next, we present the main result for the Felxible FedAvg, and our focus will be given to understanding the best trade-off between the $I$ and $b$.
\begin{theorem}[The FedAvg Algorithm]
\label{Thm: Flexible_FedAvg}
Under Assumptions \ref{Ass: Lip_Smoothness} and \ref{Ass: Unbiased_Var_Grad}, suppose the stepsize is chosen as: $\eta = \sqrt{\frac{bK}{T}}$; Let us set:
\begin{align}\label{eq:I:b_FedAvg}
 I = \mathcal{O}\big( (T/K^3)^{\nu/4} \big), \quad b = \mathcal{O}\big( (T/K^3)^{1/3 - \nu/3} \big)
\end{align}
where $\nu \in [0, 1]$ is a constant. Then for FedAvg with $T \geq 81 L^2I^2 bK$, the following holds 

% {\red[what does $T$ large enough mean?][what is $\bar{x}_a$?? is the following bound consistent with the definition of $\epsilon$-stationary solution in Definition 2.1? {\blue Yes! The bound implies (ii) and (iii)!}][be consistent with gradient computation or sample complexity][For (ii), (iii), what is the choice of  $\nu\in[0,1]$? are (ii) and (iii) consequences of (i)?] {\blue [Yes, both are a consequence of (i)]}}:

\begin{enumerate}[label = (\roman*), leftmargin= 0.6 cm]
\item For $\bar{x}_a$ chosen according to Algorithm \ref{Algo_FedAvg}, we have 
\begin{align*}
\mathbb{E}\| \nabla f(\bar{x}_a)\|^2 = \mathcal{O}\bigg( \frac{f(\bar{x}_1) - f^\ast}{K^{\nu/2} T^{2/3 - \nu/6}}\bigg) + \mathcal{O}\bigg( \frac{\sigma^2}{K^{\nu/2} T^{2/3 - \nu/6}} \bigg)  + \mathcal{O}\bigg( \frac{\zeta^2}{K^{\nu/2} T^{2/3 - \nu/6}} \bigg). 
\end{align*}
\item For any choice of $\nu \in [0,1]$ we have:\\
{\bf \em Sample Complexity:} The sample complexity of FedAvg  is $\mathcal{O}(\epsilon^{-2})$. This implies that each WN requires at most $\mathcal{O}(K^{-1}\epsilon^{-2})$ gradient computations, thereby achieving linear speedup with the number of WNs in the network.\\ 
{\bf \em Communication Complexity:} The communication complexity of FedAvg is $\mathcal{O}(\epsilon^{-3/2})$.  
\end{enumerate} 
\end{theorem}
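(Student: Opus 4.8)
The plan is to run the standard \emph{virtual averaged iterate} analysis for non-convex local SGD, carefully tracking the dependence on $I$, $b$, $K$, and the heterogeneity $\zeta$. First I would note that Assumption \ref{Ass: Lip_Smoothness} implies, via Jensen's inequality, that each $f^{(k)}$ (hence $f$) is $L$-smooth in the usual averaged sense, $\|\nabla f^{(k)}(x) - \nabla f^{(k)}(y)\| \le L\|x-y\|$. I then introduce the virtual averaged iterate $\bar{x}_t = \frac{1}{K}\sum_k x_t^{(k)}$ and averaged direction $\bar{d}_t = \frac{1}{K}\sum_k d_t^{(k)}$; since server averaging is linear, the recursion $\bar{x}_{t+1} = \bar{x}_t - \eta \bar{d}_t$ holds at \emph{every} $t$, regardless of whether $t \bmod I = 0$. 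Applying the descent lemma to $f$ along this virtual trajectory and taking conditional expectation gives
\begin{align*}
\mathbb{E} f(\bar{x}_{t+1}) \le \mathbb{E} f(\bar{x}_t) - \eta\, \mathbb{E}\langle \nabla f(\bar{x}_t), \bar{g}_t\rangle + \tfrac{L\eta^2}{2}\mathbb{E}\|\bar{d}_t\|^2,
\end{align*}
where $\bar{g}_t = \frac{1}{K}\sum_k \nabla f^{(k)}(x_t^{(k)})$ is the conditional mean of $\bar{d}_t$.

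Next I would process the two right-hand terms. For the inner product I use $-\langle a,b\rangle = \tfrac12(\|a-b\|^2 - \|a\|^2 - \|b\|^2)$ to peel off $-\tfrac{\eta}{2}\|\nabla f(\bar{x}_t)\|^2$ along with the \emph{client-drift} error $\|\nabla f(\bar{x}_t) - \bar{g}_t\|^2 \le \frac{L^2}{K}\sum_k \|\bar{x}_t - x_t^{(k)}\|^2$ (averaged smoothness plus convexity of $\|\cdot\|^2$). For the quadratic term I split off the minibatch variance $\mathbb{E}\|\bar{d}_t - \bar{g}_t\|^2 \le \sigma^2/(Kb)$ (the $1/K$ from averaging $K$ independent nodes, the $1/b$ from the minibatch), leaving $\mathbb{E}\|\bar{g}_t\|^2$ which is absorbed by the negative term once $\eta \le 1/L$. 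This produces a per-step bound of the form $\mathbb{E} f(\bar{x}_{t+1}) \le \mathbb{E} f(\bar{x}_t) - \tfrac{\eta}{2}\mathbb{E}\|\nabla f(\bar{x}_t)\|^2 + \tfrac{\eta L^2}{2K}\sum_k \mathbb{E}\|\bar{x}_t - x_t^{(k)}\|^2 + \tfrac{L\eta^2\sigma^2}{2Kb}$, reducing the problem to controlling the accumulated consensus error.

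The main obstacle is the drift lemma. Within a communication window starting at the last sync time $t_c$ all local models agree, so $x_t^{(k)} - \bar{x}_t = -\eta\sum_{s=t_c}^{t-1}(d_s^{(k)} - \bar{d}_s)$ with at most $I$ summands. Expanding the square, taking expectations, and using $\frac1K\sum_k \mathbb{E}\|d_s^{(k)}\|^2 \le \frac{\sigma^2}{b} + \frac1K\sum_k \mathbb{E}\|\nabla f^{(k)}(x_s^{(k)})\|^2$, I would invoke the heterogeneity bound $\|\nabla f^{(k)}(x) - \nabla f(x)\|^2 \le \zeta^2$ with smoothness to write $\|\nabla f^{(k)}(x_s^{(k)})\|^2 \lesssim \zeta^2 + L^2\|x_s^{(k)} - \bar{x}_s\|^2 + \|\nabla f(\bar{x}_s)\|^2$. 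The delicate point is that the resulting bound is \emph{self-referential}: it contains the drift itself (through $L^2\|x_s^{(k)}-\bar{x}_s\|^2$) and the gradient norm $\|\nabla f(\bar{x}_s)\|^2$. Summing over the window and over $t$, I close the recursion using the stepsize condition $T \ge 81 L^2 I^2 bK$, i.e. $\eta^2 L^2 I^2 = L^2 I^2 bK/T \le 1/81$, which makes the self-referential coefficient a small constant so the drift moves to the left and the stray gradient-norm terms are absorbed into the $-\tfrac{\eta}{2}\sum_t\|\nabla f(\bar{x}_t)\|^2$ budget. The net bound is $\frac1T\sum_t \frac1K\sum_k \mathbb{E}\|\bar{x}_t - x_t^{(k)}\|^2 = \mathcal{O}(\eta^2 I^2(\sigma^2/b + \zeta^2))$ plus a negligible gradient-coupled piece.

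Finally I would telescope the per-step inequality from $t=1$ to $T$, divide by $\eta T/2$, and substitute the drift bound to obtain
\begin{align*}
\frac1T\sum_{t=1}^T \mathbb{E}\|\nabla f(\bar{x}_t)\|^2 = \mathcal{O}\!\left(\frac{f(\bar{x}_1) - f^\ast}{\eta T}\right) + \mathcal{O}\!\left(\frac{L\eta\sigma^2}{Kb}\right) + \mathcal{O}\!\left(L^2\eta^2 I^2\Big(\tfrac{\sigma^2}{b} + \zeta^2\Big)\right).
\end{align*}
Since $\bar{x}_a$ is drawn uniformly from $\{\bar{x}_t\}$, the left side equals $\mathbb{E}\|\nabla f(\bar{x}_a)\|^2$. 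Plugging in $\eta = \sqrt{bK/T}$ collapses the first two terms to $\mathcal{O}\big((f(\bar{x}_1)-f^\ast + \sigma^2)/\sqrt{bKT}\big)$ and the drift term to $\mathcal{O}\big(L^2 KI^2(\sigma^2 + b\zeta^2)/T\big)$; substituting $I = \mathcal{O}((T/K^3)^{\nu/4})$ and $b = \mathcal{O}((T/K^3)^{1/3-\nu/3})$ from \eqref{eq:I:b_FedAvg} makes both dominant contributions equal to $\mathcal{O}(1/(K^{\nu/2}T^{2/3-\nu/6}))$, proving part (i). For part (ii), setting this rate to $\epsilon$ gives $T = \mathcal{O}(\epsilon^{-6/(4-\nu)})$; the total sample count $TKb = \mathcal{O}(\epsilon^{-2})$ and the communication rounds $T/I = \mathcal{O}(\epsilon^{-3/2})$ then follow for every $\nu \in [0,1]$, yielding the claimed complexities and the linear speedup in $K$.
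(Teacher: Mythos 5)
Your proposal is correct and follows essentially the same route as the paper's proof: a descent lemma along the virtual averaged iterate (the paper's Lemma \ref{lem: Fn_Descent_FedAvg}), a self-referential consensus-drift recursion closed via $T \ge 81L^2I^2bK$, i.e. $\eta \le \frac{1}{9LI}$ (Lemma \ref{lem: ErrorAccumulation_Iterates_FedAvg}), telescoping, and the same substitution of $\eta = \sqrt{bK/T}$ and the choices of $I$ and $b$. Your only deviations are minor and harmless: your drift bound carries an absorbed $\|\nabla f(\bar{x}_s)\|^2$ term that the paper avoids structurally by bounding the deviation $\nabla f^{(k)}(x_s^{(k)}) - \frac{1}{K}\sum_j \nabla f^{(j)}(x_s^{(j)})$ (so the common mean cancels), and your window noise accumulates as $I^2\sigma^2/b$ rather than the paper's martingale-based $(I-1)\sigma^2/b$ --- but since $b, I \ge 1$ force $T \ge K^3$, the looser term is still dominated and yields the identical rate in part (i) and the same $\mathcal{O}(\epsilon^{-2})$ sample and $\mathcal{O}(\epsilon^{-3/2})$ communication complexities in part (ii).
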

{Note that the requirement on $T$ being lower bounded is only relevant for theoretical purposes, a similar requirement was also imposed in \cite{Yu_Jin_Arxiv_2019linear} to prove convergence.}
Again, the parameter $\nu \in [0,1]$ in the statement of Theorem \ref{Thm: Flexible_FedAvg} balances $I$ and $b$ at each WN while maintaining {state-of-the-art} sample and communication complexities; please see Table \ref{Table: Comparison} for a comparison of those bounds with existing FedAvg bounds. 
% Specifically, to guarantee convergence (by controlling the client drift {\red[what does this mean?]}), as $I$ increases one needs to reduce $b$ accordingly and vice versa. 
%When such trade-off takes the form of \eqref{eq:I:b_FedAvg}, the algorithm maintains sample and communication complexities of $\mathcal{O}(\epsilon^{-2})$ and $\mathcal{O}(\epsilon^{-3/2})$, respectively; see Figure \ref{Fig: BvsI_FedAvg} for an illustration of such a trade-off curve. {\blue Moreover, for the choices  $I < \mathcal{O}\big( (T/K^3)^{\nu/4} \big)$ and $b < \mathcal{O}\big( (T/K^3)^{1/3 - \nu/3} \big)$, Flexible FedAvg maintains the sample complexity $\mathcal{O}(\epsilon^{-2})$ where as the communication complexity grows when $b$ and $I$ are reduced. This behavior of Flexible FedAvg is illustrated in Figure \ref{Fig: Trade-off_FedAvg}.}
% {\red[question, other (suboptimal) surfaces in Fig 1(a) are not presented in the statement? will people wonder how you plot those?]}
For $\nu = 1$, FedAvg (cf. Theorem \ref{Thm: Flexible_FedAvg}) reduces to FedAvg proposed in \cite{Yu_Zhu_2018parallel, Yu_Jin_Arxiv_2019linear} and for $\nu = 0$, the algorithm can be viewed as a large batch FedAvg with constant local updates \cite{Karimireddy_scaffold_2020,Yang_ICLR_2021}. Note that similar to \aname, it is known that for $I = 1$, the Minibatch SGD's performance is independent of the heterogeneity parameter, $\zeta$ \cite{Woodworth_Minibatch_Arxiv_2020}. We also point out that if Algorithm \ref{Algo_DR-STORM_batch} uses Nesterov's or Polyak's momentum \cite{Yu_Jin_Arxiv_2019linear} at local WNs instead of the recursive momentum estimator we get the same guarantees as in Theorem \ref{Thm: Flexible_FedAvg}.

% are special cases of Algorithm \ref{Algo_FedAvg} with $\nu = 1$ and $\nu = 0$, respectively, while the rest of the parameters are chosen
% the choice of parameters given in the statement of Theorem \ref{Thm: Flexible_FedAvg} {\red[with which choices?]}.

% {\red[comment on the requirement on $T$? is it easy to achieve? is it used by other algorithms?]}

{In summary, this section established that once the WN's and the SN's update directions (SGD in FedAvg and momentum based directions in \aname) are fixed, there exists a sequence of optimal choices of the number of local updates $I$, and the batch sizes $b$, which guarantees the best possible sample and communication complexities for the particular algorithm. %Moreover, we showed via \aname~that there exists an optimal choice of local and SN's update directions which can guarantee state-of-the-art sample and communication complexities if $b$ and $I$ are chosen judiciously. 
%This suggests that all four design elements ($b$, $I$ and the update directions of WN and SN discussed in Sec. \ref{sec: Intro}) are crucial for achieving the best performance -- even with two parameters (i.e., WNs' and SN's update directions) being correctly chosen, the algorithm may require excessive computations or communication to achieve the desired solution.
% Let us conclude this section by emphasizing that, algorithms discussed in this subsection can be viewed as fixing the two design elements (i.e., WNs' and SN's update directions) to some suboptimal ones, while playing with the rest of design elements (i.e., $b$ and $I$).  
The trade-off analysis presented in this section provides some useful guidelines for how to best select $b$ and $I$ in practice. Our subsequent numerical results will also verify that if $b$ or $I$ are not chosen judiciously, then the practical performance of the algorithms can degrade significantly. }

 \section{Numerical results}
 \label{sec: Experiments}
 In this section, we validate the proposed ~\aname~ algorithm and compare its performance with the de facto standard FedAvg \cite{Mcmahan_PMLR_2017} and recently proposed SCAFFOLD \cite{Karimireddy_scaffold_2020}. The goal of our experiments are three-fold: (1) To show that \aname~performs on par, if not better, compared to other algorithms in both moderate and high heterogeneity settings, (2) there are multiple ways to reach the desired solution accuracy, one can either choose a large batch size and perform only a few local updates or select a smaller batch size and perform multiple local updates, and finally, (3) if the local updates and the batch sizes are not chosen appropriately, the WNs might need to perform excessive computations to achieve the desired solution accuracy, thereby slowing down convergence. 

  \begin{figure}[t]
     \centering
		\begin{minipage}[t]{0.45\textwidth}
        \flushright
        \includegraphics[width=1\linewidth,height = 2.2 in]{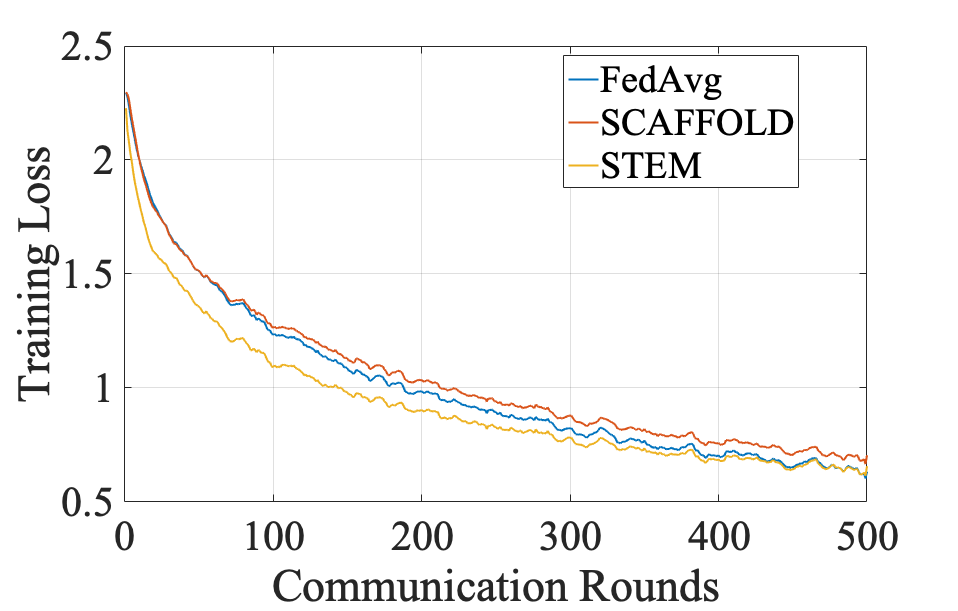}
        \captionsetup{labelformat=empty}
        \end{minipage}
        \begin{minipage}[t]{0.45\textwidth}
        \flushleft
        \includegraphics[width=1\linewidth,height = 2.2 in]{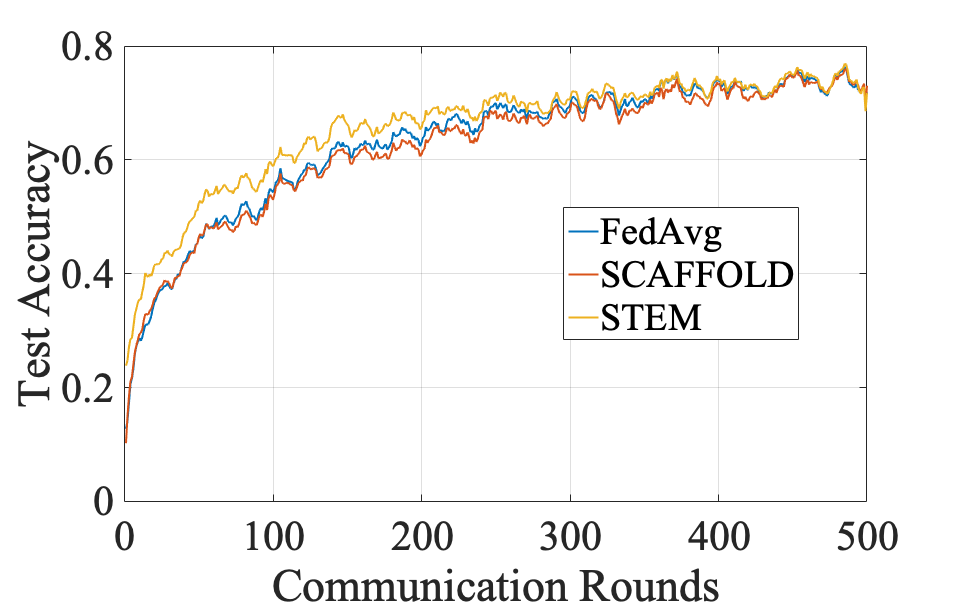}
        \captionsetup{labelformat=empty}
        \end{minipage}
     \caption{Training loss and testing accuracy for classification on CIFAR-10 dataset against the number of communication rounds for moderate heterogeneity setting with $b = 8$ and $I = 61$. }
     \label{Fig: CIFAR_b8e1}
 \end{figure}
 
 \begin{figure}[t]
		\centering
        %\caption{Training / Testing accuracy when corrup}
        \begin{minipage}[b]{0.45\textwidth}
        \flushright
        \includegraphics[width=1\linewidth, height = 2.2 in]{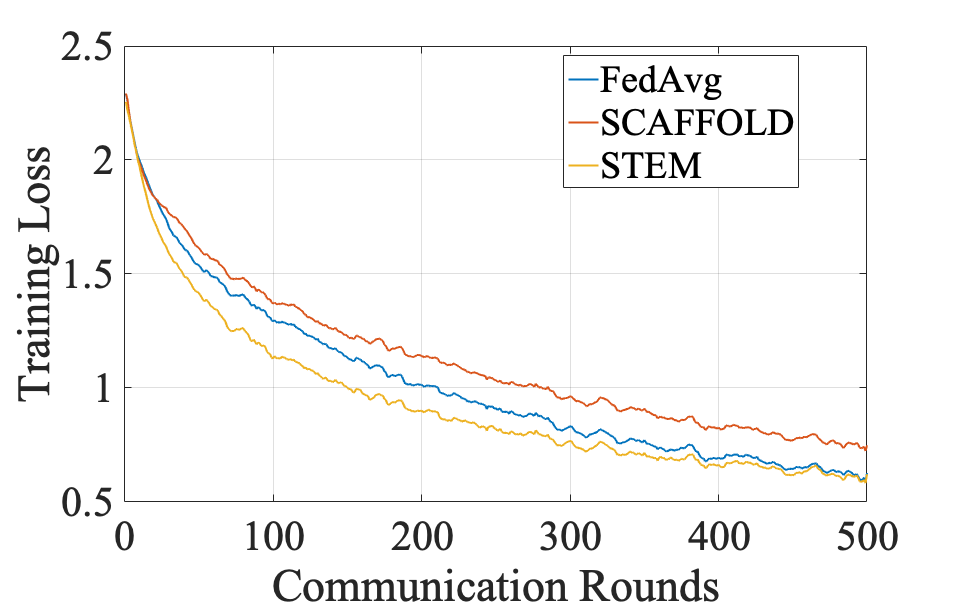}
        \captionsetup{labelformat=empty}
        \end{minipage}
        \begin{minipage}[t]{0.45\textwidth}
        \flushleft
        \includegraphics[width=1\linewidth,height = 2.2 in]{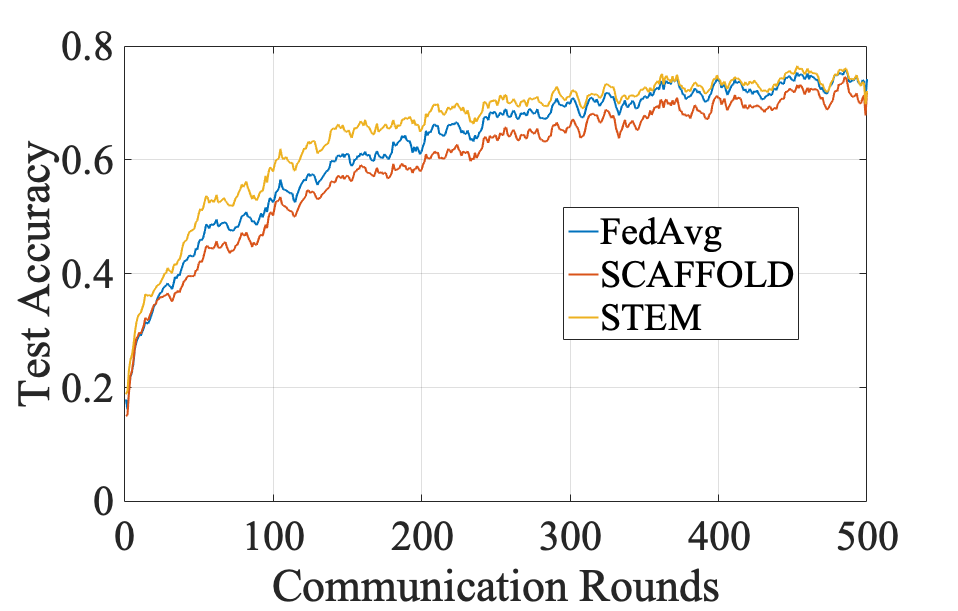}
        \captionsetup{labelformat=empty}
        \end{minipage}
        \caption{Training loss and testing accuracy for classification on CIFAR-10 dataset against the number of communication rounds for moderate heterogeneity setting with $b = 64$ and $I = 7$.}
        \label{Fig: CIFAR_b64e1}  
	\end{figure} 
 
\noindent{\bf Data and Parameter Settings:} We compare the algorithms for image classification tasks on CIFAR-10 and MNIST data sets with $100$ WNs in the network. For both CIFAR-10 and MNIST, each WN implements a two-hidden-layer convolutional neural network (CNN) architecture followed by three linear layers for CIFAR-10 and two for MNIST. All the experiments are implemented on a single NVIDIA Quadro RTX 5000 GPU. We consider two settings, one with moderate heterogeneity and the other with high heterogeneity. For both settings, the data is partitioned into disjoint sets among the WNs. In the moderate heterogeneity setting, the WNs have access to partitioned data from all the classes but for the high heterogeneity setting the data is partitioned such that each WN can access data from only a subset (5 out of 10 classes) of classes. For CIFAR-10 (resp. MNIST), each WN has access to 490 (resp. 540) samples for training and 90 (resp. 80) samples for testing purposes.

For \aname, we set $w_t = 1$, $c = \bar{c}/\bar{\kappa}^2$ and tune for $\bar{\kappa}$ and $\bar{c}$ in the range $\bar{\kappa} \in [0.01, 0.5]$ and $\bar{c} \in [1, 10]$, respectively (cf. Theorem \ref{Thm: PR_Convergence_Main}). We note that for small batch sizes  $\bar{\kappa} \in [0.01, 0.1]$, whereas for larger batch sizes $\bar{\kappa} \in [0.3, 0.5]$ perform well. We diminish $\eta_t$ according to \eqref{eq: Step-Size} in each epoch\footnote{We define epoch as a single pass over the whole data.}. For SCAFFOLD and FedAvg, the stepsize choices of $0.1$ and $0.01$ perform well for large and smaller batch sizes, respectively. We use cross entropy as the loss function and evaluate the algorithm performance under a few settings discussed next.  

% for  settings, namely $b = 8, I = 67$ and $b = 64, I = 8$. Next, we discuss the results. 
% 	Moreover, we train the network for two choices of batch sizes and the number of local updates, namely $b = 8, I = 67$ and $b = 64, I = 8$. 

\begin{figure}[t!]
		\centering
		\begin{minipage}[t]{0.45\textwidth}
        \flushright
        \includegraphics[width=1\linewidth]{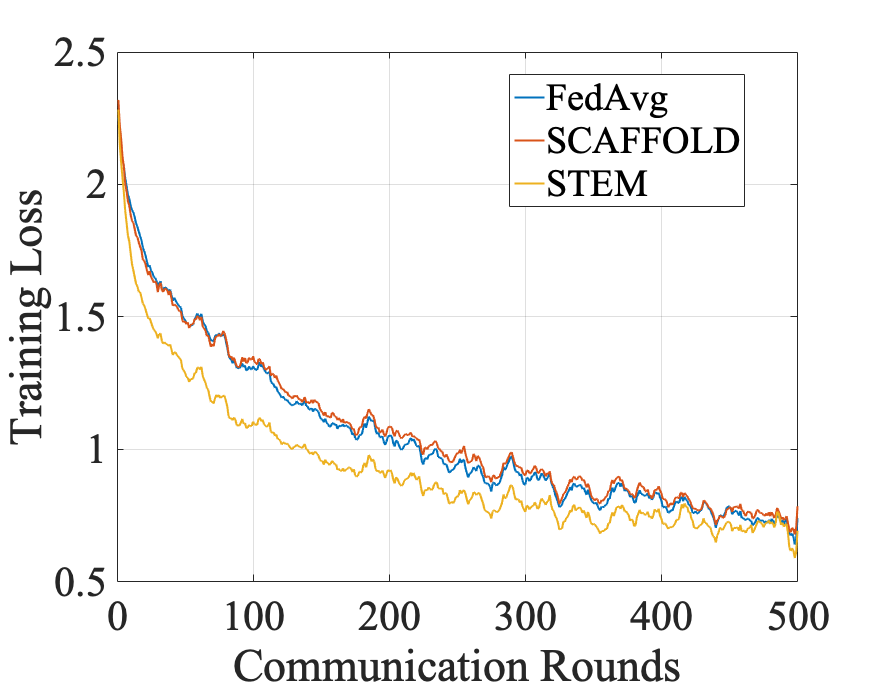}
        \captionsetup{labelformat=empty}
        \end{minipage}
        \begin{minipage}[t]{0.45\textwidth}
        \flushleft
        \includegraphics[width=1\linewidth]{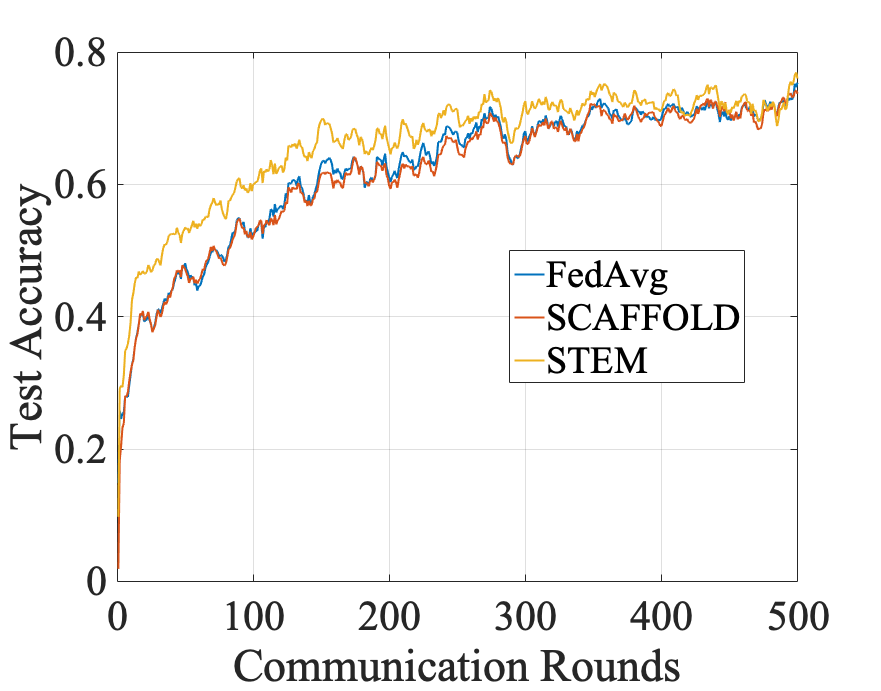}
        \captionsetup{labelformat=empty}
        \end{minipage}
        \caption{Training loss and testing accuracy for classification on CIFAR-10 dataset against the number of communication rounds for high heterogeneity setting with $b = 8$ and $I = 61$.}
        \label{Fig: CIFAR_b8e1_noniid}  
	\end{figure}

	\begin{figure}[t!]
		\centering
        \begin{minipage}[t]{0.45\textwidth}
        \flushright
        \includegraphics[width=1\linewidth]{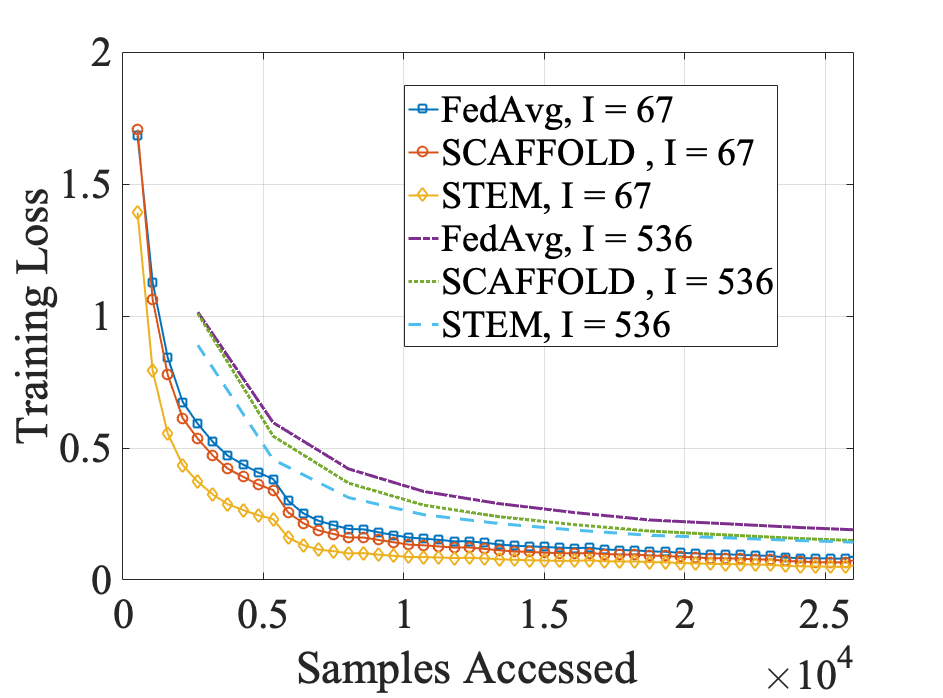}
        \captionsetup{labelformat=empty}
        \end{minipage}
        \hspace{0.5cm}
        \begin{minipage}[t]{0.45\textwidth}
        \flushleft
        \includegraphics[width=1\linewidth]{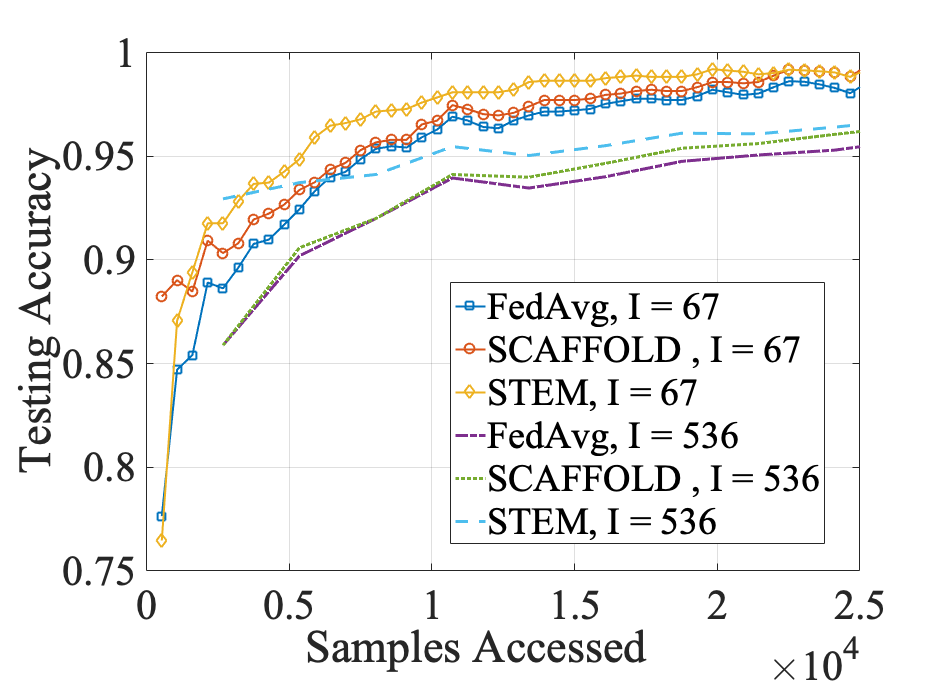}
        \captionsetup{labelformat=empty}  
        \end{minipage}
        \caption{Training loss and the testing accuracy for classification on MNIST data set against the number of samples accessed at each WN for high heterogeneity setting with $b = 8$. }
        \label{Fig: SampleComplexity_Comp}	
        \end{figure} 
\paragraph{Discussion:} In Figures \ref{Fig: CIFAR_b8e1} and \ref{Fig: CIFAR_b64e1}, we compare the training and testing performance of \aname~with FedAvg and SCAFFOLD for CIFAR-10 dataset under moderate heterogeneity setting. For Figure \ref{Fig: CIFAR_b8e1}, we choose $b = 8$ and $I = 61$, whereas for Figure \ref{Fig: CIFAR_b64e1}, we choose $b = 64$ and $I = 7$. We first note that for both cases \aname~performs better than FedAvg and SCAFFOLD. Moreover, observe that for both settings, small batches with multiple local updates (Figure \ref{Fig: CIFAR_b8e1}) and large batches with few local updates (Figure \ref{Fig: CIFAR_b64e1}), the algorithms converge with approximately similar performance, corroborating the theoretical analysis (see Discussion in Section \ref{sec: Intro}). Next, in Figure \ref{Fig: CIFAR_b8e1_noniid} we evaluate the performance of the proposed algorithms on CIFAR-10 with high heterogeneity setting for $b = 8$ and $I = 61$. We note that \aname~outperforms FedAvg  and SCAFFOLD in this setting as well. Finally, with the next set of experiments we emphasize the importance of choosing $b$ and $I$ carefully. In Figure \ref{Fig: SampleComplexity_Comp}, we compare the training and testing performance of the algorithms against the number of samples accessed at each WN for the classification task on MNIST dataset with high heterogeneity. We fix $b =8$ and conduct experiments under two settings, one with $I = 67$, and the other with $I = 536$ local updates at each WN. Note that although a large number of local updates might lead to fewer communication rounds but it can make the sample complexity extremely high as is demonstrated by Figure \ref{Fig: SampleComplexity_Comp}. For example, Figure \ref{Fig: SampleComplexity_Comp} shows that to reach testing accuracy of $96 - 97 \%$ with $I = 67$, \aname~requires approximately $5000-6000$ samples, in contrast with $I = 536$ it requires more than $25000$ samples at each WN. Similar behavior can be observed if we fix $I > 1$ and increase the local batch sizes. This implies not choosing the local updates and the batch sizes judiciously might lead to increased sample complexity. 

\section{Conclusion}
\label{sec: Conclusion}
  In this work, we proposed a novel algorithm \aname, for distributed stochastic non-convex optimization with applications to FL. We showed that \aname~reaches an $\epsilon$-stationary point with $\mathcal{\tilde{O}}(\epsilon^{-3/2})$ sample complexity while achieving linear speed-up with the number of WNs. Moreover, the algorithm achieves a communication complexity of $\mathcal{\tilde{O}}(\epsilon^{-1})$. We established a (optimal) trade-off that allows interpolation between varying choices of local updates and the batch sizes at each WN while maintaining (near optimal) sample and communication complexities. We showed that FedAvg (a.k.a LocalSGD) also exhibits a similar trade-off while achieving worse complexities. Our results provide guidelines to carefully choose the update frequency, directions, and minibatch sizes to achieve the best performance. The future directions of this work include developing lower bounds on communication complexity that establishes the tightness of the analysis conducted in this work. 
  
  %These state-of-the-art guarantees for the proposed algorithm follow from two key observations: 1) achieving a precise balance between the batch sizes and the number of local updates at each WN and 2) the usage of a recursive momentum based gradient estimator which iteratively reduces the variance of the stochastic gradient estimates.  
 
 \newpage
 
\newpage
\bibliographystyle{IEEEtran}
\bibliography{abrv,References}

\newpage
\appendix

\section*{Appendix}
The organization of the Appendix is given below.  In Appendix \ref{App: SAMVR}, we present the proof of the convergence guarantees for \aname~(Algorithm \ref{Algo_DR-STORM_batch}) stated in Section \ref{Sec: Convergence}. In Appendix \ref{App: FedAvg}, we present the proof of the convergence guarantees associated with FedAvg (Algorithm \ref{Algo_FedAvg}) stated in Section \ref{Sec: Special Case}. Finally, in Appendix \ref{App: Useful_Lemmas} we state some useful lemmas utilized throughout the proofs. 

\section{Proofs of Convergence Guarantees for \aname}
\label{App: SAMVR}
In this section we present the proofs for the convergence of \aname. First, we present some preliminary lemmas to be utilized throughout the proof. For reader's convenience here we restate the steps of the Algorithm \ref{Algo_DR-STORM_batch} in Algorithm \ref{Algo_DR-STORM_batch_App}.

 \subsection{Preliminary lemmas}
\begin{lem}
\label{Lem: InnerProduct_e_t_Grad}
Define $\bar{e}_{t} \coloneqq \bar{d}_{t} - \frac{1}{K} \sum_{k=1}^K \nabla f^{(k)}(x_{t}^{(k)})$, then the iterates generated according to Algorithm \ref{Algo_DR-STORM_batch_App} satisfy
\begin{align*}
   & \mathbb{E} \Bigg[   \bigg\langle   (1 - a_t) \bar{e}_{t-1}, \frac{1}{K}\sum_{k = 1}^K \frac{1}{b}  \sum_{\xi_t^{(k)} \in \mathcal{B}_t^{(k)}}   \bigg[\bigg( \nabla f^{(k)}(x^{(k)}_{t};\xi_t^{(k)})-  \nabla f^{(k)}(x^{(k)}_{t}) \bigg)  \\
   & \qquad \qquad \qquad \qquad  \qquad \qquad \qquad    \qquad      - (1 - a_t) \left( \nabla f^{(k)}(x^{(k)}_{t - 1}; \xi_t^{(k)})  - \nabla f^{(k)}(x^{(k)}_{t - 1})\right) \bigg]  \bigg\rangle   \Bigg]   = 0,
\end{align*}
where the expectation is w.r.t. the stochasticity of the algorithm.
 \end{lem}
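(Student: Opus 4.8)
The plan is to prove the claim by a standard tower-property (conditional-expectation) argument, exploiting the fact that the minibatch $\mathcal{B}_t^{(k)}$ drawn at iteration $t$ is \emph{fresh}, i.e., independent of everything computed through iteration $t-1$. First I would introduce the natural filtration $\mathcal{F}_{t-1}$ generated by all random samples drawn up to and including iteration $t-1$, namely $\{\mathcal{B}_s^{(k)} : 1 \le s \le t-1,\ k \in [K]\}$ together with the initial batches.

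The crucial structural observation, read off from Algorithm \ref{Algo_DR-STORM_batch_App}, is a measurability statement. The iterate $x_t^{(k)}$ is computed from $x_{t-1}^{(k)}$ and $d_{t-1}^{(k)}$, and $d_{t-1}^{(k)}$ uses the batch $\mathcal{B}_{t-1}^{(k)}$; by induction both $x_t^{(k)}$ and $x_{t-1}^{(k)}$ depend only on samples drawn through iteration $t-1$, hence are $\mathcal{F}_{t-1}$-measurable. Likewise $\bar{d}_{t-1}$, and therefore $\bar{e}_{t-1} = \bar{d}_{t-1} - \frac{1}{K}\sum_{k=1}^K \nabla f^{(k)}(x_{t-1}^{(k)})$, is $\mathcal{F}_{t-1}$-measurable, while the momentum weight $a_t = c\eta_{t-1}^2$ is deterministic. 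By contrast, the fresh samples $\{\xi_t^{(k)} \in \mathcal{B}_t^{(k)}\}$ are independent of $\mathcal{F}_{t-1}$.

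Given this, I would apply the tower property, writing the expectation as $\mathbb{E}[\,\mathbb{E}[\,\cdot\mid\mathcal{F}_{t-1}]\,]$, and pull the $\mathcal{F}_{t-1}$-measurable vector $(1-a_t)\bar{e}_{t-1}$ out of the inner conditional expectation. It then suffices to show the conditional expectation of the second argument vanishes. For this I invoke Assumption \ref{Ass: Unbiased_Var_Grad}-(i): since $x_t^{(k)}$ is $\mathcal{F}_{t-1}$-measurable and each $\xi_t^{(k)}$ is drawn fresh and independently, $\mathbb{E}[\nabla f^{(k)}(x_t^{(k)};\xi_t^{(k)}) - \nabla f^{(k)}(x_t^{(k)}) \mid \mathcal{F}_{t-1}] = 0$, and identically for the $x_{t-1}^{(k)}$ term (note the same batch $\mathcal{B}_t^{(k)}$ evaluates gradients at both iterates, but this does not affect the unbiasedness of either difference). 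Summing over $k$ and over the $b$ samples, the entire bracketed average has conditional expectation zero, hence so does the inner product, and the outer expectation therefore vanishes.

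The only real care needed — the ``main obstacle,'' though it is routine — is the bookkeeping behind the measurability claim: one must verify that the iterates $x_t^{(k)}$ and $x_{t-1}^{(k)}$ are determined \emph{before} the iteration-$t$ samples are drawn, so that unbiasedness may be applied with these iterates treated as fixed. This is where the recursive structure of the momentum estimator and the per-round averaging at the server must be tracked correctly through the filtration; once the indices are aligned, the computation is immediate.
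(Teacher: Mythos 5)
Your proposal is correct and follows essentially the same route as the paper's proof: the paper conditions on the filtration $\mathcal{F}_t = \sigma(x_1^{(k)},\ldots,x_t^{(k)},\; k\in[K])$, under which $\bar{e}_{t-1}$ and the iterates $x_t^{(k)}, x_{t-1}^{(k)}$ are fixed, and then applies the unbiasedness in Assumption \ref{Ass: Unbiased_Var_Grad}-(i) to kill the inner conditional expectation. Your choice of the sample-generated filtration through iteration $t-1$ is an equivalent (and if anything more carefully justified) bookkeeping of the same tower-property argument, including the correct observation that evaluating the same fresh batch at both $x_t^{(k)}$ and $x_{t-1}^{(k)}$ does not affect unbiasedness.
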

 \begin{proof}
 Note that, given the filtration 
 $$\mathcal{F}_t = \sigma(x_1^{(k)}, x_2^{(k)}, \ldots, x_t^{(k)}~\text{for all}~k \in [K]),$$  
the gradient error term, $\bar{e}_{t-1}$, is fixed. The only randomness in the left hand side of the statement of the Lemma is with respect to $\xi_t^{(k)}$, for all $k \in [K]$. This implies that we have
\begin{align*}
& \mathbb{E} \Bigg[   \bigg\langle   (1 - a_t) \bar{e}_{t-1}, \frac{1}{K}\sum_{k = 1}^K \frac{1}{b}  \sum_{\xi_t^{(k)} \in \mathcal{B}_t^{(k)}}   \bigg[\bigg( \nabla f^{(k)}(x^{(k)}_{t};\xi_t^{(k)})-  \nabla f^{(k)}(x^{(k)}_{t}) \bigg)  \\
   & \qquad \qquad \qquad \qquad \qquad \qquad \qquad \qquad \qquad    \qquad      - (1 - a_t) \left( \nabla f^{(k)}(x^{(k)}_{t - 1}; \xi_t^{(k)})  - \nabla f^{(k)}(x^{(k)}_{t - 1})\right) \bigg]  \bigg\rangle   \Bigg] \\
   &  \qquad \qquad = \mathbb{E} \Bigg[ \bigg\langle  (1 - a_t) \bar{e}_{t-1},    \frac{1}{K}\sum_{k = 1}^K \mathbb{E} \bigg[ \frac{1}{b}\sum_{\xi_t^{(k)} \in \mathcal{B}_t^{(k)}}\bigg[\bigg( \nabla f^{(k)}(x^{(k)}_{t};\xi_t^{(k)})  -  \nabla f^{(k)}(x^{(k)}_{t}) \bigg)   \\
   & \qquad \qquad \qquad \qquad \qquad \qquad \qquad \qquad \qquad     \qquad  - (1 - a_t) \bigg( \nabla f^{(k)}(x^{(k)}_{t - 1}; \xi_t^{(k)})  - \nabla f^{(k)}(x^{(k)}_{t - 1})\bigg) \bigg] \bigg| \mathcal{F}_t \bigg]  \bigg\rangle \Bigg].
\end{align*}
The result then follows from the fact that $\xi_t^{(k)}$ is chosen uniformly randomly at each $k \in [K]$, and we have from (Assumption \ref{Ass: Unbiased_Var_Grad}) that: $\mathbb{E} \big[ \nabla f^{(k)}(x^{(k)}_{t}; \xi_t^{(k)}) \big] = \nabla f^{(k)}(x^{(k)}_{t})$. This implies we have
 $$\mathbb{E} \bigg[ \frac{1}{b}\sum_{\xi_t^{(k)} \in \mathcal{B}_t^{(k)}}\left[\left( \nabla f^{(k)}(x^{(k)}_{t};\xi_t^{(k)})  -  \nabla f^{(k)}(x^{(k)}_{t}) \right)    - (1 - a_t) \left( \nabla f^{(k)}(x^{(k)}_{t - 1}; \xi_t^{(k)})  - \nabla f^{(k)}(x^{(k)}_{t - 1})\right) \right] \bigg| \mathcal{F}_t \bigg] = 0$$
 for all $k \in [K]$. 
 
 Therefore, the lemma is proved. 
\end{proof}

\begin{lem}
\label{Lem: InnerProd_AcrossNodes}
For $k , \ell \in [K]$ with $k \neq \ell$, the iterates generated according to Algorithm \ref{Algo_DR-STORM_batch_App} satisfy
\begin{align*}
 & \mathbb{E} \Bigg[ \bigg\langle   \sum_{\xi_t^{(k)} \in \mathcal{B}_t^{(k)}} \Big[ \Big( \nabla f^{(k)}(x^{(k)}_{t};\xi_t^{(k)})  -  \nabla f^{(k)}(x^{(k)}_{t}) \Big)    - (1 - a_t) \Big( \nabla f^{(k)}(x^{(k)}_{t - 1}; \xi_t^{(k)}) - \nabla f^{(k)}(x^{(k)}_{t - 1})\Big) \Big],\\
   & \qquad \qquad    \sum_{\xi_t^{(\ell)} \in \mathcal{B}_t^{(\ell)}} \Big[ \Big( \nabla f^{(\ell)}(x^{(\ell)}_{t};\xi_t^{(\ell)})  -  \nabla f^{(\ell)}(x^{(\ell)}_{t}) \Big)    - (1 - a_t) \Big( \nabla f^{(\ell)}(x^{(\ell)}_{t - 1}; \xi_t^{(\ell)}) - \nabla f^{(\ell)}(x^{(\ell)}_{t - 1}) \Big) \Big] \bigg\rangle \Bigg] = 0
   \end{align*}
\end{lem}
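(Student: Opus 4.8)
The plan is to mirror the argument of Lemma~\ref{Lem: InnerProduct_e_t_Grad}, exploiting the tower property of conditional expectation together with the independence of the samples drawn at distinct worker nodes. First I would condition on the filtration $\mathcal{F}_t = \sigma(x_1^{(k)}, \ldots, x_t^{(k)} : k \in [K])$, under which every iterate $x_t^{(k)}$ and $x_{t-1}^{(k)}$ is deterministic, so that the only randomness remaining in the two summands is carried by the minibatches $\mathcal{B}_t^{(k)}$ and $\mathcal{B}_t^{(\ell)}$. For brevity I would abbreviate the first and second vectors appearing in the inner product as $A^{(k)}$ and $A^{(\ell)}$, each being the sum over its minibatch of the centered stochastic-gradient differences displayed in the statement.

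The key facts I would assemble are two. By the unbiasedness in Assumption~\ref{Ass: Unbiased_Var_Grad}-(i), each centered summand has zero conditional mean, and hence $\mathbb{E}[A^{(k)} \mid \mathcal{F}_t] = 0$ and $\mathbb{E}[A^{(\ell)} \mid \mathcal{F}_t] = 0$; this is precisely the identity already established within the proof of Lemma~\ref{Lem: InnerProduct_e_t_Grad}. Second, conditioned on $\mathcal{F}_t$, the vectors $A^{(k)}$ and $A^{(\ell)}$ are independent, since $A^{(k)}$ is a function only of the minibatch $\mathcal{B}_t^{(k)}$ at node $k$, $A^{(\ell)}$ is a function only of $\mathcal{B}_t^{(\ell)}$ at node $\ell$, and for $k \neq \ell$ these minibatches are sampled independently across nodes.

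Combining the two facts, I would expand the inner product coordinatewise and use the conditional independence to factor the expectation, giving $\mathbb{E}[\langle A^{(k)}, A^{(\ell)} \rangle \mid \mathcal{F}_t] = \langle \mathbb{E}[A^{(k)} \mid \mathcal{F}_t], \mathbb{E}[A^{(\ell)} \mid \mathcal{F}_t] \rangle = \langle 0, 0 \rangle = 0$. Applying the tower property, $\mathbb{E}[\langle A^{(k)}, A^{(\ell)} \rangle] = \mathbb{E}\big[ \mathbb{E}[\langle A^{(k)}, A^{(\ell)} \rangle \mid \mathcal{F}_t] \big] = 0$, which is the claim. The only step demanding genuine care---and the main (if mild) obstacle---is the factorization of the conditional expectation of the inner product into the inner product of conditional expectations; this rests entirely on the cross-node sampling independence built into the algorithm, after which the vanishing follows immediately from unbiasedness.
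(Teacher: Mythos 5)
Your proposal is correct and follows essentially the same route as the paper's proof: conditioning on $\mathcal{F}_t$, factoring the conditional expectation of the inner product via the cross-node independence of the minibatches $\mathcal{B}_t^{(k)}$ and $\mathcal{B}_t^{(\ell)}$, and concluding from the unbiasedness in Assumption \ref{Ass: Unbiased_Var_Grad}-(i) that each conditional mean vanishes. Your explicit justification of the factorization step is, if anything, slightly more careful than the paper's presentation.
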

\begin{proof}
Again note from the fact that conditioned on $\mathcal{F}_t$ the batches $\mathcal{B}_t^{(k)}$ and $\mathcal{B}_t^{(\ell)}$ for all $k,\ell \in [K]$ with $k \neq \ell$ across WNs are chosen independently of each other. Therefore, we have
\begin{align*}
& \mathbb{E} \Bigg[ \bigg\langle   \sum_{\xi_t^{(k)} \in \mathcal{B}_t^{(k)}} \Big[ \Big( \nabla f^{(k)}(x^{(k)}_{t};\xi_t^{(k)})  -  \nabla f^{(k)}(x^{(k)}_{t}) \Big)    - (1 - a_t) \Big( \nabla f^{(k)}(x^{(k)}_{t - 1}; \xi_t^{(k)}) - \nabla f^{(k)}(x^{(k)}_{t - 1})\Big) \Big],\\
   & \qquad  \qquad   \sum_{\xi_t^{(\ell)} \in \mathcal{B}_t^{(\ell)}} \Big[ \Big( \nabla f^{(\ell)}(x^{(\ell)}_{t};\xi_t^{(\ell)})  -  \nabla f^{(\ell)}(x^{(\ell)}_{t}) \Big)    - (1 - a_t) \Big( \nabla f^{(\ell)}(x^{(\ell)}_{t - 1}; \xi_t^{(\ell)}) - \nabla f^{(\ell)}(x^{(\ell)}_{t - 1}) \Big) \Big] \bigg\rangle \Bigg]\\
     & =  \mathbb{E} \Bigg[ \bigg\langle  \mathbb{E} \bigg[ \sum_{\xi_t^{(k)} \in \mathcal{B}_t^{(k)}} \Big[ \Big( \nabla f^{(k)}(x^{(k)}_{t};\xi_t^{(k)})  -  \nabla f^{(k)}(x^{(k)}_{t}) \Big)    - (1 - a_t) \Big( \nabla f^{(k)}(x^{(k)}_{t - 1}; \xi_t^{(k)}) - \nabla f^{(k)}(x^{(k)}_{t - 1})\Big) \Big] \bigg| \mathcal{F}_t \bigg],\\
   & \qquad      \mathbb{E} \bigg[ \sum_{\xi_t^{(\ell)} \in \mathcal{B}_t^{(\ell)}} \Big[ \Big( \nabla f^{(\ell)}(x^{(\ell)}_{t};\xi_t^{(\ell)})  -  \nabla f^{(\ell)}(x^{(\ell)}_{t}) \Big)    - (1 - a_t) \Big( \nabla f^{(\ell)}(x^{(\ell)}_{t - 1}; \xi_t^{(\ell)}) - \nabla f^{(\ell)}(x^{(\ell)}_{t - 1}) \Big) \Big] \bigg| \mathcal{F}_t\bigg] \bigg\rangle \Bigg].
\end{align*}
The result then follows from the fact that $\xi_t^{(k)}$ is chosen uniformly randomly across $k \in [K]$ and we have from the unbiased gradient Assumption \ref{Ass: Unbiased_Var_Grad} that: $\mathbb{E} \big[ \nabla f^{(k)}(x^{(k)}_{t}; \xi_t^{(k)}) \big] = \nabla f^{(k)}(x^{(k)}_{t})$. This implies we have
\begin{align*}
    \mathbb{E} \bigg[ \sum_{\xi_t^{(k)} \in \mathcal{B}_t^{(k)}} \Big[ \Big( \nabla f^{(k)}(x^{(k)}_{t};\xi_t^{(k)})  -  \nabla f^{(k)}(x^{(k)}_{t}) \Big)    - (1 - a_t) \Big( \nabla f^{(k)}(x^{(k)}_{t - 1}; \xi_t^{(k)}) - \nabla f^{(k)}(x^{(k)}_{t - 1})\Big) \Big] \bigg| \mathcal{F}_t \bigg] = 0
\end{align*}
for all $k \in [K]$.

Therefore, the lemma is proved. 
\end{proof}

\begin{lem}
\label{Lem: e_bar_bound_Batch}
For $\bar{e}_1 \coloneqq \bar{d}_1 - \frac{1}{K} \sum_{k=1}^K   \nabla f^{(k)}(x_1^{(k)})$ where $\bar{d}_1$ chosen according to Algorithm \ref{Algo_DR-STORM_batch_App}, we have:
\begin{align*}
    \mathbb{E}\| \bar{e}_1 \|^2 \leq \frac{\sigma^2}{KB}.
\end{align*}
\end{lem}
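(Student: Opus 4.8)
The plan is to unfold the initialization step of Algorithm~\ref{Algo_DR-STORM_batch_App} and reduce the claim to a standard variance-of-the-mean computation. Writing $e_1^{(k)} \coloneqq \frac{1}{B} \sum_{\xi_1^{(k)} \in \mathcal{B}_1^{(k)}} \big( \nabla f^{(k)}(x_1^{(k)}; \xi_1^{(k)}) - \nabla f^{(k)}(x_1^{(k)}) \big)$ for the per-node sampling error, the definitions $d_1^{(k)} = \frac{1}{B} \sum_{\xi_1^{(k)} \in \mathcal{B}_1^{(k)}} \nabla f^{(k)}(x_1^{(k)}; \xi_1^{(k)})$ and $\bar{d}_1 = \frac{1}{K} \sum_{k=1}^K d_1^{(k)}$ give immediately $\bar{e}_1 = \frac{1}{K} \sum_{k=1}^K e_1^{(k)}$. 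By the unbiasedness part of Assumption~\ref{Ass: Unbiased_Var_Grad}, each $e_1^{(k)}$ has zero mean.

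Next I would expand $\mathbb{E}\|\bar{e}_1\|^2 = \frac{1}{K^2} \sum_{k,\ell} \mathbb{E} \langle e_1^{(k)}, e_1^{(\ell)} \rangle$ and argue that all off-diagonal terms vanish. This follows because the minibatches $\mathcal{B}_1^{(k)}$ and $\mathcal{B}_1^{(\ell)}$ are drawn independently across distinct nodes, so for $k \neq \ell$ one may condition and use $\mathbb{E}[e_1^{(k)}] = \mathbb{E}[e_1^{(\ell)}] = 0$; this is exactly the cross-node cancellation already exploited in Lemma~\ref{Lem: InnerProd_AcrossNodes}. Hence $\mathbb{E}\|\bar{e}_1\|^2 = \frac{1}{K^2} \sum_{k=1}^K \mathbb{E}\|e_1^{(k)}\|^2$.

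It remains to bound each diagonal term. Within a fixed node $k$, the $B$ samples in $\mathcal{B}_1^{(k)}$ are i.i.d. and each summand $\nabla f^{(k)}(x_1^{(k)}; \xi_1^{(k)}) - \nabla f^{(k)}(x_1^{(k)})$ is zero-mean with second moment at most $\sigma^2$ by Assumption~\ref{Ass: Unbiased_Var_Grad}-(ii). The within-batch cross terms therefore also cancel, leaving $\mathbb{E}\|e_1^{(k)}\|^2 = \frac{1}{B^2} \sum_{\xi_1^{(k)} \in \mathcal{B}_1^{(k)}} \mathbb{E}\| \nabla f^{(k)}(x_1^{(k)}; \xi_1^{(k)}) - \nabla f^{(k)}(x_1^{(k)}) \|^2 \leq \frac{\sigma^2}{B}$. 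Substituting back yields $\mathbb{E}\|\bar{e}_1\|^2 \leq \frac{1}{K^2} \cdot K \cdot \frac{\sigma^2}{B} = \frac{\sigma^2}{KB}$, as claimed.

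I expect no genuine obstacle here: the entire argument is the variance of an average of independent zero-mean terms applied twice, once across the $K$ nodes and once across the $B$ samples at each node. The only point requiring care is the bookkeeping that ensures both layers of cross terms vanish, which rests solely on independence of sampling across nodes and i.i.d. sampling within each batch, identical in spirit to the conditioning arguments in Lemmas~\ref{Lem: InnerProduct_e_t_Grad} and~\ref{Lem: InnerProd_AcrossNodes}.
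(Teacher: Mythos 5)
Your proposal is correct and follows essentially the same route as the paper's proof: both expand $\mathbb{E}\|\bar{e}_1\|^2$, kill the cross-node inner products via independence of the batches $\mathcal{B}_1^{(k)}$ together with unbiasedness (Assumption \ref{Ass: Unbiased_Var_Grad}), then kill the within-batch cross terms by i.i.d. sampling, and finally apply the intra-node variance bound to each of the $KB$ diagonal terms. The only cosmetic difference is that you introduce the per-node error $e_1^{(k)}$ as an intermediate object, whereas the paper carries the full double sum through steps $(a)$--$(d)$ directly.
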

\begin{proof}

Using the definition of $\bar{e}_1$ we have:
\begin{align*}
\mathbb{E} \|    \bar{e}_1 \|^2  & = \mathbb{E} \bigg\| \bar{d}_1 - \frac{1}{K} \sum_{k=1}^K   \nabla f^{(k)}(x_1^{(k)}) \bigg\|^2 \\
 & \overset{(a)}{=} \mathbb{E} \bigg\| \frac{1}{K} \sum_{k=1}^K \frac{1}{B} \sum_{\xi_1^{(k)} \in \mathcal{B}_1^{(k)}} \nabla f^{(k)}(x_1^{(k)}; \xi_1^{(k)}) - \frac{1}{K} \sum_{k=1}^K \nabla f^{(k)}(x_1^{(k)})  \bigg\|^2 \\
 & \overset{(b)}{=}  \frac{1}{K^2B^2} \sum_{k=1}^K \mathbb{E} \bigg\| \sum_{\xi_1^{(k)} \in \mathcal{B}_1^{(k)}} \big( \nabla f^{(k)}(x_1^{(k)}; \xi_1^{(k)}) - \nabla f^{(k)}(x_1^{(k)}) \big) \bigg\|^2  \\
 & \overset{(c)}{=}  \frac{1}{K^2B^2} \sum_{k=1}^K \sum_{\xi_1^{(k)} \in \mathcal{B}_1^{(k)}} \mathbb{E} \big\|  \nabla f^{(k)}(x_1^{(k)}; \xi_1^{(k)}) - \nabla f^{(k)}(x_1^{(k)})  \big\|^2 \\
& \overset{(d)}{\leq} \frac{\sigma^2}{KB}.
\end{align*}
where $(a)$ follows from the definition of $\bar{d}_1$ in Algorithm \ref{Algo_DR-STORM_batch} and $(b)$ follows from the following: From Assumption \ref{Ass: Unbiased_Var_Grad}, given $\mathcal{F}_t$ we have: $\mathbb{E} \big[  \nabla f^{(k)}({x}_1^{(k)};\xi_1^{(k)}) \big]  =  \nabla f^{(k)}({x}_1^{(k)})$, for all $k \in [K]$. Moreover, given $\mathcal{F}_t$ the samples $\xi_1^{(k)}$ and $\xi_1^{(\ell)}$ at the $k^\text{th}$ and the $\ell^\text{th}$ WNs are chosen uniformly randomly, and independent of each other for all $k,\ell \in [K]$ and $k \neq \ell$.
\begin{align*}
  & \mathbb{E} \bigg[ \bigg\langle \sum_{\xi_1^{(k)} \in \mathcal{B}_1^{(k)}} \big( \nabla f^{(k)}({x}_1^{(k)}; \xi_1^{(k)}) - \nabla f^{(k)}(x_1^{(k)}) \big), \sum_{\xi_1^{(\ell)} \in \mathcal{B}_1^{(\ell)}} \big( \nabla f^{(\ell)}(x_1^{(\ell)}; \xi_1^{(\ell)}) - \nabla f^{(\ell)}(\bar{x}_1) \big) \bigg\rangle \Bigg] \\
  &    =  \mathbb{E} \bigg[\bigg\langle \sum_{\xi_1^{(k)} \in \mathcal{B}_1^{(k)}} \underbrace{\mathbb{E} \big[  \nabla f^{(k)}({x}_1^{(k)}; \xi_1^{(k)}) - \nabla f^{(k)}({x}_1^{(k)}) \Big| \mathcal{F}_t \big]}_{=0} , \sum_{\xi_1^{(\ell)} \in \mathcal{B}_1^{(\ell)}}\underbrace{\mathbb{E} \big[ \nabla f^{(\ell)}({x}_1^{(\ell)}; \xi_1^{(\ell)}) - \nabla f^{(\ell)}(x_1^{(\ell)})  \Big| \mathcal{F}_t \big]}_{=0}\bigg\rangle \bigg] \\
  &  = 0. 
\end{align*}
The equality $(c)$ follows from the fact that the samples $\xi_1^{(k)} \in \mathcal{B}_1^{(k)}$ for all $k \in [K]$ are chosen independently of each other. Then we conclude $(c)$ from an argument similar to that of $(b)$. Finally, $(d)$ results from the intra-node variance bound given in Assumption \ref{Ass: Unbiased_Var_Grad}.

Hence, the lemma is proved. 
\end{proof}

\begin{algorithm}[t]
\caption{The Stochastic Two-Sided Momentum (\aname) Algorithm}
\label{Algo_DR-STORM_batch_App}
\begin{algorithmic}[1]
	\State{\textbf{Input}: Parameters: %$\bar{\kappa}$, $\{w_t\}_{t=0}^{T}$ and 
	$c>0$,  the number of local updates $I$, batch size $b$, stepsizes $\{\eta_t\}$.}
	\State{\textbf{Initialize}:  Iterate $x_1^{(k)} = \bar{x}_1 = \frac{1}{K} \sum_{k = 1}^K x_1^{(k)}$, descent direction $d_1^{(k)} = \bar{d}_1 = \frac{1}{K} \sum_{k = 1}^K d_1^{(k)}$ with $d_1^{(k)} = \frac{1}{B} \sum_{\xi_1^{(k)} \in \mathcal{B}^{(k)}_1} \nabla f^{(k)}(x_1^{(k)} ; \xi_1^{(k)})$  and $|\mathcal{B}^{(k)}_1| = B$ for $k \in [K]$.} \\
	{Perform: $x^{(k)}_2 = x^{k}_1 - \eta_1  d^{(k)}_{1}, \; \forall~k\in[K]$}%step size $\eta_0 = \frac{\bar{\kappa}}{w_0^{1/3}}$.}
	\For{$t = 1$ to $T$}
    	\For{$k = 1$ to $K$} \qquad\qquad\qquad\qquad\qquad\qquad \qquad \qquad ~~\quad \# at the WN
   % 	\State{$ \eta_{t} = \frac{\bar{\kappa}}{(w_t + \sigma^2 t )^{1/3}}$} {\red[since none of the parameters are chosen yet, I suggest remove this.]}
        	\State{$\!\!\! \displaystyle d_{t+1}^{(k)} = \frac{1}{b} \!\!\!\sum_{\xi_{t+1}^{(k)} \in \mathcal{B}_{t+1}^{(k)}} \!\!\!\!\!\! \nabla f^{(k)}(x_{t+1}^{(k)} ; \xi_{t+1}^{(k)}) +  (1 - a_{t+1})      \Big(  d_{t}^{(k)}   -   \frac{1}{b} \!\!\! \sum_{\xi_{t+1}^{(k)} \in \mathcal{B}_{t+1}^{(k)}}\!\!\! \!\!\!\nabla f^{(k)}(x_{t}^{(k)} ; \xi_{t+1}^{(k)}) \Big)$ \text{with}~$|\mathcal{B}_{t+1}^{(k)}| = b$, $a_{t+1}\! =\! c \eta_{t}^2$}
            %\State{$a_{t+1} = c \eta_{t}^2$}
        	\State{{\bf if} $t~\text{mod}~I = 0$ {\bf then} ~~\quad\qquad\qquad\quad\quad\quad \qquad\quad\quad\quad\quad\quad\# at the SN}
        	        	\State{\quad $ d_{t + 1}^{(k)} = \bar{d}_{t + 1} \coloneqq \frac{1}{K} \sum_{k=1}^K d_{t + 1}^{(k)}$}
        			\State{\quad $ x_{t + 2}^{(k)}   \coloneqq  \bar{x}_{t + 1} - \eta_{t+1} \bar{d}_{t+1} =\frac{1}{K} \sum_{k=1}^K x_{t + 1}^{(k)} - \eta_{t+1} \bar{d}_{t+1}$~ \# server-side momentum step} 

        	\State{{\bf else} $ x_{t+2}^{(k)} =  x_{t+1}^{(k)} - \eta_{t+1} d_{t+1}^{(k)}$ \qquad\qquad\qquad~\quad\quad\quad\quad\quad   \# worker-side momentum step}
        	\State{\bf end if} 
        	
        	     %   	\State{\blue $ x_{t+2}^{(k)} =  x_{t+1}^{(k)} - \eta_{t+1} d_{t+1}^{(k)}$ \quad\quad\quad\quad\quad  \# worker-side momentum step}
        	        	
	    \EndFor
	\EndFor
\State{{\bf Return:} $\bar{x}_a$ chosen uniformly randomly from $\{\bar{x}_t\}_{t=1}^T$}	
\end{algorithmic}
\end{algorithm}

Next, using the preliminary lemmas developed in this section we prove the main results of the work.

\subsection{Proof of Main Results: \aname}	
\label{App: Proof_MainResults}
In this section, we utilize the results developed in earlier sections to derive the main result of the paper presented in Section \ref{Sec: Convergence}. Throughout the section we assume Assumptions \ref{Ass: Lip_Smoothness} and \ref{Ass: Unbiased_Var_Grad} to hold. Before proceeding, we first define some notations. 

We define $\bar{t}_s \coloneqq sI + 1$ with $s \in [S]$. Note from Algorithm \ref{Algo_DR-STORM_batch_App} that at $(s \times I)^\text{th}$ iteration, i.e., when $t~ \text{mod}~ I = 0$, the descent directions, $\{d_t^{(k)}\}_{k = 1}^K$, corresponding to $t = (\bar{t}_s)^\text{th}$ time instant are shared with the SN. At the same time instant, the iterates, $\{x_t^{(k)}\}_{k = 1}^K$ are also shared and the SN performs the ``server side momentum step'' (cf. Step 9 of Algorithm \ref{Algo_DR-STORM_batch_App}).

\subsubsection{Proof of Descent Lemma}
In the first step, we bound the error accumulation via the iterates generated by Algorithm \ref{Algo_DR-STORM_batch_App}. 

\begin{lem}[Error Accumulation from Iterates]
\label{lem: ErrorAccumulation_Batch_App}
For each $t \in [\bar{t}_{s-1},  \bar{t}_s - 1]$ and $s \in [S]$, the iterates $x_t^{(k)}$ for each $k \in [K]$ generated from Algorithm \ref{Algo_DR-STORM_batch_App} satisfy:
\begin{align*}
\sum_{k = 1}^K \mathbb{E}\| x_t^{(k)}-  \bar{x}_t \|^2 \leq {(I - 1)} \sum_{\ell = \bar{t}_{s-1}}^{t} \eta_\ell^2 \sum_{k = 1}^K \mathbb{E}\|  d_\ell^{(k)} - \bar{d}_\ell  \|^2, 
\end{align*}
where the expectation is w.r.t the stochasticity of the algorithm.
\end{lem}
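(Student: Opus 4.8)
The plan is to derive a closed-form expression for the consensus error $x_t^{(k)} - \bar{x}_t$ across a single communication round and then control its squared norm with one Cauchy--Schwarz (Jensen) step. First I would fix $s$ and restrict attention to the interval $[\bar{t}_{s-1}, \bar{t}_s - 1]$, which contains exactly $I$ time instants. The crucial structural observation is that, immediately after the server aggregation (Steps 7--9 of Algorithm \ref{Algo_DR-STORM_batch_App}), a round begins in a fully synchronized state: the SN averages \emph{both} the local iterates and the local directions, so at $t = \bar{t}_{s-1}$ we have $x_{\bar{t}_{s-1}}^{(k)} = \bar{x}_{\bar{t}_{s-1}}$ and $d_{\bar{t}_{s-1}}^{(k)} = \bar{d}_{\bar{t}_{s-1}}$ for every $k$. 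In particular the consensus error vanishes at the start of the round.

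Next I would unroll the worker-side recursion. Within the round the update is $x_{t+1}^{(k)} = x_t^{(k)} - \eta_t d_t^{(k)}$, and averaging over $k$ gives $\bar{x}_{t+1} = \bar{x}_t - \eta_t \bar{d}_t$; subtracting yields the one-step consensus recursion $x_{t+1}^{(k)} - \bar{x}_{t+1} = (x_t^{(k)} - \bar{x}_t) - \eta_t (d_t^{(k)} - \bar{d}_t)$. Because the round starts synchronized, telescoping from $\bar{t}_{s-1}$ produces
\begin{align*}
x_t^{(k)} - \bar{x}_t = - \sum_{\ell = \bar{t}_{s-1}}^{t-1} \eta_\ell\, \big(d_\ell^{(k)} - \bar{d}_\ell\big).
\end{align*}

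I would then apply the elementary bound $\|\sum_{i=1}^m v_i\|^2 \le m \sum_{i=1}^m \|v_i\|^2$ to this sum. The number of summands is $t - \bar{t}_{s-1}$, which is at most $(\bar{t}_s - 1) - \bar{t}_{s-1} = I - 1$ for every $t$ in the round; this is precisely where the prefactor $(I-1)$ originates. Summing over $k$, taking expectations, and finally enlarging the summation range from $t-1$ up to $t$ (which only adds a nonnegative term) yields the claimed inequality.

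The main obstacle is the index bookkeeping rather than any delicate estimate: one must verify carefully that each round starts from a consensus-free state and that the server momentum step is consistent with the worker recursion, so that the telescoping remains clean across the aggregation boundary (the synchronization of directions at $\bar{t}_{s-1}$ additionally kills the first summand). Once the synchronization at $\bar{t}_{s-1}$ is pinned down and the count of at most $I-1$ local steps is justified, the remaining work is the routine Jensen bound.
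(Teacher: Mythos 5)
Your proposal is correct and takes essentially the same route as the paper's own proof: both exploit the round-start synchronization $x_{\bar{t}_{s-1}}^{(k)} = \bar{x}_{\bar{t}_{s-1}}$ and $d_{\bar{t}_{s-1}}^{(k)} = \bar{d}_{\bar{t}_{s-1}}$, telescope the worker recursion to obtain $x_t^{(k)} - \bar{x}_t = -\sum_{\ell = \bar{t}_{s-1}}^{t-1} \eta_\ell \big( d_\ell^{(k)} - \bar{d}_\ell \big)$, apply the bound $\big\| \sum_{i=1}^{m} v_i \big\|^2 \leq m \sum_{i=1}^{m} \| v_i \|^2$ with $m = t - \bar{t}_{s-1} \leq I-1$, and finally enlarge the summation range from $t-1$ to $t$. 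Your remark that the server momentum step is consistent with the worker recursion at the aggregation boundary (so the telescoping stays clean) is exactly the observation the paper uses when it invokes $d_t^{(k)} = \bar{d}_t$ at $t = \bar{t}_{s-1}$.
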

\begin{proof}
Note from Algorithm \ref{Algo_DR-STORM_batch_App} and the definition of $\bar{t}_s$ that at $t = \bar{t}_{s - 1}$ with $s \in [S]$, $x_{t}^{(k)} = \bar{x}_{t}$, for all $k$. 
This implies 
$$\sum_{k = 1}^K \| x_{\bar{t}_{s-1}}^{(k)} - \bar{x}_{\bar{t}_{s-1}} \|^2 = 0.$$ 
Therefore, the statement of the lemma holds trivially. 
Moreover, for $t \in [\bar{t}_{s-1} + 1,  \bar{t}_s - 1]$, with $s \in [S]$, we have from Algorithm \ref{Algo_DR-STORM_batch_App}: $x_{t}^{(k)} = x_{t-1}^{(k)} - \eta_{t-1} d_{t-1}^{(k)}$, this implies that:
\begin{align*}
    x_t^{(k)} = x_{\bar{t}_{s-1}}^{(k)} - \sum_{\ell = \bar{t}_{s-1}}^{t-1} \eta_\ell d_\ell^{(k)} \quad \text{and} \quad \bar{x}_{t}  = \bar{x}_{\bar{t}_{s-1}}  - \sum_{\ell = \bar{t}_{s-1}}^{t-1} \eta_\ell \bar{d}_\ell.
\end{align*}
This implies that for $t \in [\bar{t}_{s-1} + 1,  \bar{t}_s - 1]$, with $s \in [S]$ we have
\begin{align*}
  \sum_{k = 1}^K  \| x_t^{(k)}-  \bar{x}_t \|^2 & =  \sum_{k = 1}^K \Big\| x_{\bar{t}_{s-1}}^{(k)} - \bar{x}_{\bar{t}_{s-1}}  - \Big( \sum_{\ell = \bar{t}_{s-1}}^{t-1} \eta_\ell d_\ell^{(k)} -   \sum_{\ell =  \bar{t}_{s-1}}^{t-1} \eta_\ell \bar{d}_\ell  \Big) \Big\|^2 \\
  & \overset{(a)}{=} \sum_{k = 1}^K \Big\|  \sum_{\ell = \bar{t}_{s-1}}^{t-1} \big( \eta_\ell d_\ell^{(k)} -     \eta_\ell \bar{d}_\ell  \big) \Big\|^2  \\
  & \overset{(b)}{\leq} {(I - 1)} \sum_{\ell = \bar{t}_{s-1}}^{t-1} \eta_\ell^2 \sum_{k = 1}^K \|   d_\ell^{(k)} -       \bar{d}_\ell  \|^2\\
  & \leq {(I - 1)} \sum_{\ell = \bar{t}_{s-1}}^{t} \eta_\ell^2 \sum_{k = 1}^K \|   d_\ell^{(k)} -       \bar{d}_\ell  \|^2,
\end{align*}
where the equality $(a)$ follows from the fact that $x_{\bar{t}_{s-1}}^{(k)} = \bar{x}_{\bar{t}_{s-1}}$ and inequality $(b)$ uses the Lemma \ref{Lem: Norm_Ineq} {along with the fact that we have $d_t^{(k)} = \bar{d}_t$ for $t = \bar{t}_{s - 1}$}. 

Taking expectation on both sides yields the statement of the lemma.
\end{proof}

Next, we utilize Lemma \ref{lem: ErrorAccumulation_Batch_App} along with the smoothness of the function $f(\cdot)$ (Assumption \ref{Ass: Lip_Smoothness}) to  show descent in the objective function value at consecutive iterates. 
\begin{lem}[Descent Lemma]
\label{lem: Descent_Batch_App}
With $\bar{e}_t \coloneqq \bar{d}_t - \frac{1}{K} \sum_{k = 1}^K \nabla f^{(k)}(x_t^{(k)})$, for all $t \in [\bar{t}_{s-1}, \bar{t}_s - 1]$ and $s \in [S]$, the iterates generated by Algorithm \ref{Algo_DR-STORM_batch_App} satisfy:
\begin{align*}
    \mathbb{E} f(\bar{x}_{t + 1}) & \leq      \mathbb{E} f(\bar{x}_{t })   -  \left( \frac{\eta_t}{2} - \frac{\eta_t^2 L}{2} \right)     \mathbb{E} \| \bar{d}_{t}  \|^2 - \frac{\eta_t}{2}     \mathbb{E}\|\nabla f(\bar{x}_t) \|^2  + \eta_t      \mathbb{E} \|\bar{e}_t \|^2 \\
    &  \qquad \qquad \qquad \qquad \qquad    \qquad \qquad \qquad +   \frac{\eta_t L^2 {(I - 1)}}{K} \sum_{\ell = \bar{t}_{s-1}}^{t} \eta_\ell^2 \sum_{k=1}^K     \mathbb{E} \|d_\ell^{(k)} - \bar{d}_\ell\|^2,
\end{align*}
where the expectation is w.r.t the stochasticity of the algorithm.
\end{lem}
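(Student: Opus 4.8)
The plan is to run the standard smoothness-based descent argument on the \emph{averaged} iterate $\bar{x}_t$, and then convert the resulting cross term into the quantities appearing in the statement. The first thing I would establish is the clean averaged recursion $\bar{x}_{t+1} = \bar{x}_t - \eta_t \bar{d}_t$. This should hold uniformly over both kinds of updates in Algorithm \ref{Algo_DR-STORM_batch_App}: on a worker-side step $x_{t+1}^{(k)} = x_t^{(k)} - \eta_t d_t^{(k)}$, averaging over $k$ gives $\bar{x}_{t+1} = \bar{x}_t - \eta_t \bar{d}_t$; on a server-side step each node is reset to $\bar{x}_t - \eta_t\bar{d}_t$, so the average again equals $\bar{x}_t - \eta_t\bar{d}_t$. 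Since $f = \frac{1}{K}\sum_k f^{(k)}$ inherits $L$-Lipschitz smoothness of its gradient from Assumption \ref{Ass: Lip_Smoothness} (by Jensen, $\|\nabla f^{(k)}(x) - \nabla f^{(k)}(y)\|^2 = \|\mathbb{E}[\nabla f^{(k)}(x;\xi) - \nabla f^{(k)}(y;\xi)]\|^2 \le L^2\|x-y\|^2$), I can apply the descent inequality
\begin{align*}
f(\bar{x}_{t+1}) \le f(\bar{x}_t) - \eta_t \langle \nabla f(\bar{x}_t), \bar{d}_t \rangle + \frac{\eta_t^2 L}{2} \|\bar{d}_t\|^2 .
\end{align*}

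Next I would split the inner product using the polarization identity $-\langle a, b\rangle = \tfrac12(\|a-b\|^2 - \|a\|^2 - \|b\|^2)$ with $a = \nabla f(\bar{x}_t)$ and $b = \bar{d}_t$, which produces $-\tfrac{\eta_t}{2}\|\nabla f(\bar{x}_t)\|^2 - \tfrac{\eta_t}{2}\|\bar{d}_t\|^2 + \tfrac{\eta_t}{2}\|\nabla f(\bar{x}_t) - \bar{d}_t\|^2$. Merging the two $\|\bar{d}_t\|^2$ contributions yields the coefficient $\big(\tfrac{\eta_t}{2} - \tfrac{\eta_t^2 L}{2}\big)$ exactly as claimed, and the gradient term $-\tfrac{\eta_t}{2}\|\nabla f(\bar{x}_t)\|^2$ also appears directly. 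The remaining work is entirely in controlling the mismatch $\|\nabla f(\bar{x}_t) - \bar{d}_t\|^2$. Writing $\nabla f(\bar{x}_t) = \frac{1}{K}\sum_k \nabla f^{(k)}(\bar{x}_t)$ and recalling $\bar{e}_t = \bar{d}_t - \frac{1}{K}\sum_k \nabla f^{(k)}(x_t^{(k)})$, I decompose
\begin{align*}
\nabla f(\bar{x}_t) - \bar{d}_t = \frac{1}{K}\sum_{k=1}^K \big( \nabla f^{(k)}(\bar{x}_t) - \nabla f^{(k)}(x_t^{(k)}) \big) - \bar{e}_t ,
\end{align*}
and then apply $\|u - v\|^2 \le 2\|u\|^2 + 2\|v\|^2$, so that the $\tfrac{\eta_t}{2}$ prefactor turns the $\bar{e}_t$ contribution into the advertised $\eta_t \mathbb{E}\|\bar{e}_t\|^2$ term.

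For the consensus part I would use Jensen on the average, followed by the (averaged-gradient) Lipschitz bound, to get $\big\|\frac{1}{K}\sum_k (\nabla f^{(k)}(\bar{x}_t) - \nabla f^{(k)}(x_t^{(k)}))\big\|^2 \le \frac{L^2}{K}\sum_k \|\bar{x}_t - x_t^{(k)}\|^2$; after the factor of $2$ and the $\tfrac{\eta_t}{2}$ this contributes $\tfrac{\eta_t L^2}{K}\sum_k \|\bar{x}_t - x_t^{(k)}\|^2$. Taking expectations and invoking the Error Accumulation bound of Lemma \ref{lem: ErrorAccumulation_Batch_App}, namely $\sum_k \mathbb{E}\|x_t^{(k)} - \bar{x}_t\|^2 \le (I-1)\sum_{\ell = \bar{t}_{s-1}}^{t} \eta_\ell^2 \sum_k \mathbb{E}\|d_\ell^{(k)} - \bar{d}_\ell\|^2$, reproduces the last term of the statement verbatim. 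I do not expect a genuine obstacle here, since every step is a standard manipulation; the only points requiring care are verifying that the averaged recursion $\bar{x}_{t+1} = \bar{x}_t - \eta_t\bar{d}_t$ is valid across the worker/server case split for $t \in [\bar{t}_{s-1}, \bar{t}_s - 1]$, and keeping the constants from the two successive applications of $\|u-v\|^2 \le 2\|u\|^2 + 2\|v\|^2$ and Jensen consistent so that the $\tfrac{\eta_t}{2}$ factors combine into exactly the coefficients stated.
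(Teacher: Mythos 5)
Your proposal is correct and follows essentially the same route as the paper's proof: smoothness applied to the averaged recursion $\bar{x}_{t+1} = \bar{x}_t - \eta_t\bar{d}_t$, the inner-product identity to extract the $-\big(\tfrac{\eta_t}{2} - \tfrac{\eta_t^2 L}{2}\big)\mathbb{E}\|\bar{d}_t\|^2$ and $-\tfrac{\eta_t}{2}\mathbb{E}\|\nabla f(\bar{x}_t)\|^2$ terms, the split of $\|\nabla f(\bar{x}_t)-\bar{d}_t\|^2$ into $\bar{e}_t$ plus a consensus term via $\|u-v\|^2 \le 2\|u\|^2+2\|v\|^2$, Jensen with Assumption \ref{Ass: Lip_Smoothness}, and finally Lemma \ref{lem: ErrorAccumulation_Batch_App}. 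Your only deviations are cosmetic — applying the polarization identity directly rather than first adding and subtracting $\bar{d}_t$, and explicitly verifying the averaged recursion across the worker/server case split (a point the paper uses implicitly) — so there is nothing to correct.
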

\begin{proof}
Using the smoothness of $f$ (Assumption \ref{Ass: Lip_Smoothness}) we have:
\begin{align}
    f(\bar{x}_{t + 1}) & \leq f(\bar{x}_{t }) + \langle \nabla f(\bar{x}_{t}),  \bar{x}_{t + 1} - \bar{x}_{t}\rangle + \frac{L}{2} \| \bar{x}_{t + 1} - \bar{x}_{t } \|^2 \nonumber\\
    &  \overset{(a)}{=} f(\bar{x}_{t}) - \eta_t \langle \nabla f(\bar{x}_{t}),  \bar{d}_t \rangle + \frac{\eta_t^2 L}{2} \| \bar{d}_{t}  \|^2 \nonumber\\
     & \overset{(b)}{=} f(\bar{x}_{t}) - \eta_t  \| \bar{d}_{t}  \|^2 + \eta_t \langle \bar{d}_t - \nabla f(\bar{x}_{t}),  \bar{d}_t \rangle + \frac{\eta_t^2 L}{2} \| \bar{d}_{t}  \|^2 \nonumber\\
       & \overset{(c)}{=}     f(\bar{x}_{t }) -  \left( \frac{\eta_t}{2} - \frac{\eta_t^2 L}{2} \right)  \| \bar{d}_{t}  \|^2 - \frac{\eta_t}{2} \|\nabla f(\bar{x}_t) \|^2 + \frac{\eta_t}{2}  \| \bar{d}_t - \nabla f(\bar{x}_{t}) \|^2  \nonumber \\  
        & \overset{(d)}{\leq}     f(\bar{x}_{t }) -  \left( \frac{\eta_t}{2} - \frac{\eta_t^2 L}{2} \right)  \| \bar{d}_{t}  \|^2 - \frac{\eta_t}{2} \|\nabla f(\bar{x}_t) \|^2  + \eta_t  \bigg\| \bar{d}_t - \frac{1}{K}\sum_{k=1}^K  \nabla f^{(k)}(x^{(k)}_{t}) \bigg\|^2  \nonumber \\
        & \qquad \qquad \qquad \quad \qquad    \qquad \qquad  \quad   + \eta_t  \bigg\| \frac{1}{K}\sum_{k=1}^K \big(\nabla f^{(k)}(x^{(k)}_{t}) - \nabla f^{(k)}(\bar{x}_t) \big) \bigg\|^2,    \label{eq: DR_Smoothness_1st_Batch}
\end{align}
where equality $(a)$ follows from the iterate update given in Step 10 of Algorithm \ref{Algo_DR-STORM_batch_App}, $(b)$ results by adding and subtracting $\bar{d}_t$ to $\nabla f(\bar{x}_t)$ in the inner product term and using the linearity of the inner product, $(c)$ follows from the relation $\langle x , y \rangle = \frac{1}{2} \|x\|^2 + \frac{1}{2} \|y\|^2 - \frac{1}{2} \|x - y\|^2$, finally inequality $(d)$ results from adding and subtracting $\frac{1}{K}\sum_{k=1}^K  \nabla f^{(k)}(x^{(k)}_{t})$ in the last term of $(c)$ and using Lemma \ref{Lem: Norm_Ineq}.

Taking expectation on both sides and considering the last term of \eqref{eq: DR_Smoothness_1st_Batch}, we have 
\begin{align}
      \mathbb{E}  \bigg\| \frac{1}{K}\sum_{k=1}^K \big(\nabla f^{(k)}(x^{(k)}_{t}) - \nabla f^{(k)}(\bar{x}_t) \big) \bigg\|^2 & \leq \frac{1}{K} \sum_{k=1}^K     \mathbb{E}\big\| \nabla f^{(k)}(x^{(k)}_{t}) - \nabla f^{(k)}(\bar{x}_t)  \big\|^2 \nonumber\\
    & \leq \frac{L^2}{K} \sum_{k=1}^K     \mathbb{E} \|x_t^{(k)} - \bar{x}_t\|^2,
    \label{eq: DR_Gradient_Ineq_Batch}
\end{align}
where the first inequality follows from Lemma \ref{Lem: Norm_Ineq}, and the second follows from the $L$-smoothness of $f^{(k)} (\cdot)$ (Assumption \ref{Ass: Lip_Smoothness}). 

Substituting \eqref{eq: DR_Gradient_Ineq_Batch} in \eqref{eq: DR_Smoothness_1st_Batch} and using the definition $\displaystyle \bar{e}_t \coloneqq  \bar{d}_t - \frac{1}{K} \sum_{k=1}^K \nabla f^{(k)}(x_t^{(k)}) $ we get:
\begin{align}
     \mathbb{E} f(\bar{x}_{t + 1})  & \leq      \mathbb{E}  f(\bar{x}_{t }) -  \left( \frac{\eta_t}{2} - \frac{\eta_t^2 L}{2} \right)     \mathbb{E} \| \bar{d}_{t}  \|^2 - \frac{\eta_t}{2}     \mathbb{E} \|\nabla f(\bar{x}_t) \|^2  + \eta_t      \mathbb{E} \|\bar{e}_t \|^2   +   \frac{\eta_t L^2}{K} \sum_{k=1}^K     \mathbb{E} \|x_t^{(k)} - \bar{x}_t\|^2.
  \label{eq: DR_Smoothness_2nd_batch}
\end{align}
Finally, using Lemma \ref{lem: ErrorAccumulation_Batch_App} to bound the last term of \eqref{eq: DR_Smoothness_2nd_batch}, we get:
\begin{align*}
         \mathbb{E} f(\bar{x}_{t + 1}) & \leq       \mathbb{E} f(\bar{x}_{t }) -  \left( \frac{\eta_t}{2} - \frac{\eta_t^2 L}{2} \right)      \mathbb{E} \| \bar{d}_{t}  \|^2 - \frac{\eta_t}{2}     \mathbb{E} \|\nabla f(\bar{x}_t) \|^2  + \eta_t     \mathbb{E} \|\bar{e}_t \|^2  
      +   \frac{\eta_t L^2{(I - 1)}}{K} \sum_{\ell = \bar{t}_{s-1}}^{t} \eta_\ell^2 \sum_{k=1}^K     \mathbb{E} \|d_\ell^{(k)} - \bar{d}_\ell\|^2.
\end{align*}
Hence, the lemma is proved.
\end{proof}

Lemma \ref{lem: Descent_Batch_App} shows that the expected descent in the function $f$ depends on the magnitude of the expected gradient error term $\bar{e}_t$, and the expected gradient drift across WNs, i.e., $\mathbb{E}\|d_\ell^{(k)} - \bar{d}_\ell \|^2$. This implies that to ensure sufficient descent we need to control the gradient error and the gradient drift across WNs. We achieve this by carefully designing the number of local updates, $I$, at each WN, and the batch-sizes $b$ (and initial batch size $B$), that each WN uses to compute the descent direction. 

Next, we present the error contraction lemma which analyzes how the term $\mathbb{E} \| \bar{e}_t \|^2$ contracts across time. 
\subsubsection{Proof of Gradient Error Contraction}

\begin{lem}[{Gradient Error Contraction}]
\label{lem: ErrorContraction_BatchGradients_App}
Define $\bar{e}_t \coloneqq \bar{d}_t - \frac{1}{K} \sum_{k = 1}^K \nabla f^{(k)}(x_t^{(k)})$, then for every $t \in [T]$ the iterates generated by Algorithm \ref{Algo_DR-STORM_batch_App} satisfy
\begin{align*}
    \mathbb{E} \| \bar{e}_{t+1} \|^2  & \leq (1 - a_{t+1})^2 \mathbb{E}\| \bar{e}_t\|^2 +  {\frac{8 (1 - a_{t+1})^2 L^2 }{bK^2}  \frac{(I - 1)}{I}  \eta_{t}^2}  \sum_{k=1}^K \mathbb{E} \big\| d_t^{(k)} - \bar{d}_t \big\|^2  \\
    & \qquad \qquad \qquad \qquad \qquad \qquad \qquad \qquad  \qquad +   \frac{4 (1 - a_{t+1})^2 L^2 \eta_t^2}{bK}\mathbb{E}\|  \bar{d}_t \|^2 + \frac{2a_{t+1}^2 \sigma^2}{bK},
\end{align*}
where the expectation is w.r.t the stochasticity of the algorithm.
\end{lem}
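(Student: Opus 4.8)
The plan is to convert the recursive (STORM-type) momentum update in Step 6 of Algorithm~\ref{Algo_DR-STORM_batch_App} into a one-step contraction for the averaged error $\bar{e}_t=\frac1K\sum_{k}\big(d_t^{(k)}-\nabla f^{(k)}(x_t^{(k)})\big)$. First I would record the algebraic identity underlying any variance-reduced momentum estimator. Writing $G_{t+1}^{(k)}=\frac1b\sum_{\xi\in\mathcal{B}_{t+1}^{(k)}}\nabla f^{(k)}(x_{t+1}^{(k)};\xi)$ and $H_{t}^{(k)}=\frac1b\sum_{\xi\in\mathcal{B}_{t+1}^{(k)}}\nabla f^{(k)}(x_{t}^{(k)};\xi)$ (the \emph{same} batch $\mathcal{B}_{t+1}^{(k)}$), the update rearranges to
\begin{align*}
d_{t+1}^{(k)}-\nabla f^{(k)}(x_{t+1}^{(k)}) &= (1-a_{t+1})\big(d_t^{(k)}-\nabla f^{(k)}(x_t^{(k)})\big)+a_{t+1}\big(G_{t+1}^{(k)}-\nabla f^{(k)}(x_{t+1}^{(k)})\big)\\
&\quad +(1-a_{t+1})\big[(G_{t+1}^{(k)}-\nabla f^{(k)}(x_{t+1}^{(k)}))-(H_t^{(k)}-\nabla f^{(k)}(x_t^{(k)}))\big].
\end{align*}
Averaging over $k$ (the consensus reset at a communication round leaves the average of the directions unchanged, so the identity survives at every $t$) yields $\bar{e}_{t+1}=(1-a_{t+1})\bar{e}_t+Z_{t+1}$, where $Z_{t+1}$ is the $k$-average of the last two brackets and is conditionally mean-zero given the filtration $\mathcal{F}_{t+1}$ by the unbiasedness in Assumption~\ref{Ass: Unbiased_Var_Grad}-(i).

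Next I would take $\mathbb{E}\|\cdot\|^2$ of this recursion. The cross term $\mathbb{E}\langle(1-a_{t+1})\bar{e}_t,\,Z_{t+1}\rangle$ vanishes: conditioned on $\mathcal{F}_{t+1}$ the factor $\bar{e}_t$ is fixed and the stochastic bracket averages to zero, which is precisely Lemma~\ref{Lem: InnerProduct_e_t_Grad}. This gives $\mathbb{E}\|\bar{e}_{t+1}\|^2=(1-a_{t+1})^2\mathbb{E}\|\bar{e}_t\|^2+\mathbb{E}\|Z_{t+1}\|^2$. Expanding $\mathbb{E}\|Z_{t+1}\|^2=K^{-2}\,\mathbb{E}\|\sum_k(\cdot)_k\|^2$, the across-node cross terms $\langle(\cdot)_k,(\cdot)_\ell\rangle$ with $k\neq\ell$ vanish by Lemma~\ref{Lem: InnerProd_AcrossNodes} (independent batches across WNs together with unbiasedness), leaving $\mathbb{E}\|Z_{t+1}\|^2=K^{-2}\sum_k\mathbb{E}\|(\cdot)_k\|^2$, a sum of per-node terms.

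I would then bound each per-node term with $\|u+v\|^2\le 2\|u\|^2+2\|v\|^2$. The $a_{t+1}$-scaled piece is a batch mean of $b$ i.i.d.\ zero-mean summands, so by the intra-node variance bound in Assumption~\ref{Ass: Unbiased_Var_Grad}-(ii) its second moment is at most $\sigma^2/b$; after the $1/K^2$ and the sum over $k$ this produces exactly the $2a_{t+1}^2\sigma^2/(bK)$ term. The $(1-a_{t+1})$-scaled piece is a batch mean of the centered increments $\nabla f^{(k)}(x_{t+1}^{(k)};\xi)-\nabla f^{(k)}(x_t^{(k)};\xi)$; using within-batch independence (variance $\le$ second moment) and the mean-squared smoothness in Assumption~\ref{Ass: Lip_Smoothness}, its second moment is at most $(L^2/b)\,\mathbb{E}\|x_{t+1}^{(k)}-x_t^{(k)}\|^2$.

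The displacement term is where I expect the real work to be. For a worker-side step the update gives $x_{t+1}^{(k)}-x_t^{(k)}=-\eta_t d_t^{(k)}$, and the orthogonal (Pythagorean) decomposition $\sum_k\|d_t^{(k)}\|^2=\sum_k\|d_t^{(k)}-\bar{d}_t\|^2+K\|\bar{d}_t\|^2$ (the cross term drops since $\sum_k(d_t^{(k)}-\bar{d}_t)=0$) cleanly separates the drift contribution $\sum_k\|d_t^{(k)}-\bar{d}_t\|^2$ from the $\|\bar{d}_t\|^2$ contribution. The obstacle is the single step immediately following each communication round, $t=\bar{t}_s$: there the directions have just been synchronized, so the drift $\sum_k\|d_t^{(k)}-\bar{d}_t\|^2$ is zero, yet $x_{t+1}^{(k)}-x_t^{(k)}=(\bar{x}_t-x_t^{(k)})-\eta_t\bar{d}_t$ now carries the consensus error $\bar{x}_t-x_t^{(k)}$ accumulated over the \emph{preceding} block. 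Controlling this error with the error-accumulation Lemma~\ref{lem: ErrorAccumulation_Batch_App} and amortizing over the length-$I$ block—in which only $I-1$ of the $I$ steps are worker steps—is what yields the displacement estimate $\sum_k\|x_{t+1}^{(k)}-x_t^{(k)}\|^2\le 4\tfrac{I-1}{I}\eta_t^2\sum_k\|d_t^{(k)}-\bar{d}_t\|^2+2K\eta_t^2\|\bar{d}_t\|^2$, and hence the factor $\tfrac{I-1}{I}$ and the enlarged constants $8$ and $4$. Substituting this, together with $a_{t+1}=c\eta_t^2$, into the per-node bounds and collecting terms gives the claimed contraction.
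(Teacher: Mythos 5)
Your proposal tracks the paper's proof almost verbatim up to the final step: the recursion $\bar{e}_{t+1}=(1-a_{t+1})\bar{e}_t+Z_{t+1}$, the vanishing cross term via Lemma \ref{Lem: InnerProduct_e_t_Grad}, the vanishing across-node terms via Lemma \ref{Lem: InnerProd_AcrossNodes}, within-batch independence, and the per-sample split into a $2a_{t+1}^2\sigma^2$ variance piece and a $2(1-a_{t+1})^2L^2\,\mathbb{E}\|x_{t+1}^{(k)}-x_t^{(k)}\|^2$ smoothness piece are exactly the paper's steps (your exact orthogonality identity $\sum_k\|d_t^{(k)}\|^2=\sum_k\|d_t^{(k)}-\bar{d}_t\|^2+K\|\bar{d}_t\|^2$ is in fact slightly tighter than the Young inequality the paper uses at the corresponding point). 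The genuine gap is your handling of the displacement at the step following a communication round. You read Step 9 as leaving a residual consensus error, $x_{t+1}^{(k)}-x_t^{(k)}=(\bar{x}_t-x_t^{(k)})-\eta_t\bar{d}_t$, and propose to control it by Lemma \ref{lem: ErrorAccumulation_Batch_App}, ``amortized over the length-$I$ block.'' This cannot work: the lemma being proved is a pointwise, per-iteration inequality, and at the post-communication step (under your reading) the drift $\sum_k\|d_t^{(k)}-\bar{d}_t\|^2$ on the right-hand side is exactly zero while your left-hand side would still contain $\sum_k\|\bar{x}_t-x_t^{(k)}\|^2>0$, so no per-step inequality of the stated form could hold there; and importing Lemma \ref{lem: ErrorAccumulation_Batch_App} would smuggle drift terms from earlier iterates of the block into a bound that, as stated, involves only time-$t$ quantities.

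The resolution is that no such residual exists. The server step first replaces the local model by the aggregate and only then takes the momentum step: $x_{t+2}^{(k)}=\bar{x}_{t+1}-\eta_{t+1}\bar{d}_{t+1}$ with both $x_{t+1}^{(k)}$ reset to $\bar{x}_{t+1}$ and $d_{t+1}^{(k)}$ reset to $\bar{d}_{t+1}$ --- this is precisely the convention the appendix relies on when it asserts $x_{\bar{t}_{s-1}}^{(k)}=\bar{x}_{\bar{t}_{s-1}}$ in Lemma \ref{lem: ErrorAccumulation_Batch_App}. Consequently $x_{t+1}^{(k)}-x_t^{(k)}=-\eta_t d_t^{(k)}$ at \emph{every} $t$, communication steps included, and the paper's proof needs no amortization and never invokes the error-accumulation lemma here. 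Relatedly, you misattribute the factor $(I-1)/I$: it does not come from ``only $I-1$ of the $I$ steps are worker steps,'' but from the trivial case split in the paper's step $(e)$ --- for $I=1$ one has $d_t^{(k)}=\bar{d}_t$ identically, so the drift term is absent, while for $I\geq 2$ one uses $(I-1)/I\geq 1/2$ to absorb the Young constant, writing $4\leq 8\,(I-1)/I$. With this corrected reading, your displacement estimate is immediate and the remainder of your argument goes through.
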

\begin{proof}
Consider the error term $\|\bar{e}_t\|^2$ as
\begin{align}
   & \mathbb{E} \| \bar{e}_t \|^2 = \mathbb{E} \bigg\| \bar{d}_t - \frac{1}{K} \sum_{k = 1}^K \nabla f^{(k)}(x^{(k)}_{t})  \bigg\|^2 \nonumber \\
    &   \overset{(a)}{=} \mathbb{E} \bigg\| \frac{1}{K} \sum_{k = 1}^K  \frac{1}{b} \!\! \sum_{\xi_t^{(k)} \in \mathcal{B}_t^{(k)}} \!\! \nabla f^{(k)}(x^{(k)}_{t};\xi_t^{(k)})     + (1 - a_t)\bigg( \bar{d}_{t-1} - \frac{1}{K} \sum_{k = 1}^K  \frac{1}{b} \sum_{\xi_t^{(k)} \in \mathcal{B}_t^{(k)}} \nabla f^{(k)}(x^{(k)}_{t - 1}; \xi_t^{(k)})\bigg) \nonumber\\
    & \qquad \qquad \qquad \qquad \qquad \qquad \qquad \qquad  \qquad \qquad \qquad \qquad \quad \qquad \qquad  - \frac{1}{K} \sum_{k = 1}^K \nabla f^{(k)}(x^{(k)}_{t})  \bigg\|^2 \nonumber \\
    & \overset{(b)}{=} \mathbb{E} \bigg\| \frac{1}{K} \sum_{k = 1}^K  \frac{1}{b} \sum_{\xi_t^{(k)} \in \mathcal{B}_t^{(k)}} \bigg[\left( \nabla f^{(k)}(x^{(k)}_{t};\xi_t^{(k)})  -  \nabla f^{(k)}(x^{(k)}_{t}) \right) \nonumber \\
    & \qquad \qquad \qquad \qquad \qquad \qquad   - (1 - a_t) \Big( \nabla f^{(k)}(x^{(k)}_{t - 1}; \xi_t^{(k)}) - \nabla f^{(k)}(x^{(k)}_{t - 1})\Big) \bigg] + (1 - a_t) \bar{e}_{t-1}   \bigg\|^2, \nonumber
    \end{align}
    where $(a)$ follows from the definition of descent direction given in Step 6 of Algorithm \ref{Algo_DR-STORM_batch_App}; $(b)$ follows by adding and subtracting $(1 - a_t) \frac{1}{K} \sum_{k=1}^K \nabla f^{(k)}(x_{t-1}^{(k)})$ and using the definition of $\bar{e}_{t-1}$. Further simplifying the above expression, we get
    \begin{align}
     \mathbb{E} \| \bar{e}_t \|^2   & \overset{(c)}{=} (1 - a_t)^2 \mathbb{E}\| \bar{e}_{t-1}\|^2    + \frac{1}{b^2 K^2 }\mathbb{E}\bigg\|\sum_{k = 1}^K \sum_{\xi_t^{(k)} \in \mathcal{B}_t^{(k)}}\Big[\left( \nabla f^{(k)}(x^{(k)}_{t};\xi_t^{(k)})  -  \nabla f^{(k)}(x^{(k)}_{t}) \right)  \nonumber\\
   & \qquad \qquad \qquad \qquad \qquad \qquad \qquad \qquad   - (1 - a_t) \left( \nabla f^{(k)}(x^{(k)}_{t - 1}; \xi_t^{(k)}) - \nabla f^{(k)}(x^{(k)}_{t - 1})\right) \Big] \bigg\|^2 \nonumber \\
    & \overset{(d)}{=} (1 - a_t)^2 \mathbb{E}\| \bar{e}_{t-1}\|^2    + \frac{1}{b^2 K^2 } \sum_{k = 1}^K  \mathbb{E} \bigg\|  \sum_{\xi_t^{(k)} \in \mathcal{B}_t^{(k)}} \Big[ \Big( \nabla f^{(k)}(x^{(k)}_{t};\xi_t^{(k)})  -  \nabla f^{(k)}(x^{(k)}_{t}) \Big)  \nonumber\\
    & \qquad \qquad \qquad \qquad \qquad \qquad \qquad \qquad   - (1 - a_t) \Big( \nabla f^{(k)}(x^{(k)}_{t - 1}; \xi_t^{(k)}) - \nabla f^{(k)}(x^{(k)}_{t - 1})\Big) \Big]  \bigg\|^2, \nonumber \\
    & \overset{(e)}{=} (1 - a_t)^2 \mathbb{E}\| \bar{e}_{t-1}\|^2    + \frac{1}{b^2 K^2 } \sum_{k = 1}^K \sum_{\xi_t^{(k)} \in \mathcal{B}_t^{(k)}} \mathbb{E}\Big\| \Big( \nabla f^{(k)}(x^{(k)}_{t};\xi_t^{(k)})  -  \nabla f^{(k)}(x^{(k)}_{t}) \Big)  \nonumber\\
    &\qquad \qquad \qquad \qquad\qquad \qquad \qquad \qquad   - (1 - a_t) \Big( \nabla f^{(k)}(x^{(k)}_{t - 1}; \xi_t^{(k)}) - \nabla f^{(k)}(x^{(k)}_{t - 1})\Big)  \Big\|^2, \label{eq_bound_grad_error_norm}
   \end{align}
   where $(c)$ results from expanding the norm using inner product and noting that the cross terms are zero in expectation from Lemma \ref{Lem: InnerProduct_e_t_Grad}; $(d)$ follows from expanding the norm using the inner products across $k \in [K]$ and noting that the cross term is zero in expectation from Lemma \ref{Lem: InnerProd_AcrossNodes}; finally, $(e)$ results from expanding the norm using the inner product across samples used to compute the minibatch gradients and the inner product is zero since at each node $k \in [K]$, the samples in the minibatch, $\xi_t^{(k)} \in \mathcal{B}_t^{(k)}$, are sampled independently of each other. 

Now considering the 2nd term of \eqref{eq_bound_grad_error_norm} above, we have
\begin{align}
  &    \mathbb{E}\big\| \big( \nabla f^{(k)}(x^{(k)}_{t};\xi_t^{(k)})  -  \nabla f^{(k)}(x^{(k)}_{t}) \big)    - (1 - a_t) \big( \nabla f^{(k)}(x^{(k)}_{t - 1}; \xi_t^{(k)}) - \nabla f^{(k)}(x^{(k)}_{t - 1})\big)  \big\|^2 \nonumber \\
    &   \qquad    =   \mathbb{E}\big\| (1 -a_t) \big[ \big( \nabla f^{(k)}(x^{(k)}_{t};\xi_t^{(k)})  -  \nabla f^{(k)}(x^{(k)}_{t}) \big)    - \big( \nabla f^{(k)}(x^{(k)}_{t - 1}; \xi_t^{(k)}) - \nabla f^{(k)}(x^{(k)}_{t - 1})\big) \big] \nonumber \\
    & \qquad \qquad \qquad \qquad \qquad \qquad  \qquad \qquad \qquad \qquad \qquad \qquad  + a_t   \big( \nabla f^{(k)}(x^{(k)}_{t}; \xi_t^{(k)}) - \nabla f^{(k)}(x^{(k)}_{t})\big) \big\|^2 \nonumber \\
    &  \qquad   \overset{(a)}{\leq} 2 (1 - a_t)^2  \mathbb{E} \big\| \big( \nabla f^{(k)}(x^{(k)}_{t};\xi_t^{(k)})  -  \nabla f^{(k)}(x^{(k)}_{t - 1}; \xi_t^{(k)}) \big)    -  \big(\nabla f^{(k)}(x^{(k)}_{t}) - \nabla f^{(k)}(x^{(k)}_{t - 1}) \big) \big\|^2  \nonumber \\
    & \qquad \qquad \qquad \qquad \qquad \qquad \quad  \qquad \qquad \qquad \qquad  \qquad   + 2 a_t^2  \mathbb{E}\big\|    \nabla f^{(k)}(x^{(k)}_{t}; \xi_t^{(k)}) - \nabla f^{(k)}(x^{(k)}_{t})    \big\|^2 \nonumber \\
    &   \qquad   \overset{(b)}{\leq}  2 (1 - a_t)^2   \mathbb{E} \big\|   \nabla f^{(k)}(x^{(k)}_{t};\xi_t^{(k)})  -  \nabla f^{(k)}(x^{(k)}_{t - 1}; \xi_t^{(k)}) \big\|^2 +  2a_t^2 \sigma^2 \nonumber \\
    & \qquad   \overset{(c)}{\leq}  2 (1 - a_t)^2 L^2    \mathbb{E}\| x_t^{(k)} - x_{t-1}^{(k)} \|^2 +  2a_t^2 \sigma^2 \nonumber \\
    &  \qquad   \overset{(d)}{\leq}  2 (1 - a_t)^2 L^2 \eta_{t-1}^2   \mathbb{E}\| d_{t-1}^{(k)} \|^2 +  2a_t^2 \sigma^2 \nonumber \\
      &  \qquad \overset{(e)}{\leq} { 8 (1 - a_t)^2 L^2 \frac{(I - 1)}{I} \eta_{t-1}^2   } \mathbb{E}\| d_{t-1}^{(k)} - \bar{d}_{t-1} \|^2 +    4 (1 - a_t)^2 L^2 \eta_{t-1}^2    \mathbb{E}\|  \bar{d}_{t-1} \|^2 +  2a_t^2 \sigma^2, \label{eq_bound_grad_error_norm_2}
\end{align}
where $(a)$ follows from Lemma \ref{Lem: Norm_Ineq}; $(b)$ results from use of Assumption \ref{Ass: Unbiased_Var_Grad} and mean variance inequality: For a random variable $Z$ we have $\mathbb{E} \|Z - \mathbb{E}[Z]\|^2 \leq \mathbb{E}\|Z\|^2$; $(c)$ follows from the Lipschitz continuity of the gradient given in Assumption \ref{Ass: Lip_Smoothness}; $(d)$ results from the iterate update equation given in Step 10 of Algorithm \ref{Algo_DR-STORM_batch_App}; {finally, $(e)$ uses the fact that: $(i)$ for $I = 1$ we have $d_t^{(k)} = \bar{d}_t$ for all $t \in [T]$ and $(ii)$ for $I \geq 2$ we use Lemma \ref{Lem: Norm_Ineq} and the fact that $(I - 1)/I \geq 1/2$.}

Substituting \eqref{eq_bound_grad_error_norm_2} in \eqref{eq_bound_grad_error_norm} we get:
\begin{align*}
    \mathbb{E} \| \bar{e}_t \|^2  & \leq (1 - a_t)^2 \mathbb{E}\| \bar{e}_{t-1}\|^2 +  {\frac{8 (1 - a_t)^2 L^2 }{b K^2} \frac{(I - 1)}{I} \eta_{t-1}^2} \sum_{k=1}^K \mathbb{E}\| d_{t-1}^{(k)} - \bar{d}_{t-1} \|^2 \\
    & \qquad \qquad \qquad \qquad \qquad \qquad \qquad \qquad \qquad \qquad  +   \frac{4 (1 - a_t)^2 L^2 \eta_{t-1}^2}{b K}\mathbb{E}\|  \bar{d}_{t-1} \|^2 + \frac{2a_t^2 \sigma^2}{b K}.
\end{align*}
Finally, the lemma is proved by replacing $t$ by $t + 1$. 
\end{proof}
Lemma \ref{lem: ErrorContraction_BatchGradients_App} shows that the gradient error contracts in each iteration. Next, we first define a potential function and then utilize Lemmas \ref{lem: Descent_Batch_App} and \ref{lem: ErrorContraction_BatchGradients_App} to show descent in the potential function.

\subsubsection{Descent in Potential Function}
We define the potential function as a linear combination of the objective function and the gradient estimation error: $\bar{e}_t \coloneqq \bar{d}_t - \frac{1}{K} \sum_{k = 1}^K \nabla f^{(k)}(x_t^{(k)})$ 
\begin{align}
\label{eq: Potential_Fn_Batch_App}
    \Phi_t \coloneqq f(\bar{x}_t) + \frac{b K}{64 L^2} \frac{\|\bar{e}_t \|^2}{\eta_{t-1}}.
\end{align}
Next, we characterize the descent in the potential function.
\begin{lem}[Potential Function Descent]
\label{Lem: Potential_Descent_App}
For $\bar{t} \in [\bar{t}_{s - 1} , \bar{t}_s - 1]$ and for {$\eta_t \leq \frac{1}{16LI}$} we have 
\begin{align*}
    \mathbb{E}[ \Phi_{\bar{t} + 1} - \Phi_{\bar{t}_{s-1}}] & \leq    - \sum_{t = \bar{t}_{s - 1}}^{\bar{t}} \left( \frac{7 \eta_t}{16} - \frac{\eta_t^2 L}{2} \right)  \mathbb{E} \| \bar{d}_{t}  \|^2   - \sum_{t = \bar{t}_{s - 1}}^{\bar{t}} \frac{\eta_t}{2} \mathbb{E} \|\nabla f(\bar{x}_t) \|^2 +  \frac{\sigma^2 c^2 }{32L^2 } \sum_{t = \bar{t}_{s - 1}}^{\bar{t}} \eta_{t}^3 \\
 &\qquad \qquad \qquad \qquad \qquad \qquad \qquad   + {\frac{33}{256 K} \frac{(I - 1)}{I}} \sum_{t = \bar{t}_{s - 1}}^{\bar{t}}  \eta_t  \sum_{k=1}^K \mathbb{E}\| d_{t}^{(k)} - \bar{d}_{t} \|^2
\end{align*}
where the expectation is w.r.t the stochasticity of the algorithm.
\end{lem}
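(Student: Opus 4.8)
The plan is to run a potential-function (Lyapunov) argument that fuses the one-step descent bound (Lemma~\ref{lem: Descent_Batch_App}) with the gradient-error contraction bound (Lemma~\ref{lem: ErrorContraction_BatchGradients_App}) at each $t$, and then sum the resulting per-step inequality telescopically over one communication round $t\in[\bar{t}_{s-1},\bar{t}]$. Writing $\alpha\coloneqq \frac{bK}{64L^2}$ so that $\Phi_t=f(\bar{x}_t)+\alpha\|\bar{e}_t\|^2/\eta_{t-1}$, I would first expand
\[
\mathbb{E}[\Phi_{t+1}-\Phi_t]=\mathbb{E}[f(\bar{x}_{t+1})-f(\bar{x}_t)]+\alpha\Big(\tfrac{\mathbb{E}\|\bar{e}_{t+1}\|^2}{\eta_t}-\tfrac{\mathbb{E}\|\bar{e}_t\|^2}{\eta_{t-1}}\Big),
\]
bound the first difference by Lemma~\ref{lem: Descent_Batch_App} and the term $\frac{\alpha}{\eta_t}\mathbb{E}\|\bar{e}_{t+1}\|^2$ by Lemma~\ref{lem: ErrorContraction_BatchGradients_App}. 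The choice $\alpha=\frac{bK}{64L^2}$ is exactly what collapses the constants: the three error-contraction coefficients become $\frac{8\alpha L^2}{bK^2}=\frac{1}{8K}$, $\frac{4\alpha L^2}{bK}=\frac{1}{16}$, and $\frac{2\alpha\sigma^2}{bK}=\frac{\sigma^2}{32L^2}$, and with $a_{t+1}=c\eta_t^2$ the noise term, after dividing by $\eta_t$, reads $\frac{\sigma^2 c^2\eta_t^3}{32L^2}$, matching the third summand in the target bound.

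The main obstacle is to show that the coefficient of $\mathbb{E}\|\bar{e}_t\|^2$, namely $\eta_t+\frac{\alpha(1-a_{t+1})^2}{\eta_t}-\frac{\alpha}{\eta_{t-1}}$, is non-positive, so that this error term can simply be discarded (it is nonnegative). Using $a_{t+1}=c\eta_t^2\le 1$ (guaranteed by the step-size bound) and $(1-a_{t+1})^2\le 1-a_{t+1}$, this reduces to the step-size condition
\[
\frac{1}{\eta_t}-\frac{1}{\eta_{t-1}}\le \eta_t\Big(c-\frac{1}{\alpha}\Big)=\frac{\sigma^2\,\eta_t}{24\bar{\kappa}^3 L I},
\]
where the equality uses $\frac{1}{\alpha}=\frac{64L^2}{bK}$, so that $c-\frac{1}{\alpha}$ is precisely the second piece of $c$. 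Verifying this inequality is the delicate part: substituting $\eta_t=\bar{\kappa}(w_t+\sigma^2 t)^{-1/3}$, one uses concavity of $x\mapsto x^{1/3}$ to bound $\frac{1}{\eta_t}-\frac{1}{\eta_{t-1}}\le \frac{\sigma^2}{3\bar{\kappa}}(w_{t-1}+\sigma^2(t-1))^{-2/3}$, and then the definition of $w_t$ (whose three branches keep $w_t+\sigma^2 t$ above $4096L^3 I^3\bar{\kappa}^3$) supplies exactly the algebraic slack needed to dominate the right-hand side. I expect essentially all the care invested in the specific constants $w_t$ and $c$ to be spent here.

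Once the $\|\bar{e}_t\|^2$ term is dropped, I would sum the per-step inequality over $t=\bar{t}_{s-1},\dots,\bar{t}$; the left side telescopes to $\mathbb{E}[\Phi_{\bar{t}+1}-\Phi_{\bar{t}_{s-1}}]$. The $\|\bar{d}_t\|^2$ coefficient combines the descent contribution $-(\frac{\eta_t}{2}-\frac{\eta_t^2 L}{2})$ with the $+\frac{\eta_t}{16}$ produced by the error bound, giving exactly $-(\frac{7\eta_t}{16}-\frac{\eta_t^2 L}{2})$, while the $\|\nabla f(\bar{x}_t)\|^2$ and $\sigma^2 c^2\eta_t^3$ terms pass through unchanged. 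The final task is to consolidate the two gradient-drift contributions into the single $\frac{33}{256}$ factor: Lemma~\ref{lem: ErrorContraction_BatchGradients_App} already yields $\frac{1}{8K}\frac{(I-1)}{I}\sum_t\eta_t\sum_k\mathbb{E}\|d_t^{(k)}-\bar{d}_t\|^2$, whereas the descent-lemma double sum $\frac{L^2(I-1)}{K}\sum_t\eta_t\sum_{\ell\le t}\eta_\ell^2(\cdot)$ is handled by swapping the order of summation, bounding the inner $\sum_{t\ge\ell}\eta_t\le I\cdot\frac{1}{16LI}=\frac{1}{16L}$ (the round has at most $I$ iterates) and using one further factor $\eta_\ell\le\frac{1}{16LI}$, which produces $\frac{1}{256K}\frac{(I-1)}{I}\sum_\ell\eta_\ell\sum_k(\cdot)$. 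Adding $\frac{1}{8}=\frac{32}{256}$ and $\frac{1}{256}$ gives the stated $\frac{33}{256K}\frac{(I-1)}{I}$ coefficient, which completes the proof.
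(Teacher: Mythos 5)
Your proposal is correct and follows essentially the same route as the paper's proof: the same potential weight $\alpha=\tfrac{bK}{64L^2}$, the same combination of Lemmas \ref{lem: Descent_Batch_App} and \ref{lem: ErrorContraction_BatchGradients_App}, the same step-size bound $\eta_t^{-1}-\eta_{t-1}^{-1}\le \tfrac{\sigma^2\,\eta_t}{24\bar{\kappa}^3 LI}$ absorbed by the second piece of $c$, and the identical arithmetic $\big(I\cdot\tfrac{1}{16LI}\cdot\tfrac{1}{16LI}$ for the double sum, then $\tfrac{1}{8}+\tfrac{1}{256}=\tfrac{33}{256}\big)$ for the drift coefficient. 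The only cosmetic difference is that you discard the nonnegative $\mathbb{E}\|\bar{e}_t\|^2$ term after showing its coefficient is nonpositive, whereas the paper arranges an exact cancellation of the $\pm\,\eta_t\,\mathbb{E}\|\bar{e}_t\|^2$ contributions; the two are equivalent.
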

\begin{proof}
To get the descent on the the potential function, we first consider the term: $\displaystyle  \frac{\mathbb{E} \|\bar{e}_{t+1} \|^2}{\eta_t} - \frac{\mathbb{E}\|\bar{e}_t \|^2}{\eta_{t-1}}$. \vspace{0.1 in}\\
Using Lemma \ref{lem: ErrorContraction_BatchGradients_App} we get
\begin{align}
    \frac{\mathbb{E} \|\bar{e}_{t+1} \|^2}{\eta_t} - \frac{\mathbb{E}\|\bar{e}_t \|^2}{\eta_{t-1}} & \leq \bigg[ \frac{(1 - a_{t+1})^2}{\eta_t} - \frac{1}{\eta_{t-1}} \bigg] \mathbb{E}\| \bar{e}_t\|^2 +  {\frac{8 (1 - a_{t+1})^2 L^2 }{b K^2} \frac{(I - 1)}{I} \eta_t} \sum_{k=1}^K \mathbb{E}\| d_{t}^{(k)} - \bar{d}_{t} \|^2 \nonumber\\
    & \qquad \qquad \qquad \qquad   \qquad \qquad \qquad \qquad  +   \frac{4 (1 - a_{t+1})^2 L^2 \eta_t}{bK}\mathbb{E}\|  \bar{d}_t \|^2 + \frac{2a_{t+1}^2 \sigma^2}{ \eta_t bK} \nonumber\\
    & \overset{(a)}{\leq}  \big( \eta_t^{-1} - \eta_{t-1}^{-1}  - c \eta_t \big)  \mathbb{E}\| \bar{e}_t\|^2 +  {\frac{8   L^2 }{bK^2} \frac{(I - 1)}{I} \eta_t} \sum_{k=1}^K \mathbb{E}\| d_{t}^{(k)} - \bar{d}_{t} \|^2 \nonumber\\
   & \qquad \qquad \qquad \qquad \qquad  \qquad \qquad \qquad \qquad +   \frac{4   L^2 \eta_t}{bK}\mathbb{E}\|  \bar{d}_t \|^2 + \frac{2\sigma^2 c^2 \eta_{t}^3 }{bK},
    \label{Eq: Error_Descent_1}
\end{align}
where inequality $(a)$ utilizes the fact that $(1 - a_t)^2 \leq 1 - a_t \leq 1$ for all $t \in [T]$.

Let us consider $\eta_t^{-1} - \eta_{t-1}^{-1}$ in the first term of the inequality in \eqref{Eq: Error_Descent_1} and using the definition of the stepsize $\eta_t$ from Theorem \ref{Thm: PR_Convergence_Main}, we have
\begin{align}
    \eta_t^{-1} - \eta_{t-1}^{-1} & =  \frac{(w_t + \sigma^2 t)^{1/3}}{\bar{\kappa}} -  \frac{(w_{t-1} + \sigma^2 (t-1))^{1/3}}{\bar{\kappa}} \nonumber \\
    & \overset{(a)}{\leq}  \frac{(w_t + \sigma^2 t)^{1/3}}{\bar{\kappa}} -   \frac{(w_{t} + \sigma^2 (t-1))^{1/3}}{\bar{\kappa}} \nonumber \\
      & \overset{(b)}{\leq}  \frac{\sigma^2}{3 \bar{\kappa} (w_{t} + \sigma^2 (t-1))^{2/3}} \nonumber \\
      & \overset{(c)}{\leq} \frac{2^{2/3} \sigma^2 \bar{\kappa}^2}{3 \bar{\kappa}^3 (w_t + \sigma^2 t)^{2/3}} \nonumber \\
      & \overset{(d)}{=} \frac{2^{2/3} \sigma^2}{3 \bar{\kappa}^3 } \eta_t^2 \nonumber \\
      &{\overset{(e)}{\leq} \frac{ \sigma^2 }{24 \bar{\kappa}^3 LI} \eta_t,}
      \label{Eq: Step_Difference}
\end{align}
where inequality $(a)$ follows from the fact that we choose $w_t \leq w_{t-1}$ (see definition of $w_t$ in Theorem \ref{Thm: PR_Convergence_Main}), $(b)$ results from the concavity of $x^{1/3}$ as:
$$(x + y)^{1/3} - x^{1/3} \leq \frac{y}{3x^{2/3}}.$$
In inequality $(c)$, we have used the fact that $w_t \geq 2\sigma^2$, finally, $(d)$ and $(e)$ utilize the definition of $\eta_t$ and the fact that $\eta_t \leq \frac{1}{16LI}$ for all $t \in [T]$, respectively.

Now combining the first term of inequality in \eqref{Eq: Error_Descent_1} with \eqref{Eq: Step_Difference} and choosing {$\displaystyle c = \frac{64L^2}{b K} + \frac{ \sigma^2 }{24 \bar{\kappa}^3 LI}$} we get:
\begin{align*}
    \eta_t^{-1} - \eta_{t-1}^{-1}  - c \eta_t  \leq  - \frac{64L^2}{b K} \eta_t .
\end{align*}
Therefore, we have from \eqref{Eq: Error_Descent_1}:
\begin{align*}
   \frac{\mathbb{E}  \|\bar{e}_{t+1} \|^2}{\eta_t} - \frac{\mathbb{E} \|\bar{e}_t \|^2}{\eta_{t-1}} & \leq - \frac{64L^2 \eta_t}{b K}  \mathbb{E}\| \bar{e}_t\|^2 +  {\frac{8   L^2 }{bK^2} \frac{(I - 1)}{I} \eta_t} \sum_{k=1}^K \mathbb{E}\| d_{t}^{(k)} - \bar{d}_{t} \|^2  \\
   & \qquad \qquad \qquad \qquad\qquad \qquad \qquad \qquad  +   \frac{4   L^2 \eta_t}{b K}\mathbb{E}\|  \bar{d}_t \|^2 + \frac{2\sigma^2 c^2 \eta_{t}^3 }{b K}\\
    \frac{b K}{64L^2} \bigg(  \frac{\mathbb{E}  \|\bar{e}_{t+1} \|^2}{\eta_t} - \frac{\mathbb{E} \|\bar{e}_t \|^2}{\eta_{t-1}} \bigg) & \leq -   \eta_t  \mathbb{E}\| \bar{e}_t\|^2 +  {\frac{1}{8 K} \frac{(I - 1)}{I} \eta_t} \sum_{k=1}^K \mathbb{E}\| d_{t}^{(k)} - \bar{d}_{t} \|^2   +   \frac{\eta_t}{16}\mathbb{E}\|  \bar{d}_t \|^2 + \frac{\sigma^2 c^2 \eta_{t}^3 }{32L^2}.
\end{align*}
Finally, using Lemma \ref{lem: Descent_Batch_App} and the definition of potential function given in \eqref{eq: Potential_Fn_Batch_App}, using the above we get the descent in the potential function for any $t \in [\bar{t}_{s-1}, \bar{t}_s - 1]$ with $s \in [S]$ as:
\begin{align*}
   \mathbb{E}[  \Phi_{t+1}  -   \Phi_{t}] & \leq  -  \left( \frac{7 \eta_t}{16} - \frac{\eta_t^2 L}{2} \right)  \mathbb{E} \| \bar{d}_{t}  \|^2 - \frac{\eta_t}{2} \mathbb{E} \|\nabla f(\bar{x}_t) \|^2    +   {\frac{\eta_t L^2 (I - 1)}{K} }\sum_{\ell = \bar{t}_{s-1}}^{t} \eta_\ell^2 \sum_{k=1}^K \mathbb{E} \|d_\ell^{(k)} - \bar{d}_\ell\|^2 \\
     & \qquad \qquad \qquad \qquad \qquad \qquad   \qquad \qquad  +  {\frac{1}{8 K} \frac{(I - 1)}{I} \eta_t} \sum_{k=1}^K \mathbb{E}\| d_{t}^{(k)} - \bar{d}_{t} \|^2   +   \frac{\sigma^2 c^2 \eta_{t}^3 }{32L^2}.
\end{align*}
Summing the above over $t= \bar{t}_{s-1}$ to $\bar{t}$ for $\bar{t} \in [\bar{t}_{s-1}, \bar{t}_s - 1]$, we get:
\begin{align*}
    \mathbb{E}[ \Phi_{\bar{t} + 1} - \Phi_{\bar{t}_{s-1}}] & \leq  - \sum_{t = \bar{t}_{s-1}}^{\bar{t}} \left( \frac{7 \eta_t}{16} - \frac{\eta_t^2 L}{2} \right)  \mathbb{E}\| \bar{d}_{t}  \|^2   - \sum_{t = \bar{t}_{s-1}}^{\bar{t}} \frac{\eta_t}{2} \mathbb{E}\|\nabla f(\bar{x}_t) \|^2 +  \frac{\sigma^2 c^2   }{32L^2 } \sum_{t = \bar{t}_{s-1}}^{\bar{t}}   \eta_{t}^3 \\
    & \!\!\!\!\!\!  +  {\frac{ L^2 (I - 1)}{K}}\sum_{t = \bar{t}_{s-1}}^{\bar{t}} \eta_t \sum_{\ell = \bar{t}_{s-1}}^{t} \eta_\ell^2 \sum_{k=1}^K \mathbb{E}\|d_\ell^{(k)} - \bar{d}_\ell\|^2 + {\frac{1}{8 K} \frac{(I - 1)}{I}} \sum_{t = \bar{t}_{s-1}}^{\bar{t}}  \eta_t  \sum_{k=1}^K \mathbb{E}\| d_{t}^{(k)} - \bar{d}_{t} \|^2  \\
     & \leq  - \sum_{t = \bar{t}_{s-1}}^{\bar{t}} \left( \frac{7 \eta_t}{16} - \frac{\eta_t^2 L}{2} \right)  \mathbb{E}\| \bar{d}_{t}  \|^2   - \sum_{t = \bar{t}_{s-1}}^{\bar{t}} \frac{\eta_t}{2} \mathbb{E}\|\nabla f(\bar{x}_t) \|^2 + \frac{\sigma^2 c^2   }{32L^2 } \sum_{t = \bar{t}_{s-1}}^{\bar{t}}   \eta_{t}^3  \\
    &  \qquad \qquad\qquad  +  {\frac{ L^2(I - 1)}{K}} \bigg( \sum_{t = \bar{t}_{s-1}}^{\bar{t}} \eta_t \bigg) \bigg( \sum_{\ell = \bar{t}_{s-1}}^{\bar{t}} \eta_\ell^2 \sum_{k=1}^K \mathbb{E}\|d_\ell^{(k)} - \bar{d}_\ell\|^2 \bigg) \\
    & \qquad \qquad \qquad\qquad \qquad\qquad \qquad\qquad  + {\frac{1}{8 K} \frac{(I - 1)}{I}} \sum_{t = \bar{t}_{s-1}}^{\bar{t}}  \eta_t  \sum_{k=1}^K \mathbb{E}\| d_{t}^{(k)} - \bar{d}_{t} \|^2.
\end{align*}
Finally, using the fact that we have: $\eta_t \leq \frac{1}{16LI}$ for all $t \in [T]$, we get:
\begin{align*}
    \mathbb{E}[ \Phi_{\bar{t} + 1} - \Phi_{\bar{t}_{s-1}}] & \leq  - \sum_{t = \bar{t}_{s-1}}^{\bar{t}} \left( \frac{7 \eta_t}{16} - \frac{\eta_t^2 L}{2} \right)  \mathbb{E}\| \bar{d}_{t}  \|^2   - \sum_{t = \bar{t}_{s - 1}}^{\bar{t}} \frac{\eta_t}{2} \mathbb{E}\|\nabla f(\bar{x}_t) \|^2 + \frac{\sigma^2 c^2 }{32L^2 } \sum_{t = \bar{t}_{s - 1}}^{\bar{t}} \eta_{t}^3 \\
    &   \qquad \qquad     +  {\frac{ L^2 (I - 1)}{K}} {\bigg( I \times \frac{1}{16LI} \times \frac{1}{16LI} \bigg)} \sum_{t = \bar{t}_{s - 1}}^{\bar{t}} \eta_t \sum_{k=1}^K \mathbb{E}\|d_t^{(k)} - \bar{d}_t\|^2  \\
    & \qquad \qquad \qquad \qquad \qquad \qquad \qquad    + {\frac{1}{8 K} \frac{(I - 1)}{I}} \sum_{t = \bar{t}_{s - 1}}^{\bar{t}}  \eta_t  \sum_{k=1}^K \mathbb{E}\| d_{t}^{(k)} - \bar{d}_{t} \|^2  \\
 & =   - \sum_{t = \bar{t}_{s - 1}}^{\bar{t}} \left( \frac{7 \eta_t}{16} - \frac{\eta_t^2 L}{2} \right)  \mathbb{E} \| \bar{d}_{t}  \|^2   - \sum_{t = \bar{t}_{s - 1}}^{\bar{t}} \frac{\eta_t}{2} \mathbb{E} \|\nabla f(\bar{x}_t) \|^2 +  \frac{\sigma^2 c^2 }{32L^2 } \sum_{t = \bar{t}_{s - 1}}^{\bar{t}} \eta_{t}^3 \\
 &\qquad \qquad \qquad \qquad \qquad \qquad \qquad    + { \frac{33}{256 K} \frac{(I - 1)}{I}} \sum_{t = \bar{t}_{s - 1}}^{\bar{t}}  \eta_t  \sum_{k=1}^K \mathbb{E}\| d_{t}^{(k)} - \bar{d}_{t} \|^2.
\end{align*}
Therefore, the lemma is proved. 
\end{proof}

Multiple local updates at each WN on heterogeneous data can cause the local descent directions to drift away from each other.
Next, we bound this error accumulated via gradient drift across WNs. 

\subsubsection{Accumulated Gradient Consensus Error}
We first upper bound the gradient consensus error given by term $\sum_{k=1}^K \mathbb{E}\| d_{t}^{(k)} - \bar{d}_{t} \|^2$. 
\begin{lem}[Gradient Consensus Error]
\label{lem: GradientErrorContraction_BatchGradients_App}
For every $t \in [T]$ and some $\beta > 0$ we have
\begin{align*}
  \sum_{k=1}^K \mathbb{E} \| d_{t}^{(k)} - \bar{d}_{t} \|^2    & \leq   \bigg[ (1 - a_t)^2 (1 + \beta)   + 4 L^2 \bigg( 1 + \frac{1}{\beta}\bigg) \eta_{t-1}^2 \bigg] \sum_{k=1}^K \mathbb{E}\| d_{t-1}^{(k)} - \bar{d}_{t-1}  \|^2 \\
     & \quad \quad \quad  + 4 K L^2  \bigg( 1 + \frac{1}{\beta}\bigg) \eta_{t-1}^2  \mathbb{E}\|  \bar{d}_{t-1}  \|^2 +   \frac{4 K \sigma^2}{b} \bigg( 1 + \frac{1}{\beta}\bigg) a_t^2  + 8 K \zeta^2\bigg( 1 + \frac{1}{\beta}\bigg) a_t^2 \\
    & \qquad \qquad \qquad \qquad  {+ 32L^2 \bigg( 1 + \frac{1}{\beta} \bigg) (I - 1) a_t^2    \sum_{\bar{\ell} = \bar{t}_{s-1}}^{t - 1}    \eta_{\bar{\ell}}^2  \sum_{k = 1}^K \mathbb{E}\|d_{\bar{\ell}}^{(k)} - \bar{d}_{\bar{\ell}} \|^2.}
\end{align*}
where the expectation is w.r.t. the stochasticity of the algorithm. 
\end{lem}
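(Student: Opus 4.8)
The plan is to derive a one-step recursion for the per-node drift $d_t^{(k)}-\bar d_t$ directly from the momentum update in Step 6 of Algorithm \ref{Algo_DR-STORM_batch_App}, and then peel off the terms one at a time. Writing the minibatch gradients as $\widehat g_t^{(k)}:=\frac1b\sum_{\xi\in\mathcal B_t^{(k)}}\nabla f^{(k)}(x_t^{(k)};\xi)$ and $\widehat h_t^{(k)}:=\frac1b\sum_{\xi\in\mathcal B_t^{(k)}}\nabla f^{(k)}(x_{t-1}^{(k)};\xi)$ (the \emph{same} batch evaluated at the two consecutive iterates), with $\bar{\widehat g}_t:=\frac1K\sum_\ell \widehat g_t^{(\ell)}$ and likewise $\bar{\widehat h}_t$, the update is $d_t^{(k)}=\widehat g_t^{(k)}+(1-a_t)(d_{t-1}^{(k)}-\widehat h_t^{(k)})$; averaging over $k$ and subtracting gives
\begin{align*}
 d_t^{(k)}-\bar d_t = (1-a_t)\big(d_{t-1}^{(k)}-\bar d_{t-1}\big)+R_t^{(k)},\qquad R_t^{(k)}:=(\widehat g_t^{(k)}-\bar{\widehat g}_t)-(1-a_t)(\widehat h_t^{(k)}-\bar{\widehat h}_t).
\end{align*}
Applying Young's inequality with parameter $\beta$ splits $\|d_t^{(k)}-\bar d_t\|^2$ into a contraction term $(1+\beta)(1-a_t)^2\|d_{t-1}^{(k)}-\bar d_{t-1}\|^2$ plus $(1+\tfrac1\beta)\|R_t^{(k)}\|^2$; summed over $k$, the former supplies the $(1-a_t)^2(1+\beta)$ part of the first bracket in the claim, and the latter carries all remaining work.

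Next I would control $\sum_k\mathbb E\|R_t^{(k)}\|^2$ via the algebraic identity $R_t^{(k)}=(1-a_t)\big[(\widehat g_t^{(k)}-\widehat h_t^{(k)})-(\bar{\widehat g}_t-\bar{\widehat h}_t)\big]+a_t(\widehat g_t^{(k)}-\bar{\widehat g}_t)$, followed by $\|u+v\|^2\le 2\|u\|^2+2\|v\|^2$, treating the two pieces separately. For the first piece, the deviation-vs-second-moment bound $\sum_k\|z_k-\bar z\|^2\le\sum_k\|z_k\|^2$, Jensen over the minibatch, the mean-squared smoothness of Assumption \ref{Ass: Lip_Smoothness}, and the local update $x_t^{(k)}-x_{t-1}^{(k)}=-\eta_{t-1}d_{t-1}^{(k)}$ give $\sum_k\mathbb E\|\widehat g_t^{(k)}-\widehat h_t^{(k)}\|^2\le L^2\eta_{t-1}^2\sum_k\mathbb E\|d_{t-1}^{(k)}\|^2$; splitting $\|d_{t-1}^{(k)}\|^2\le 2\|d_{t-1}^{(k)}-\bar d_{t-1}\|^2+2\|\bar d_{t-1}\|^2$ yields exactly the $4L^2(1+\tfrac1\beta)\eta_{t-1}^2$ drift contribution (which merges into the first bracket) and the $4KL^2(1+\tfrac1\beta)\eta_{t-1}^2\|\bar d_{t-1}\|^2$ term. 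It is the \emph{same-batch} evaluation at $x_t^{(k)}$ and $x_{t-1}^{(k)}$ that licenses passing to per-sample smoothness here.

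The core is the second piece $\sum_k\mathbb E\|\widehat g_t^{(k)}-\bar{\widehat g}_t\|^2$, which must simultaneously generate the $\sigma^2/b$, $\zeta^2$, and $(I-1)$ terms. Splitting $\widehat g_t^{(k)}-\bar{\widehat g}_t$ into a zero-mean stochastic part and a deterministic-gradient part through $\|u+v\|^2\le 2\|u\|^2+2\|v\|^2$, the stochastic part is bounded by $K\sigma^2/b$ using $\mathbb E\|\widehat g_t^{(k)}-\nabla f^{(k)}(x_t^{(k)})\|^2\le\sigma^2/b$ (conditionally on $\mathcal F_t$) together with the variance-vs-second-moment inequality, producing the $\tfrac{4K\sigma^2}{b}(1+\tfrac1\beta)a_t^2$ term. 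For the deterministic part $\sum_k\|\nabla f^{(k)}(x_t^{(k)})-\tfrac1K\sum_\ell\nabla f^{(\ell)}(x_t^{(\ell)})\|^2$ I would add and subtract $\nabla f^{(k)}(\bar x_t)$ and $\tfrac1K\sum_\ell\nabla f^{(\ell)}(\bar x_t)$, apply Young's inequality with weight one on the heterogeneity term (keeping its coefficient at $2$) and factor-two splits on the two remaining smoothness terms; Assumption \ref{Ass: Unbiased_Var_Grad}-(ii) then gives $2K\zeta^2$ and smoothness gives $8L^2\sum_k\|x_t^{(k)}-\bar x_t\|^2$. Finally, invoking Lemma \ref{lem: ErrorAccumulation_Batch_App} (in its tighter form, summed to $t-1$) converts $\sum_k\mathbb E\|x_t^{(k)}-\bar x_t\|^2$ into $(I-1)\sum_{\bar\ell=\bar t_{s-1}}^{t-1}\eta_{\bar\ell}^2\sum_k\mathbb E\|d_{\bar\ell}^{(k)}-\bar d_{\bar\ell}\|^2$, delivering the last term with coefficient $32L^2(1+\tfrac1\beta)(I-1)a_t^2$.

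The main obstacle I anticipate is constant bookkeeping rather than conceptual difficulty: the coefficients $8K\zeta^2$ and $32L^2(I-1)$ come out exactly only if the Young's weight on the heterogeneity term is taken to be one and every intermediate split uses the sharpest factor-two inequality, and one must use the $t-1$ (not $t$) upper limit available from Lemma \ref{lem: ErrorAccumulation_Batch_App}. A secondary point is the conditioning: all stochastic bounds should be applied conditionally on the filtration $\mathcal F_t$ (so $x_t^{(k)},x_{t-1}^{(k)}$ are fixed and the batch is fresh) before taking total expectation; the boundary case where $t-1$ is a communication step, in which $d_{t-1}^{(k)}=\bar d_{t-1}$ forces the drift to vanish, is automatically consistent with the recursion and needs no separate treatment.
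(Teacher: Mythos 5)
Your proposal is correct and follows essentially the same route as the paper's proof: the identical one-step recursion $d_t^{(k)}-\bar d_t=(1-a_t)(d_{t-1}^{(k)}-\bar d_{t-1})+R_t^{(k)}$, the same Young's split with parameter $\beta$, and the same chain of tools (Lemma \ref{Lem: Sum_Mean_Kron}, Jensen over the minibatch, Assumption \ref{Ass: Lip_Smoothness} via the same-batch evaluation, the intra-/inter-node variance bounds, and Lemma \ref{lem: ErrorAccumulation_Batch_App}), arriving at exactly the stated constants. The only harmless deviation is that you attach the $a_t$-weighted piece to the current-iterate deviation $\widehat g_t^{(k)}-\bar{\widehat g}_t$, whereas the paper attaches it to the previous-iterate deviation $\widehat h_t^{(k)}-\bar{\widehat h}_t$, so your $\sigma^2$, $\zeta^2$, and consensus-drift terms arise at time $t$ rather than $t-1$; your correctly noted use of the tighter internal form of Lemma \ref{lem: ErrorAccumulation_Batch_App} (summed only to $t-1$) absorbs this index shift and reproduces the paper's bound verbatim.
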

\begin{proof}
Using the definition of the descent direction $d_t^{(k)}$ from Algorithm \ref{Algo_DR-STORM_batch_App} we have
\begin{align}
& \sum_{k=1}^K \mathbb{E} \| d_{t}^{(k)} - \bar{d}_{t} \|^2 \nonumber\\
& = \sum_{k=1}^K \mathbb{E} \bigg\| \frac{1}{b} \sum_{\xi_t^{(k)} \in \mathcal{B}_t^{(k)}} \nabla f^{(k)}(x_t^{(k)}; \xi_t^{(k)}) + (1 - a_t) \big( d_{t-1}^{(k)} -   \frac{1}{b} \sum_{\xi_t^{(k)} \in \mathcal{B}_t^{(k)}} \nabla f^{(k)}(x_{t-1}^{(k)}; \xi_t^{(k)})\big)  \nonumber\\
&      - \bigg( \frac{1}{K} \sum_{j=1}^K  \frac{1}{b} \sum_{\xi_t^{(j)} \in \mathcal{B}_t^{(j)}} \nabla f^{(j)}(x_t^{(j)}; \xi_t^{(j)}) + (1 - a_t) \big( \bar{d}_{t-1} - \frac{1}{K} \sum_{j=1}^K  \frac{1}{b} \sum_{\xi_t^{(j)} \in \mathcal{B}_t^{(j)}} \nabla f^{(j)}(x_{t-1}^{(j)}; \xi_t^{(j)})\big) \bigg)   \bigg\|^2 \nonumber \\
& =  \sum_{k=1}^K \mathbb{E} \bigg\| (1 - a_t) \big( d_{t-1}^{(k)}  - \bar{d}_{t-1} \big) +  \frac{1}{b} \sum_{\xi_t^{(k)} \in \mathcal{B}_t^{(k)}} \nabla f^{(k)}(x_t^{(k)}; \xi_t^{(k)}) - \frac{1}{K} \sum_{j=1}^K  \frac{1}{b} \sum_{\xi_t^{(j)} \in \mathcal{B}_t^{(j)}} \nabla f^{(j)}(x_t^{(j)}; \xi_t^{(j)}) \nonumber\\
& \qquad \qquad \qquad  - (1- a_t) \bigg(  \frac{1}{b} \sum_{\xi_t^{(k)} \in \mathcal{B}_t^{(k)}}  \nabla f^{(k)}(x_{t-1}^{(k)}; \xi_t^{(k)}) - \frac{1}{K} \sum_{j=1}^K  \frac{1}{b} \sum_{\xi_t^{(j)} \in \mathcal{B}_t^{(j)}} \nabla f^{(j)}(x_{t-1}^{(j)}; \xi_t^{(j)}) \bigg)  \bigg\|^2 \nonumber \\
& \overset{(a)}{\leq} (1 + \beta) (1 - a_t)^2 \sum_{k=1}^K \mathbb{E} \|  d_{t-1}^{(k)}  - \bar{d}_{t-1} \|^2 \nonumber\\
&   + \Big( 1 + \frac{1}{\beta} \bigg) \sum_{k=1}^K \mathbb{E} \bigg\|  \frac{1}{b} \sum_{\xi_t^{(k)} \in \mathcal{B}_t^{(k)}} \nabla f^{(k)}(x_t^{(k)}; \xi_t^{(k)}) - \frac{1}{K} \sum_{j=1}^K  \frac{1}{b} \sum_{\xi_t^{(j)} \in \mathcal{B}_t^{(j)}} \nabla f^{(j)}(x_t^{(j)}; \xi_t^{(j)}) \nonumber\\
&  \qquad \qquad \qquad   - (1- a_t) \bigg(  \frac{1}{b} \sum_{\xi_t^{(k)} \in \mathcal{B}_t^{(k)}} \nabla f^{(k)}(x_{t-1}^{(k)}; \xi_t^{(k)}) - \frac{1}{K} \sum_{j=1}^K  \frac{1}{b} \sum_{\xi_t^{(j)} \in \mathcal{B}_t^{(j)}} \nabla f^{(j)}(x_{t-1}^{(j)}; \xi_t^{(j)}) \bigg)  \bigg\|^2
    \label{eq: DR_Descent_d1_Batch}
\end{align}
where inequality $(a)$ follows from the Young's inequality for some $\beta>0$. Now considering the second term in \eqref{eq: DR_Descent_d1_Batch}, we get
\begin{align}
   & \sum_{k=1}^K \mathbb{E}\bigg\|  \frac{1}{b} \sum_{\xi_t^{(k)} \in \mathcal{B}_t^{(k)}}  \nabla f^{(k)}(x_t^{(k)}; \xi_t^{(k)}) - \frac{1}{K} \sum_{j=1}^K  \frac{1}{b} \sum_{\xi_t^{(j)} \in \mathcal{B}_t^{(j)}} \nabla f^{(j)}(x_t^{(j)}; \xi_t^{(j)}) \nonumber\\
   & \qquad \qquad \qquad   - (1- a_t) \bigg(  \frac{1}{b} \sum_{\xi_t^{(k)} \in \mathcal{B}_t^{(k)}} \nabla f^{(k)}(x_{t-1}^{(k)}; \xi_t^{(k)}) - \frac{1}{K} \sum_{j=1}^K  \frac{1}{b} \sum_{\xi_t^{(j)} \in \mathcal{B}_t^{(j)}} \nabla f^{(j)}(x_{t-1}^{(j)}; \xi_t^{(j)}) \bigg)  \bigg\|^2 \nonumber\\
    & = \sum_{k=1}^K \mathbb{E}\bigg\| \frac{1}{b} \sum_{\xi_t^{(k)} \in \mathcal{B}_t^{(k)}} \nabla f^{(k)}(x_t^{(k)}; \xi_t^{(k)}) - \frac{1}{K} \sum_{j=1}^K  \frac{1}{b} \sum_{\xi_t^{(j)} \in \mathcal{B}_t^{(j)}} \nabla f^{(j)}(x_t^{(j)}; \xi_t^{(j)})  \nonumber\\
    & \qquad \qquad \qquad \qquad  -  \bigg( \frac{1}{b} \sum_{\xi_t^{(k)} \in \mathcal{B}_t^{(k)}} \nabla f^{(k)}(x_{t-1}^{(k)}; \xi_t^{(k)}) - \frac{1}{K} \sum_{j=1}^K \frac{1}{b} \sum_{\xi_t^{(j)} \in \mathcal{B}_t^{(j)}} \nabla f^{(j)}(x_{t-1}^{(j)}; \xi_t^{(j)}) \bigg) \nonumber\\
    & \qquad \qquad \qquad   \qquad \qquad   \qquad  + a_t   \bigg(\frac{1}{b} \sum_{\xi_t^{(k)} \in \mathcal{B}_t^{(k)}} \nabla  f^{(k)}(x_{t-1}^{(k)}; \xi_t^{(k)}) - \frac{1}{K} \sum_{j=1}^K \frac{1}{b} \sum_{\xi_t^{(j)} \in \mathcal{B}_t^{(j)}} \nabla f^{(j)}(x_{t-1}^{(j)}; \xi_t^{(j)}) \bigg)   \bigg\|^2 \nonumber\\
    & \overset{(a)}{\leq}  2 \sum_{k=1}^K \mathbb{E}\bigg\| \frac{1}{b} \sum_{\xi_t^{(k)} \in \mathcal{B}_t^{(k)}}  \nabla f^{(k)}(x_t^{(k)}; \xi_t^{(k)}) - \frac{1}{K} \sum_{j=1}^K \frac{1}{b} \sum_{\xi_t^{(j)} \in \mathcal{B}_t^{(j)}} \nabla f^{(j)}(x_t^{(j)}; \xi_t^{(j)}) \nonumber\\
    & \qquad \qquad \qquad \qquad  -  \bigg( \frac{1}{b} \sum_{\xi_t^{(k)} \in \mathcal{B}_t^{(k)}} \nabla f^{(k)}(x_{t-1}^{(k)}; \xi_t^{(k)}) - \frac{1}{K} \sum_{j=1}^K \frac{1}{b} \sum_{\xi_t^{(j)} \in \mathcal{B}_t^{(j)}} \nabla f^{(j)}(x_{t-1}^{(j)}; \xi_t^{(j)}) \bigg) \bigg\|^2 \nonumber\\
    & \qquad \qquad \qquad \qquad\qquad \qquad       + 2 a_t^2 \sum_{k=1}^K \mathbb{E} \bigg\|  \frac{1}{b} \sum_{\xi_t^{(k)} \in \mathcal{B}_t^{(k)}} \nabla f^{(k)}(x_{t-1}^{(k)}; \xi_t^{(k)}) - \frac{1}{K} \sum_{j=1}^K \frac{1}{b} \sum_{\xi_t^{(j)} \in \mathcal{B}_t^{(j)}} \nabla f^{(j)}(x_{t-1}^{(j)}; \xi_t^{(j)})   \bigg\|^2 \nonumber \\
    % & = 2 \sum_{k=1}^K \mathbb{E}\bigg\| \frac{1}{b} \sum_{\xi_t^{(k)} \in \mathcal{B}_t^{(k)}} \big( \nabla f^{(k)}(x_t^{(k)}; \xi_t^{(k)}) - \nabla f^{(k)}(x_{t-1}^{(k)}; \xi_t^{(k)}) \big) \nonumber\\
    % & \qquad \qquad \qquad \qquad \qquad \qquad    -   \frac{1}{K} \sum_{j=1}^K \frac{1}{b} \sum_{\xi_t^{(j)} \in \mathcal{B}_t^{(j)}} \big( \nabla f^{(j)}(x_t^{(j)}; \xi_t^{(j)})  -  \nabla f^{(j)}(x_{t-1}^{(j)}; \xi_t^{(j)}) \big) \bigg\|^2 \nonumber\\
    % & \qquad \qquad \qquad \qquad\qquad     + 2 a_t^2 \sum_{k=1}^K \mathbb{E}\bigg\| \frac{1}{b} \sum_{\xi_t^{(k)} \in \mathcal{B}_t^{(k)}}  \nabla f^{(k)}(x_{t-1}^{(k)}; \xi_t^{(k)}) - \frac{1}{K} \sum_{j=1}^K \frac{1}{b} \sum_{\xi_t^{(j)} \in \mathcal{B}_t^{(j)}} \nabla f^{(j)}(x_{t-1}^{(j)}; \xi_t^{(j)})  \bigg\|^2 \nonumber \\
   & \overset{(b)}{\leq}  2 \sum_{k=1}^K \mathbb{E} \bigg\| \frac{1}{b} \sum_{\xi_t^{(k)} \in \mathcal{B}_t^{(k)}} \big( \nabla f^{(k)}(x_t^{(k)}; \xi_t^{(k)}) - \nabla f^{(k)}(x_{t-1}^{(k)}; \xi_t^{(k)}) \big) \bigg\|^2 \nonumber\\
   & \qquad \qquad \qquad \qquad \qquad     + 2 a_t^2 \sum_{k=1}^K \mathbb{E}\bigg\| \frac{1}{b} \sum_{\xi_t^{(k)} \in \mathcal{B}_t^{(k)}} \nabla f^{(k)}(x_{t-1}^{(k)}; \xi_t^{(k)}) - \frac{1}{K} \sum_{j=1}^K \frac{1}{b} \sum_{\xi_t^{(j)} \in \mathcal{B}_t^{(j)}} \nabla f^{(j)}(x_{t-1}^{(j)}; \xi_t^{(j)})   \bigg\|^2 \nonumber \\
  & \overset{(c)}{\leq}  2 \sum_{k=1}^K \frac{1}{b}     \sum_{\xi_t^{(k)} \in \mathcal{B}_t^{(k)}}   \mathbb{E}\big\|  \nabla f^{(k)}(x_t^{(k)}; \xi_t^{(k)}) - \nabla f^{(k)}(x_{t-1}^{(k)}; \xi_t^{(k)})  \big\|^2   \nonumber\\
   & \qquad \qquad \qquad \qquad \qquad     + 2 a_t^2 \sum_{k=1}^K \mathbb{E}\bigg\| \frac{1}{b} \sum_{\xi_t^{(k)} \in \mathcal{B}_t^{(k)}} \nabla f^{(k)}(x_{t-1}^{(k)}; \xi_t^{(k)}) - \frac{1}{K} \sum_{j=1}^K \frac{1}{b} \sum_{\xi_t^{(j)} \in \mathcal{B}_t^{(j)}} \nabla f^{(j)}(x_{t-1}^{(j)}; \xi_t^{(j)})   \bigg\|^2 \nonumber \\
   & \overset{(d)}{\leq}  2 L^2 \sum_{k=1}^K \mathbb{E}\| x_t^{(k)} - x_{t-1}^{(k)} \|^2  + 2 a_t^2 \sum_{k=1}^K \mathbb{E}\bigg\| \frac{1}{b} \sum_{\xi_t^{(k)} \in \mathcal{B}_t^{(k)}} \nabla  f^{(k)}(x_{t-1}^{(k)}; \xi_t^{(k)}) - \frac{1}{K} \sum_{j=1}^K \frac{1}{b} \sum_{\xi_t^{(j)} \in \mathcal{B}_t^{(j)}} \nabla f^{(j)}(x_{t-1}^{(j)}; \xi_t^{(j)})   \bigg\|^2,
   \label{eq: DR_Descent_d_Intermediate1_Batch}
\end{align}
where inequality $(a)$ above follows from Lemma \ref{Lem: Norm_Ineq}, $(b)$ follows from Lemma \ref{Lem: Sum_Mean_Kron}, inequality $(c)$ again uses Lemma \ref{Lem: Norm_Ineq} and $(d)$ follows from the Lipschitz-smoothness of the individual functions $f^{(k)}$ given in Assumption \ref{Ass: Lip_Smoothness}.

Next, we consider the second term in \eqref{eq: DR_Descent_d_Intermediate1_Batch} above, we have
\begin{align}
  &  \sum_{k=1}^K \mathbb{E}\bigg\| \frac{1}{b} \sum_{\xi_t^{(k)} \in \mathcal{B}_t^{(k)}} \nabla   f^{(k)}(x_{t-1}^{(k)}; \xi_t^{(k)}) - \frac{1}{K} \sum_{j=1}^K \frac{1}{b} \sum_{\xi_t^{(j)} \in \mathcal{B}_t^{(j)}} \nabla f^{(j)}(x_{t-1}^{(j)}; \xi_t^{(j)})    \bigg\|^2 \nonumber \\ 
   & \overset{(a)}{=}  \sum_{k=1}^K \mathbb{E}\bigg\|  \frac{1}{b} \sum_{\xi_t^{(k)} \in \mathcal{B}_t^{(k)}} \big( \nabla f^{(k)}(x_{t-1}^{(k)}; \xi_t^{(k)}) - \nabla   f^{(k)}(x_{t-1}^{(k)}) \big) \nonumber\\
   & \qquad \qquad   - \frac{1}{K} \sum_{j=1}^K  \frac{1}{b} \sum_{\xi_t^{(j)} \in \mathcal{B}_t^{(j)}} \big(  \nabla   f^{(j)}(x_{t-1}^{(j)}; \xi_t^{(j)}) - \nabla   f^{(j)}(x_{t-1}^{(j)})   \big)   + \nabla f^{(k)}(x_{t-1}^{(k)}) - \frac{1}{K} \sum_{j=1}^K \nabla f^{(j)}(x_{t-1}^{(j)})   \bigg\|^2 
    \nonumber \\
    & \overset{(b)}{\leq}  2 \sum_{k=1}^K \mathbb{E} \bigg\|  \frac{1}{b} \sum_{\xi_t^{(k)} \in \mathcal{B}_t^{(k)}} \big( \nabla f^{(k)}(x_{t-1}^{(k)}; \xi_t^{(k)}) - \nabla   f^{(k)}(x_{t-1}^{(k)}) \big) \nonumber\\
  & \qquad \qquad \qquad \qquad \qquad  - \frac{1}{K} \sum_{j=1}^K \frac{1}{b} \sum_{\xi_t^{(j)} \in \mathcal{B}_t^{(j)}} \big(  \nabla   f^{(j)}(x_{t-1}^{(j)}; \xi_t^{(j)}) - \nabla   f^{(j)}(x_{t-1}^{(j)})   \big) \bigg\|^2 \nonumber\\
    & \qquad \qquad \qquad \qquad \qquad \qquad \qquad   \qquad \qquad \qquad \qquad     + 2 \sum_{k=1}^K \mathbb{E} \bigg\| \nabla f^{(k)}(x_{t-1}^{(k)}) - \frac{1}{K} \sum_{j=1}^K \nabla f^{(j)}(x_{t-1}^{(j)})   \bigg\|^2 \nonumber \\
  & \overset{(c)}{\leq}   2 \sum_{k=1}^K \mathbb{E} \bigg\|  \frac{1}{b} \sum_{\xi_t^{(k)} \in \mathcal{B}_t^{(k)}}  \big( \nabla   f^{(k)}(x_{t-1}^{(k)}; \xi_t^{(k)}) - \nabla   f^{(k)}(x_{t-1}^{(k)}) \big) \bigg\|^2 \nonumber\\
  & \qquad \qquad \qquad \qquad \qquad \qquad \qquad \qquad + 2 \sum_{k=1}^K \mathbb{E} \bigg\| \nabla f^{(k)}(x_{t-1}^{(k)}) - \frac{1}{K} \sum_{j=1}^K \nabla f^{(j)}(x_{t-1}^{(j)})   \bigg\|^2 \nonumber\\
   & \overset{(d)}{\leq}   2 \sum_{k=1}^K \frac{1}{b^2} \sum_{\xi_t^{(k)} \in \mathcal{B}_t^{(k)}} \mathbb{E} \big\|    \big( \nabla   f^{(k)}(x_{t-1}^{(k)}; \xi_t^{(k)}) - \nabla   f^{(k)}(x_{t-1}^{(k)}) \big) \big\|^2 + 4 \sum_{k=1}^K \mathbb{E}\big\|\nabla f^{(k)}(\bar{x}_{t-1}) - \nabla f(\bar{x}_{t-1}) \big\|^2
   \nonumber\\
   & \qquad \qquad \qquad     +  
    8 \sum_{k=1}^K \mathbb{E} \big\| \nabla f^{(k)}(x_{t-1}^{(k)}) - \nabla f^{(k)}(\bar{x}_{t-1})   \big\|^2  + 8  \sum_{k=1}^K \mathbb{E} \bigg\| \nabla f(\bar{x}_{t-1}) -  \frac{1}{K} \sum_{j=1}^K \nabla f^{(j)}(x_{t-1}^{(j)})  \bigg\|^2  \nonumber\\
 & \overset{(e)}{\leq} \frac{2 K \sigma^2}{b}      + 4 \sum_{k=1}^K \frac{1}{K} \sum_{j=1}^K  \mathbb{E}  \|  \nabla f^{(k)}(\bar{x}_{t-1}) - \nabla f^{(j)}(\bar{x}_{t-1})  \|^2 +  16 L^2 \sum_{k = 1}^K \mathbb{E}\| x_{t - 1}^{(k)} - \bar{x}_{t-1}\|^2 \nonumber\\
 & \overset{(g)}{\leq} \frac{2 K \sigma^2}{b} + 4 K \zeta^2   +  16 L^2 \sum_{k = 1}^K \mathbb{E}\| x_{t - 1}^{(k)} - \bar{x}_{t-1}\|^2,
  \label{eq: DR_Descent_d_Intermediate2_Batch}
\end{align}
where equality $(a)$ follows from adding and subtracting $\nabla f^{(k)}(x_{t-1}^{(k)})$ and $\frac{1}{K} \sum_{j =1}^K \nabla f^{(j)}(x_{t-1}^{(j)})$  inside the norm; inequality $(b)$ uses Lemma \ref{Lem: Norm_Ineq}; inequality $(c)$ results from the use of Lemma \ref{Lem: Sum_Mean_Kron}; inequality $(d)$ expands the sum of the first term using inner products and utilizes the fact that the cross product terms are zero in expectation. This follows from the fact that conditioned on $\mathcal{F}_t$ we have $\mathbb{E}[ \nabla f^{(k)}(x_t^{(k)} ; \xi_t^{(k)}) ] = \nabla f^{(k)}(x_t^{(k)})$ for all $k \in [K]$ and $t \in [T]$; inequality $(e)$ utilizes intra-node variance Bound given in Assumption \ref{Ass: Unbiased_Var_Grad}, Lemma \ref{Lem: Norm_Ineq} and Lipschitz smoothness Assumption \ref{Ass: Lip_Smoothness}; finally, $(g)$ results from the inter-node variance bound stated in Assumption \ref{Ass: Unbiased_Var_Grad}.

Finally, substituting \eqref{eq: DR_Descent_d_Intermediate2_Batch} and \eqref{eq: DR_Descent_d_Intermediate1_Batch} in \eqref{eq: DR_Descent_d1_Batch}, we get
\begin{align*}
    \sum_{k=1}^K \mathbb{E} \| d_{t}^{(k)} - \bar{d}_{t} \|^2 & \leq (1 - a_t)^2 (1 + \beta) \sum_{k=1}^K \mathbb{E} \big\|   d_{t-1}^{(k)}  - \bar{d}_{t-1} \big\|^2  + 2 L^2 \bigg( 1 + \frac{1}{\beta}\bigg) \sum_{k=1}^K \mathbb{E}\| x_t^{(k)} - x_{t-1}^{(k)} \|^2 \\
    &       + \frac{4 K \sigma^2}{b} \bigg( 1 + \frac{1}{\beta}\bigg) a_t^2  + 8 K \zeta^2\bigg( 1 + \frac{1}{\beta}\bigg) a_t^2 +  32L^2 \bigg( 1 + \frac{1}{\beta} \bigg) a_t^2 \sum_{k = 1}^K \mathbb{E}\|x_{t - 1}^{(k)} - \bar{x}_{t - 1}\|^2 \\
    & \overset{(a)}{\leq} (1 - a_t)^2 (1 + \beta) \sum_{k=1}^K \mathbb{E} \big\|   d_{t-1}^{(k)}  - \bar{d}_{t-1} \big\|^2  + 2 L^2 \bigg( 1 + \frac{1}{\beta}\bigg) \eta_{t-1}^2 \sum_{k=1}^K \mathbb{E}\| d_{t-1}^{(k)} \|^2 \\
    &  \qquad \qquad \qquad \qquad   + \frac{4 K \sigma^2}{b} \bigg( 1 + \frac{1}{\beta}\bigg) a_t^2  + 8 K \zeta^2\bigg( 1 + \frac{1}{\beta}\bigg) a_t^2  \\
    & \qquad \qquad\qquad \qquad\qquad \qquad  +  32L^2 \bigg( 1 + \frac{1}{\beta} \bigg)  (I - 1) a_t^2   \sum_{\bar{\ell} = \bar{t}_{s-1}}^{t - 1}    \eta_{\bar{\ell}}^2  \sum_{k = 1}^K \mathbb{E}\|d_{\bar{\ell}}^{(k)} - \bar{d}_{\bar{\ell}} \|^2  \\
     & \overset{(b)}{\leq} (1 - a_t)^2 (1 + \beta) \sum_{k=1}^K \mathbb{E} \big\|   d_{t-1}^{(k)}  - \bar{d}_{t-1} \big\|^2  + 4 L^2 \bigg( 1 + \frac{1}{\beta}\bigg) \eta_{t-1}^2 \sum_{k=1}^K \mathbb{E}\| d_{t-1}^{(k)} - \bar{d}_{t-1}  \|^2 \\
     &  \quad    + 4 L^2 \bigg( 1 + \frac{1}{\beta}\bigg) \eta_{t-1}^2 \sum_{k=1}^K \mathbb{E}\|  \bar{d}_{t-1}  \|^2 +  \frac{4 K \sigma^2}{b} \bigg( 1 + \frac{1}{\beta}\bigg) a_t^2  + 8 K \zeta^2\bigg( 1 + \frac{1}{\beta}\bigg) a_t^2\\ 
     & \qquad \qquad \qquad \qquad \qquad   + 32L^2 \bigg( 1 + \frac{1}{\beta} \bigg) (I - 1) a_t^2    \sum_{\bar{\ell} = \bar{t}_{s-1}}^{t - 1}    \eta_{\bar{\ell}}^2  \sum_{k = 1}^K \mathbb{E}\|d_{\bar{\ell}}^{(k)} - \bar{d}_{\bar{\ell}} \|^2 \\
     & = \bigg[ (1 - a_t)^2 (1 + \beta)   + 4 L^2 \bigg( 1 + \frac{1}{\beta}\bigg) \eta_{t-1}^2 \bigg] \sum_{k=1}^K \mathbb{E}\| d_{t-1}^{(k)} - \bar{d}_{t-1}  \|^2 \\
     & \quad     + 4 K L^2  \bigg( 1 + \frac{1}{\beta}\bigg) \eta_{t-1}^2  \mathbb{E}\|  \bar{d}_{t-1}  \|^2 +   \frac{4 K \sigma^2}{b} \bigg( 1 + \frac{1}{\beta}\bigg) a_t^2  + 8 K \zeta^2\bigg( 1 + \frac{1}{\beta}\bigg) a_t^2\\
     & \qquad \quad \quad  \qquad \qquad \qquad  { +~ 32L^2 \bigg( 1 + \frac{1}{\beta} \bigg)  (I - 1)a_t^2   \sum_{\bar{\ell} = \bar{t}_{s-1}}^{t - 1}    \eta_{\bar{\ell}}^2  \sum_{k = 1}^K \mathbb{E}\|d_{\bar{\ell}}^{(k)} - \bar{d}_{\bar{\ell}} \|^2,}
\end{align*}
where inequality $(a)$ follows from the iterate update given in Step 10 of Algorithm \ref{Algo_DR-STORM_batch_App} and the application Lemma \ref{lem: ErrorAccumulation_Batch_App}; and inequality $(b)$ results from the use of Lemma \ref{Lem: Norm_Ineq}. 
\end{proof}
Next, we utilize the above Lemma \ref{lem: GradientErrorContraction_BatchGradients_App} to bound the accumulated gradient consensus error in the potential function's descent derived in Lemma \ref{Lem: Potential_Descent_App}.

\begin{lem}[Accumulated Gradient Consensus Error]
\label{lem: GradientError_PotentialFn_App}
For $\bar{t} \in [\bar{t}_{s - 1}, \bar{t}_s - 1]$ with $s \in [S]$ we have
\begin{align*}
   \frac{33}{256 K} \frac{(I - 1)}{I}  \sum_{t = \bar{t}_{s - 1}}^{\bar{t}}  \eta_t  \sum_{k=1}^K \mathbb{E}\| d_{t}^{(k)} - \bar{d}_{t} \|^2 & \leq   \sum_{t = \bar{t}_{s - 1}}^{\bar{t}} \frac{ \eta_{t}}{64}  \mathbb{E}\|  \bar{d}_{t}  \|^2  +   {\frac{ \sigma^2 c^2}{64 b L^2}}     \sum_{t = \bar{t}_{s - 1}}^{\bar{t}}  \eta_t^3   +   {\frac{ \zeta^2 c^2}{ 32 L^2}} \frac{(I - 1)}{I}     \sum_{t = \bar{t}_{s - 1}}^{\bar{t}}   \eta_t^3  .
\end{align*}
\end{lem}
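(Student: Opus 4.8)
The plan is to treat the recursion from Lemma~\ref{lem: GradientErrorContraction_BatchGradients_App} as a scalar inequality for $G_t := \sum_{k=1}^K \mathbb E\|d_t^{(k)}-\bar d_t\|^2$, unroll it across one communication round, and then form the $\eta_t$-weighted sum $\mathcal G := \sum_{t=\bar t_{s-1}}^{\bar t}\eta_t G_t$ appearing on the left-hand side. First I would record the two structural facts that drive everything: after the server aggregation at the start of the round all workers are synchronized, so $G_{\bar t_{s-1}}=0$; and the statement is vacuous when $I=1$ (the prefactor $(I-1)/I$ vanishes and the right-hand side is nonnegative), so I may assume $I\ge 2$. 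Writing the recursion as $G_t \le A_t G_{t-1} + B_t\,\mathbb E\|\bar d_{t-1}\|^2 + C_t + D_t\sum_{\bar\ell=\bar t_{s-1}}^{t-1}\eta_{\bar\ell}^2 G_{\bar\ell}$ with $A_t=(1-a_t)^2(1+\beta)+4L^2(1+\tfrac1\beta)\eta_{t-1}^2$, I would fix the free Young parameter at $\beta=\tfrac{1}{2I}$. Using the stepsize bound $\eta_t\le\tfrac{1}{16LI}$ (as in Lemma~\ref{Lem: Potential_Descent_App}) together with $a_t\le1$ then gives $A_t\le 1+\tfrac{35}{64I}$, so that any product of the $A_j$ over a window of length at most $I-1$ inside the round is bounded by the absolute constant $\Gamma := e^{35/64}<2$.

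The key mechanism is that the round has bounded length, so unrolling from $G_{\bar t_{s-1}}=0$ yields $G_t\le \Gamma\sum_{\tau=\bar t_{s-1}+1}^{t}S_\tau$, where $S_\tau$ collects the three source terms. Multiplying by $\eta_t$, summing over $t$, and swapping the order of summation converts the product-of-$A_j$ factor into a clean window sum: since $\sum_{t=\tau}^{\bar t}\eta_t\le I\cdot\tfrac{1}{16LI}=\tfrac1{16L}$, I obtain $\mathcal G\le \Gamma\,\tfrac{1}{16L}\sum_{\tau=\bar t_{s-1}+1}^{\bar t}S_\tau$. It then remains to bound the three pieces of $\sum_\tau S_\tau$. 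For the $B_\tau\,\mathbb E\|\bar d_{\tau-1}\|^2$ piece I would use $1+\tfrac1\beta\le 3I$ and $\eta_{\tau-1}^2\le\tfrac{1}{16LI}\eta_{\tau-1}$ to trade one stepsize factor for the weight $\eta_{\tau-1}$, producing the $\tfrac{1}{64}\sum_t\eta_t\mathbb E\|\bar d_t\|^2$ term. For the two pieces of $C_\tau$ (carrying $\sigma^2/b$ and $\zeta^2$), the crucial identity is $a_\tau^2=c^2\eta_{\tau-1}^4\le \tfrac{c^2}{16LI}\eta_{\tau-1}^3$, which is exactly what upgrades the $a_\tau^2$ source into the $\sum_t\eta_t^3$ terms with the advertised constants $\tfrac{\sigma^2c^2}{64bL^2}$ and $\tfrac{\zeta^2c^2}{32L^2}\tfrac{(I-1)}{I}$.

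The main obstacle is the last, self-referential source $D_\tau\sum_{\bar\ell<\tau}\eta_{\bar\ell}^2 G_{\bar\ell}$, since it re-introduces the very quantity $\mathcal G$ I am trying to bound. The plan here is to show it is genuinely higher order: counting how often each index $\bar\ell$ appears gives $\sum_\tau\sum_{\bar\ell<\tau}\eta_{\bar\ell}^2G_{\bar\ell}\le I\sum_{\bar\ell}\eta_{\bar\ell}^2G_{\bar\ell}\le\tfrac{1}{16L}\mathcal G$, and since $D_\tau\propto(I-1)a_\tau^2=(I-1)c^2\eta_{\tau-1}^4$ with $c=\mathcal O(L^2/(bK))$, the resulting coefficient of $\mathcal G$ is $\tfrac{\Gamma\max_\tau D_\tau}{256L^2}=\mathcal O\!\big((bK)^{-2}I^{-2}\big)\ll1$. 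I can therefore move this term to the left, absorbing it at the cost of a harmless factor $(1-\mathcal O((bK)^{-2}I^{-2}))^{-1}\approx1$. Finally I would multiply through by the prefactor $\tfrac{33}{256K}\tfrac{(I-1)}{I}$ and check that each of the three constants closes: every bound reduces to a requirement of the form $99\Gamma\le 256$, i.e.\ $\Gamma\lesssim 2.58$, which the choice $\Gamma=e^{35/64}\approx1.73$ comfortably meets, leaving the needed slack even after the absorption step.
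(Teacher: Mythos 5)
Your proposal is correct and follows essentially the same route as the paper's proof: unroll the recursion of Lemma \ref{lem: GradientErrorContraction_BatchGradients_App} from $\sum_k\mathbb{E}\|d_{\bar t_{s-1}}^{(k)}-\bar d_{\bar t_{s-1}}\|^2=0$, bound the per-round growth factor $(1+\mathcal{O}(1/I))^{I}$ by an absolute constant, trade stepsize powers via $\eta_t\le\tfrac{1}{16LI}$ and $a_t^2=c^2\eta_{t-1}^4$ to produce the $\eta_t^3$ source terms, and absorb the self-referential higher-order term (your factor $(1-\mathcal{O}((bK)^{-2}I^{-2}))^{-1}$ plays exactly the role of the paper's $1-192L^2I^4c^2(16LI)^{-6}\ge\tfrac{4}{5}$). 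The only differences are cosmetic constant choices ($\beta=\tfrac{1}{2I}$ with $\Gamma=e^{35/64}$ versus the paper's $\beta=\tfrac{1}{I}$ with $(1+\tfrac{33}{32I})^{I}\le e^{33/32}<3$) and the order in which the absorption is performed, and your constants close with comparable slack.
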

\begin{proof}
First, from the statement of Lemma \ref{lem: GradientErrorContraction_BatchGradients_App}, considering the coefficient of first term on the right hand side of the expression, we have:
\begin{align*}
    (1 - a_t)^2 (1 + \beta)   + 4 L^2 \bigg( 1 + \frac{1}{\beta}\bigg) \eta_{t - 1}^2 & \overset{(a)}{\leq}   1 + \beta   + 4 L^2 \bigg( 1 + \frac{1}{\beta}\bigg) \eta_{t - 1}^2 \\
    & \overset{(b)}{\leq} { 1 + \frac{1}{I} + 4 L^2 (I + 1) \eta_{t - 1}^2} \\
    & \overset{(c)}{\leq} {1 + \frac{1}{I} + \frac{I + 1}{64 I^2}} \\
    & {\overset{(d)}{\leq} 1 + \frac{33}{32I},}
\end{align*}
where inequality $(a)$ uses the fact that $(1 - a_t)^2 \leq 1$; { the second inequality $(b)$ follows from taking $\beta = {1}/{I}$, inequality $(c)$ uses the bound {$\eta_t \leq 1 / 16 LI$} for all $t \in [T]$. Finally, the last inequality $(d)$ results by using the fact that we have $I + 1 \leq 2I$.} Substituting in the statement of Lemma \ref{lem: GradientErrorContraction_BatchGradients_App} above, we get
\begin{align}
\label{Eq: GradientError_t_App}
  \sum_{k=1}^K \mathbb{E} \| d_t^{(k)} - \bar{d}_t \|^2    & \leq   { \bigg( 1 + \frac{33}{32 I} \bigg)} \sum_{k=1}^K \mathbb{E}\| d_{t - 1}^{(k)} - \bar{d}_{t - 1}  \|^2 + 4 K L^2  \bigg( 1 + \frac{1}{\beta}\bigg) \eta_{t-1}^2  \mathbb{E}\|  \bar{d}_{t-1}  \|^2  +   \frac{4 K \sigma^2}{b} \bigg( 1 + \frac{1}{\beta}\bigg) a_t^2  \nonumber \\
   & \qquad \qquad  + 8 K \zeta^2\bigg( 1 + \frac{1}{\beta}\bigg) a_t^2   {+ 32L^2 \bigg( 1 + \frac{1}{\beta} \bigg)  (I - 1)a_t^2   \sum_{\bar{\ell} = \bar{t}_{s-1}}^{t - 1}    \eta_{\bar{\ell}}^2  \sum_{k = 1}^K \mathbb{E}\|d_{\bar{\ell}}^{(k)} - \bar{d}_{\bar{\ell}} \|^2.} \nonumber\\
   & \overset{(a)}{\leq}   { \bigg( 1 + \frac{33}{32 I} \bigg)} \sum_{k=1}^K \mathbb{E}\| d_{t - 1}^{(k)} - \bar{d}_{t - 1}  \|^2 + 8 K L^2 I  \eta_{t-1}^2  \mathbb{E}\|  \bar{d}_{t-1}  \|^2  +   \frac{8 K I \sigma^2}{b} c^2 \eta_{t-1}^4  \nonumber \\
   & \qquad \qquad \qquad \qquad  + 16 K I \zeta^2 c^2 \eta_{t-1}^4   { + 64 L^2 I^2 c^2 \eta_{t-1}^4   \sum_{\bar{\ell} = \bar{t}_{s-1}}^{t - 1}    \eta_{\bar{\ell}}^2  \sum_{k = 1}^K \mathbb{E}\|d_{\bar{\ell}}^{(k)} - \bar{d}_{\bar{\ell}} \|^2} \nonumber\\
  & \overset{(b)}{\leq}   { \bigg( 1 + \frac{33}{32 I} \bigg)} \sum_{k=1}^K \mathbb{E}\| d_{t - 1}^{(k)} - \bar{d}_{t - 1}  \|^2 + \frac{K L}{2}   \eta_{t-1}  \mathbb{E}\|  \bar{d}_{t-1}  \|^2  +   \frac{ K \sigma^2 c^2}{2 b L}  \eta_{t-1}^3  \nonumber \\
   & \qquad \qquad \qquad\qquad \qquad  + \frac{ K  \zeta^2 c^2}{L} \eta_{t-1}^3   { +~ 64 L^2 I^2 c^2 \eta_{t-1}^4   \sum_{\bar{\ell} = \bar{t}_{s-1}}^{t - 1}    \eta_{\bar{\ell}}^2  \sum_{k = 1}^K \mathbb{E}\|d_{\bar{\ell}}^{(k)} - \bar{d}_{\bar{\ell}} \|^2}
\end{align}
where $(a)$ follows from using $\beta = 1/I$, the fact that $I + 1 \leq 2I$ and the definition of $a_t$ from Algorithm \ref{Algo_DR-STORM_batch_App}.
 
Note form Algorithm \ref{Algo_DR-STORM_batch_App} that we have $d_{t}^{(k)} = \bar{d}_t$ for $t = \bar{t}_{s- 1}$ with $s \in [S]$. This implies that for ${t} = \bar{t}_{s - 1}$ with $s \in [S]$, we have, $\sum_{k = 1}^K \| d_{t}^{(k)} - \bar{d}_t \|^2 = 0$. Applying \eqref{Eq: GradientError_t_App} above recursively for $t \in [\bar{t}_{s-1}+1, \bar{t}_s - 1]$ we get:
\begin{align}
  & \sum_{k=1}^K \mathbb{E} \| d_{t}^{(k)} - \bar{d}_{t} \|^2     \leq    \frac{K L}{2} \sum_{\ell = \bar{t}_{s - 1}}^{t-1} {\bigg( 1 + \frac{33}{32I} \bigg)^{t - 1 -\ell}} \eta_{\ell}  \mathbb{E}\|  \bar{d}_{\ell}  \|^2  + {\frac{K \sigma^2 c^2}{2 b L}}  \sum_{\ell = \bar{t}_{s - 1}}^{t-1} { \bigg( 1 + \frac{33}{32I} \bigg)^{t - 1 -\ell}} \eta_\ell^3  \nonumber\\
  &       + {\frac{K \zeta^2 c^2}{ L}}  \sum_{\ell = \bar{t}_{s - 1}}^{t-1} {\bigg( 1 + \frac{3}{2I} \bigg)^{t - 1 -\ell}} \eta_\ell^3  + 64 L^2 I^2 c^2 \sum_{\ell = {\bar{t}_{s-1}}}^{t-1} \bigg( 1 + \frac{33}{32I} \bigg)^{t - 1- \ell}   \eta_{\ell}^4   \sum_{\bar{\ell} = \bar{t}_{s-1}}^{\ell}    \eta_{\bar{\ell}}^2  \sum_{k = 1}^K \mathbb{E}\|d_{\bar{\ell}}^{(k)} - \bar{d}_{\bar{\ell}} \|^2\nonumber\\
  & \qquad \overset{(a)}{\leq}     \frac{K L}{2} \bigg( 1 + \frac{33}{32I} \bigg)^I  \sum_{\ell = \bar{t}_{s - 1}}^t  \eta_{\ell}  \mathbb{E}\|  \bar{d}_{\ell}  \|^2  + {\frac{K \sigma^2 c^2}{2 b L}} \bigg( 1 + \frac{33}{32I} \bigg)^I   \sum_{\ell = \bar{t}_{s - 1}}^t  \eta_\ell^3  \nonumber\\
  &  \qquad     + {\frac{K \zeta^2 c^2}{ L}}  \bigg( 1 + \frac{33}{32I} \bigg)^I  \sum_{\ell = \bar{t}_{s - 1}}^t   \eta_\ell^3  + 64 L^2 I^3 c^2 \bigg(\frac{1}{16LI} \bigg)^5  \bigg( 1 + \frac{33}{32I} \bigg)^I    \sum_{\bar{\ell} = \bar{t}_{s-1}}^{t}    \eta_{\bar{\ell}}  \sum_{k = 1}^K \mathbb{E}\|d_{\bar{\ell}}^{(k)} - \bar{d}_{\bar{\ell}} \|^2 \nonumber\\
  & \qquad \overset{(b)}{\leq} \frac{3 K L}{2}   \sum_{\ell = \bar{t}_{s - 1}}^t  \eta_{\ell}  \mathbb{E}\|  \bar{d}_{\ell}  \|^2  + {\frac{3 K \sigma^2 c^2}{2 b L}}     \sum_{\ell = \bar{t}_{s - 1}}^t  \eta_\ell^3     + {\frac{3 K \zeta^2 c^2}{ L}}     \sum_{\ell = \bar{t}_{s - 1}}^t   \eta_\ell^3  \nonumber\\
  &   \qquad \qquad \qquad \qquad \qquad \qquad \quad + 192 L^2 I^3 c^2 \bigg(\frac{1}{16LI} \bigg)^5      \sum_{\ell = \bar{t}_{s-1}}^{t}    \eta_{\ell}  \sum_{k = 1}^K \mathbb{E}\|d_\ell^{(k)} - \bar{d}_\ell \|^2,
  \label{Eq: GradientError_Recursive}
\end{align}
where inequality $(a)$ follows from the fact that $1 + 33/32I > 1$ and $t - 1 - \ell \leq I$ for $t \in [\bar{t}_{s - 1} , \bar{t}_s - 1]$ and $\ell \in [\bar{t}_{s - 1}, t]$ and inequality $(b)$ follows from the fact that $(1 + 33/32I)^I \leq \mathrm{e}^{33/32} < 3$ and $\eta_t \leq \frac{1}{16LI}$ for all $t \in [T]$.

Next, multiplying both sides of \eqref{Eq: GradientError_Recursive} by $\eta_t$ and summing over $t = \bar{t}_{s - 1}$ to $\bar{t}$ for $\bar{t} \in [\bar{t}_{s - 1}, \bar{t}_s - 1]$ with $s \in [S]$
\begin{align*}
 &   \sum_{t = \bar{t}_{s - 1}}^{\bar{t}}  \eta_t  \sum_{k=1}^K \mathbb{E}\| d_{t}^{(k)} - \bar{d}_{t} \|^2  \leq   \frac{3 K L}{2}  \sum_{t = \bar{t}_{s - 1}}^{\bar{t}}  \eta_t  \sum_{\ell = \bar{t}_{s - 1}}^t  \eta_{\ell}  \mathbb{E}\|  \bar{d}_{\ell}  \|^2  + {\frac{3 K \sigma^2 c^2}{2 b L}}   \sum_{t = \bar{t}_{s - 1}}^{\bar{t}}  \eta_t   \sum_{\ell = \bar{t}_{s - 1}}^t  \eta_\ell^3   \nonumber\\
  &  \qquad \qquad \qquad  + {\frac{3 K \zeta^2 c^2}{ L}}      \sum_{t = \bar{t}_{s - 1}}^{\bar{t}}  \eta_t \sum_{\ell = \bar{t}_{s - 1}}^t   \eta_\ell^3    + 192 L^2 I^3 c^2 \bigg(\frac{1}{16LI} \bigg)^5    \sum_{t = \bar{t}_{s - 1}}^{\bar{t}}  \eta_t   \sum_{\ell = \bar{t}_{s-1}}^{t}    \eta_{\ell}  \sum_{k = 1}^K \mathbb{E}\|d_\ell^{(k)} - \bar{d}_\ell \|^2 \\
 &   \overset{(a)}{\leq}   \frac{3 K L}{2}  \bigg( \sum_{t = \bar{t}_{s - 1}}^{\bar{t}}  \eta_t  \bigg) \sum_{\ell = \bar{t}_{s - 1}}^{\bar{t}}  \eta_{\ell}  \mathbb{E}\|  \bar{d}_{\ell}  \|^2  + {\frac{3 K \sigma^2 c^2}{2 b L}}   \bigg( \sum_{t = \bar{t}_{s - 1}}^{\bar{t}}  \eta_t  \bigg) \sum_{\ell = \bar{t}_{s - 1}}^{\bar{t}}  \eta_\ell^3   \nonumber\\
  &  \qquad \qquad   + {\frac{3 K \zeta^2 c^2}{ L}}    \bigg(  \sum_{t = \bar{t}_{s - 1}}^{\bar{t}}  \eta_t \bigg) \sum_{\ell = \bar{t}_{s - 1}}^{\bar{t}}   \eta_\ell^3    + 192 L^2 I^3 c^2 \bigg(\frac{1}{16LI} \bigg)^5  \bigg(  \sum_{t = \bar{t}_{s - 1}}^{\bar{t}}  \eta_t  \bigg)  \sum_{\ell = \bar{t}_{s-1}}^{\bar{t}}    \eta_{\ell}  \sum_{k = 1}^K \mathbb{E}\|d_\ell^{(k)} - \bar{d}_\ell \|^2\\
 &    \overset{(b)}{\leq}   \frac{3 K}{32}    \sum_{t = \bar{t}_{s - 1}}^{\bar{t}}  \eta_{t}  \mathbb{E}\|  \bar{d}_{t}  \|^2  + {\frac{3 K \sigma^2 c^2}{32 b L^2}}     \sum_{t = \bar{t}_{s - 1}}^{\bar{t}}  \eta_t^3   + {\frac{3 K \zeta^2 c^2}{ 16 L^2}}      \sum_{t = \bar{t}_{s - 1}}^{\bar{t}}   \eta_t^3   + 192 L^2 I^4 c^2 \bigg(\frac{1}{16LI} \bigg)^6    \sum_{t = \bar{t}_{s-1}}^{\bar{t}}    \eta_{t}  \sum_{k = 1}^K \mathbb{E}\|d_t^{(k)} - \bar{d}_t \|^2
\end{align*}
where inequality $(a)$ uses the fact that $t \in [\bar{t}_{s - 1} , \bar{t}]$ and $(b)$ follows from the fact that we have $\eta_t \leq 1/16 LI$ for all $t \in [T]$. Rearranging the terms we get
\begin{align*}
\bigg[ 1 -  192 L^2 I^4 c^2 \bigg(\frac{1}{16LI} \bigg)^6  \bigg]      \sum_{t = \bar{t}_{s - 1}}^{\bar{t}}  \eta_t  \sum_{k=1}^K \mathbb{E}\| d_{t}^{(k)} - \bar{d}_{t} \|^2 & \leq \frac{3 K}{32}    \sum_{t = \bar{t}_{s - 1}}^{\bar{t}}  \eta_{t}  \mathbb{E}\|  \bar{d}_{t}  \|^2 \\
&  \qquad  + {\frac{3 K \sigma^2 c^2}{32 b L^2}}     \sum_{t = \bar{t}_{s - 1}}^{\bar{t}}  \eta_t^3   + {\frac{3 K \zeta^2 c^2}{ 16 L^2}}      \sum_{t = \bar{t}_{s - 1}}^{\bar{t}}   \eta_t^3  
\end{align*}
using the fact that $c \leq 128 L^2/bK$, $b \geq 1$, $K \geq 1$ and $I \geq 1$, we have $\Big[ 1 -  192 L^2 I^4 c^2 \Big(\frac{1}{16LI} \Big)^6  \Big] \geq \frac{4}{5}$, therefore, we get
\begin{align*}
\frac{33}{256 K} \frac{(I - 1)}{I}  \sum_{t = \bar{t}_{s - 1}}^{\bar{t}}  \eta_t  \sum_{k=1}^K \mathbb{E}\| d_{t}^{(k)} - \bar{d}_{t} \|^2 & \leq       \sum_{t = \bar{t}_{s - 1}}^{\bar{t}} \frac{ \eta_{t}}{64}  \mathbb{E}\|  \bar{d}_{t}  \|^2  +   {\frac{ \sigma^2 c^2}{64 b L^2}}     \sum_{t = \bar{t}_{s - 1}}^{\bar{t}}  \eta_t^3   +   {\frac{ \zeta^2 c^2}{ 32 L^2}} \frac{(I - 1)}{I}     \sum_{t = \bar{t}_{s - 1}}^{\bar{t}}   \eta_t^3 .
\end{align*}
Hence, the lemma is proved. 
\end{proof}

 \subsubsection{Proof of Theorem \ref{Thm: PR_Convergence_Main}}
Next, to prove Theorem \ref{Thm: PR_Convergence_Main} we first prove an intermediate theorem by utilizing Lemmas \ref{lem: GradientError_PotentialFn_App} and \ref{Lem: Potential_Descent_App} derived above. 
\begin{theorem} 
\label{Thm: PR_Convergence_Main_App}
Choosing the parameters as
\begin{enumerate}[label = (\roman*)]
    \item $\displaystyle \bar{\kappa} = \frac{(bK)^{2/3} \sigma^{2/3}}{L}$, \vspace{0.1 in}
    \item {$\displaystyle c = \frac{64L^2}{bK} + \frac{\sigma^2}{24  \bar{\kappa}^3 LI}  \overset{(i)}{=} L^2 \bigg(\frac{64}{bK} + \frac{1}{24 (bK)^{2}  I} \bigg) {\leq} \frac{128 L^2}{bK}$,} \vspace{0.1 in}
    \item We choose $\{w_t\}_{t=0}^T$ as
    \begin{align*}
{ w_t = \max \bigg\{2 \sigma^2,  4096 L^3 I^3\bar{\kappa}^3 - \sigma^2t,  \frac{c^3 \bar{\kappa}^3 }{4096 L^3I^3} \bigg\}  \overset{(i)(ii)}{\leq}  \sigma^2 \max\bigg\{2 ,~ 4096 I^3 (bK)^{2} - t,  ~\frac{512}{bK  I^3} \bigg\}.}
\end{align*}
\end{enumerate}
Moreover, for any number of local updates, $I \geq 1$, batch sizes, $b \geq 1$, and initial batch size, $B \geq 1$, computed at individual WNs, \aname~satisfies:
\begin{align*}
 & \mathbb{E} \|\nabla f(\bar{x}_a) \|^2
     \leq    
   \bigg[\frac{32 LI}{T} + \frac{2 L}{(b K)^{2/3} T^{2/3}}\bigg] (f(\bar{x}_1) -   f^\ast )    
   + \bigg[ \frac{8 b I^2}{B T} + \frac{b I}{2  (b K)^{2/3} B T^{2/3}} \bigg] \sigma^2 \\
 & \qquad   +  \bigg[  \frac{256^2 I}{T} +  \frac{64^2}{(bK)^{2/3} T^{2/3}} \bigg] \sigma^2 \log(T+1)  + \bigg[  \frac{256^2 I}{T} +  \frac{64^2}{(bK)^{2/3} T^{2/3}} \bigg] \zeta^2 \frac{(I - 1)}{I} \log(T+1).
\end{align*}
\end{theorem}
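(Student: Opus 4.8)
The plan is to stitch together the potential-function machinery already in place and then carefully unwind the stepsize. First I would confirm the standing requirement $\eta_t \le \tfrac{1}{16LI}$ invoked throughout Lemmas~\ref{Lem: Potential_Descent_App}--\ref{lem: GradientError_PotentialFn_App}: the middle argument of the $\max$ defining $w_t$ forces $w_t + \sigma^2 t \ge 4096 L^3 I^3 \bar\kappa^3$ for every $t$, so $\eta_t = \bar\kappa/(w_t+\sigma^2 t)^{1/3} \le 1/(16LI)$ automatically. With this in hand I would substitute the accumulated-consensus-error bound of Lemma~\ref{lem: GradientError_PotentialFn_App} into the potential descent of Lemma~\ref{Lem: Potential_Descent_App}. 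The $\tfrac{1}{64}\eta_t\,\mathbb{E}\|\bar d_t\|^2$ produced by the former is absorbed into the $-(\tfrac{7\eta_t}{16}-\tfrac{\eta_t^2 L}{2})\mathbb{E}\|\bar d_t\|^2$ margin of the latter; since $\tfrac{7}{16}-\tfrac{\eta_t L}{2}-\tfrac{1}{64}\ge 0$ whenever $\eta_t\le 1/(16LI)$, the whole $\mathbb{E}\|\bar d_t\|^2$ term can be discarded, leaving a clean per-block inequality
\begin{align*}
\mathbb{E}[\Phi_{\bar t+1}-\Phi_{\bar t_{s-1}}] \le -\sum_{t=\bar t_{s-1}}^{\bar t}\frac{\eta_t}{2}\,\mathbb{E}\|\nabla f(\bar x_t)\|^2 + \frac{3\sigma^2 c^2}{64 L^2}\sum_{t=\bar t_{s-1}}^{\bar t}\eta_t^3 + \frac{\zeta^2 c^2}{32 L^2}\frac{(I-1)}{I}\sum_{t=\bar t_{s-1}}^{\bar t}\eta_t^3.
\end{align*}

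Next I would telescope. Choosing $\bar t = \bar t_s-1$ makes the left side $\mathbb{E}[\Phi_{\bar t_s}-\Phi_{\bar t_{s-1}}]$, so summing over the $S=T/I$ communication blocks collapses to $\mathbb{E}[\Phi_{T+1}-\Phi_1]$ and the full sums $\sum_{t=1}^T$ on the right. Rearranging isolates $\tfrac12\sum_{t=1}^T\eta_t\,\mathbb{E}\|\nabla f(\bar x_t)\|^2$, which I bound by $\Phi_1-f^\ast$ (using $\Phi_{T+1}\ge f^\ast$, valid since $\|\bar e_{T+1}\|^2\ge 0$ and $f\ge f^\ast$) plus the two noise sums. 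The initial potential is controlled by Lemma~\ref{Lem: e_bar_bound_Batch}, giving $\mathbb{E}\|\bar e_1\|^2\le \sigma^2/(KB)$; together with $\eta_0=1/(16LI)$ this converts the initial error term into the $\tfrac{8bI^2}{BT}\sigma^2$ and $\tfrac{bI}{2(bK)^{2/3}BT^{2/3}}\sigma^2$ contributions of the claim.

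Finally I would unwind the stepsize. Using monotonicity $\eta_t\ge\eta_T$ together with $\bar x_a$ drawn uniformly from $\{\bar x_t\}_{t=1}^T$, I pass from the $\eta_t$-weighted sum to $\mathbb{E}\|\nabla f(\bar x_a)\|^2=\tfrac1T\sum_t\mathbb{E}\|\nabla f(\bar x_t)\|^2$ at the cost of a factor $\tfrac{2}{\eta_T T}$. The prefactor splits into the advertised two terms via $w_T+\sigma^2 T\le 4096 L^3 I^3\bar\kappa^3+\sigma^2 T$ and subadditivity of $(\cdot)^{1/3}$, yielding $\tfrac{1}{\eta_T T}\le \tfrac{16LI}{T}+\tfrac{L}{(bK)^{2/3}T^{2/3}}$ after inserting $\bar\kappa=(bK)^{2/3}\sigma^{2/3}/L$. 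For the noise sums I use $w_t+\sigma^2 t\ge\sigma^2 t$ to obtain $\sum_{t=1}^T\eta_t^3=\bar\kappa^3\sum_t(w_t+\sigma^2 t)^{-1}\le \tfrac{\bar\kappa^3}{\sigma^2}(1+\log T)$, producing the $\log(T+1)$ factors; substituting $c\le 128L^2/(bK)$ collapses $c^2\bar\kappa^3$ and matches the remaining constants.

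The main obstacle I anticipate is the stepsize bookkeeping in this last paragraph: the $\max$ in $w_t$ encodes two distinct regimes — a constant phase $\eta_t\equiv 1/(16LI)$ while the middle argument dominates, and a diminishing phase $\eta_t\asymp\bar\kappa/(\sigma^2 t)^{1/3}$ afterward — and one must track both simultaneously to recover the exact coefficients ($32LI$, $2L$, $256^2$, $64^2$, etc.) rather than merely the order of growth. A secondary subtlety is justifying the cross-block telescoping: it is legitimate precisely because at each $\bar t_s$ the server step enforces $x_{\bar t_s}^{(k)}=\bar x_{\bar t_s}$ and $d_{\bar t_s}^{(k)}=\bar d_{\bar t_s}$, so the consensus error vanishes at block boundaries and the per-block bounds chain together with no residual terms.
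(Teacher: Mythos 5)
Your proposal is correct and follows essentially the same route as the paper's proof: the same absorption of the $\tfrac{\eta_t}{64}\mathbb{E}\|\bar d_t\|^2$ term from Lemma~\ref{lem: GradientError_PotentialFn_App} into the margin of Lemma~\ref{Lem: Potential_Descent_App}, the same block telescoping with $\Phi_{T+1}\ge f^\ast$ and Lemma~\ref{Lem: e_bar_bound_Batch}, and the same worst-case bounds $w_0, w_T \le 4096\,L^3I^3\bar\kappa^3$ to extract the stated coefficients. The only (cosmetic) deviation is in the log sum: the paper uses $w_t \ge 2\sigma^2$ to get $w_t + \sigma^2 t \ge \sigma^2(1+t)$ and hence exactly $\ln(T+1)$ via Lemma~\ref{Lem: AD_Sum_1overT}, whereas your $w_t+\sigma^2 t\ge\sigma^2 t$ yields the marginally weaker $1+\log T$; this is a one-line fix and does not affect the argument.
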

\begin{proof}
Substituting the gradient consensus error derived in Lemma \ref{lem: GradientError_PotentialFn_App} into the Potential function descent derived in Lemma \ref{Lem: Potential_Descent_App}, we can write the descent of potential function for $\bar{t} \in [\bar{t}_{s-1}, \bar{t}_s - 1]$ with $s \in [S]$ as:
\begin{align*}
    \mathbb{E}[ \Phi_{\bar{t} + 1} - \Phi_{\bar{t}_{s - 1}}] & \leq   {- \sum_{t = \bar{t}_{s-1}}^{\bar{t}} \left( \frac{27 \eta_t}{64} - \frac{\eta_t^2 L}{2} \right)  \mathbb{E} \| \bar{d}_{t}  \|^2 }  - \sum_{t = \bar{t}_{s - 1}}^{\bar{t}} \frac{\eta_t}{2} \mathbb{E} \|\nabla f(\bar{x}_t) \|^2      \\
    & \qquad \qquad \qquad       { +  \frac{c^2 \sigma^2   }{32L^2 } \sum_{t = \bar{t}_{s - 1}}^{\bar{t}}   \eta_{t}^3 + {\frac{ c^2   \sigma^2}{64 b L^2} } \sum_{t = \bar{t}_{s-1}}^{\bar{t}}  \eta_t^3 + {\frac{ c^2   \zeta^2}{32 L^2} \frac{(I - 1)}{I}} \sum_{t = \bar{t}_{s-1}}^{\bar{t}}  \eta_t^3 }\\
    & \overset{(a)}{\leq}   - \sum_{t = \bar{t}_{s - 1}}^{\bar{t}} \frac{\eta_t}{2} \mathbb{E} \|\nabla f(\bar{x}_t) \|^2       +  {\frac{3 c^2 \sigma^2    }{64 L^2 }} \sum_{t = \bar{t}_{s - 1}}^{\bar{t}}   \eta_{t}^3 + {\frac{ c^2   \zeta^2}{32 L^2} \frac{(I - 1)}{I}}  \sum_{t = \bar{t}_{s - 1}}^{\bar{t}}  \eta_t^3. 
\end{align*}
where $(a)$ follows from the fact that $\eta_t \leq \frac{1}{16LI}$ for all $t \in [T]$ and $b \geq 1$. Taking $\bar{t} = \bar{t}_s - 1= sI$, the above expression can be written as:
\begin{align*}
    \mathbb{E}[ \Phi_{\bar{t}_s} - \Phi_{\bar{t}_{s - 1}}] & \leq     - \sum_{t = \bar{t}_{s - 1}}^{\bar{t}_s - 1} \frac{\eta_t}{2} \mathbb{E} \|\nabla f(\bar{x}_t) \|^2       +  {\frac{3 c^2 \sigma^2    }{64 L^2 }} \sum_{t = \bar{t}_{s - 1}}^{\bar{t}_s - 1}   \eta_{t}^3 + {\frac{ c^2   \zeta^2}{32 L^2} \frac{(I - 1)}{I}}  \sum_{t = \bar{t}_{s - 1}}^{\bar{t}_s - 1}  \eta_t^3. 
\end{align*}
Summing over all the restarts, i.e, $s \in [S]$, we get:
\begin{align*}
    \mathbb{E}[ \Phi_{\bar{t}_S} - \Phi_{\bar{t}_{0}}] & {\leq}     - \sum_{t = \bar{t}_{0}}^{\bar{t}_S - 1} \frac{\eta_t}{2} \mathbb{E} \|\nabla f(\bar{x}_t) \|^2       +  {\frac{3 c^2 \sigma^2    }{64 L^2 }} \sum_{t = \bar{t}_{0}}^{\bar{t}_S - 1}   \eta_{t}^3 + {\frac{ c^2   \zeta^2}{32 L^2}  \frac{(I - 1)}{I}}  \sum_{t = \bar{t}_{0}}^{\bar{t}_S - 1}  \eta_t^3. 
\end{align*}
Assuming that $T = SI$, then from the definition of $\bar{t}_s$ that $\bar{t}_0 = 1$ and $\bar{t}_S = SI + 1 = T + 1$, we get
\begin{align}
   \sum_{t = 1}^T \frac{\eta_t}{2} \mathbb{E} \|\nabla f(\bar{x}_t) \|^2    &  \leq   \mathbb{E} [\Phi_{1} -   \Phi_{T+1}  ]    +  {\frac{3 c^2 \sigma^2    }{64 L^2 }} \sum_{t = 1}^{T}   \eta_{t}^3 + {\frac{ c^2   \zeta^2}{32 L^2} \frac{(I - 1)}{I}}  \sum_{t = 1}^{T}  \eta_t^3 \nonumber\\
   &  \overset{(a)}{\leq}    {f(\bar{x}_1) -   f^\ast + \frac{bK}{64L^2} \frac{\mathbb{E}\|\bar{e}_1 \|^2}{\eta_0}  +
   {\frac{3 c^2 \sigma^2    }{64 L^2 }} \sum_{t = 1}^{T}   \eta_{t}^3 + {\frac{ c^2   \zeta^2}{32 L^2} \frac{(I - 1)}{I}}  \sum_{t = 1}^{T}  \eta_t^3 } \nonumber\\
  &  \overset{(b)}{\leq}    f(\bar{x}_1) -   f^\ast + \frac{\sigma^2}{64 L^2} \frac{b}{B\eta_0} +  {\frac{3 c^2 \sigma^2    }{64 L^2 }} \sum_{t = 1}^{T}   \eta_{t}^3 + {\frac{ c^2   \zeta^2}{32 L^2} \frac{(I - 1)}{I}}  \sum_{t = 1}^{T}  \eta_t^3 .
  \label{Eq: Grad_Norm_Sum}
\end{align}
where $(a)$ follows from the fact that $f^\ast \leq \Phi_{T+1}$ and $(b)$ results from application of Lemma \ref{Lem: e_bar_bound_Batch}.

First, let us consider the last term of the \eqref{Eq: Grad_Norm_Sum} above, we have from the definition of the stepsize $\eta_t$ 
\begin{align}
     \sum_{t=1}^T \eta_t^3 & =    \sum_{t = 1}^{T} \frac{\bar{\kappa}^3 }{w_t + \sigma^2 t} \nonumber \\
     & \overset{(a)}{\leq}  \sum_{t = 1}^{T} \frac{\bar{\kappa}^3  }{\sigma^2 + \sigma^2 t}\nonumber \\
    & = \frac{ \bar{\kappa}^3}{\sigma^2}   \sum_{t = 1}^{T} \frac{1}{1 +   t} \nonumber\\
     & \overset{(b)}{\leq} \frac{  \bar{\kappa}^3 }{\sigma^2}   \ln(T+1).
     \label{Eq: Sum_OverT_LastTerm}
\end{align}
where inequality $(a)$ above follows from the fact that we have 
$w_t \geq 2 \sigma^2 > \sigma^2$ and inequality $(b)$ follows from the application of Lemma \ref{Lem: AD_Sum_1overT}.

Substituting \eqref{Eq: Sum_OverT_LastTerm} in \eqref{Eq: Grad_Norm_Sum}, dividing both sides by $T$ and using the fact that $\eta_t$ is non-increasing in $t$ we have
\begin{align}
  \frac{1}{T} \sum_{t = 1}^T  \mathbb{E} \|\nabla f(\bar{x}_t) \|^2
   &  \leq  \frac{2 (f(\bar{x}_1) -   f^\ast )}{\eta_T T}  + \frac{1}{\eta_T T} \frac{\sigma^2 }{32 L^2} \frac{b}{B\eta_0 }   +   {\frac{1}{\eta_T T} \frac{3 c^2 \bar{\kappa}^3}{32 L^2}  \log(T+1)}\nonumber \\
   & \qquad \qquad \qquad \qquad \qquad \qquad \quad
   + \frac{1}{\eta_T T}  \frac{c^2 \bar{\kappa}^3}{16L^2} \frac{ \zeta^2}{ \sigma^2 }  \frac{(I - 1)}{I} \log(T+1) \nonumber \\
  & \overset{(a)}{\leq}  \frac{2 (f(\bar{x}_1) -   f^\ast )}{\eta_T T}  + \frac{1}{\eta_T T} \frac{\sigma^2 }{32 L^2} \frac{b}{B\eta_0 }   +   {\frac{1}{\eta_T T} \frac{ c^2 \bar{\kappa}^3}{4 L^2}  \log(T+1)}\nonumber \\
   & \qquad \qquad \qquad \qquad \qquad \qquad \quad
   + \frac{1}{\eta_T T}  \frac{c^2 \bar{\kappa}^3}{4 L^2} \frac{ \zeta^2}{ \sigma^2 }  \frac{(I - 1)}{I} \log(T+1).
   \label{Eq: Grad_Norm_Sum1}
\end{align}
where $(a)$ above utilizes the fact that $1/16 < 3/32 < 1/4$.

Now considering each term of \eqref{Eq: Grad_Norm_Sum1} above separately and using the definition of $\displaystyle \eta_t = \frac{\bar{\kappa}}{(w_t + \sigma^2 t)^{1/3}}$ we get from the coefficient of the first term:
\begin{align}
    \frac{1}{{\eta_T T}} = \frac{(w_T + \sigma^2T)^{1/3}}{\bar{\kappa} T} \overset{(a)}{\leq} \frac{w_T^{1/3}}{\bar{\kappa} T} + \frac{ \sigma^{2/3}}{\bar{\kappa} T^{2/3}} \overset{(b)}{\leq}{ \frac{16 LI}{T} + \frac{L}{(b K)^{2/3} T^{2/3}}. }
    \label{Eq: GradNorm_1stTerm}
\end{align}
where inequality $(a)$ follows from identity $(x + y)^{1/3} \leq x^{1/3} + y^{1/3}$ and inequality $(b)$ follows from the definition of $\bar{\kappa}$ and $w_T$
$${ w_T = \max \bigg\{2 \sigma^2,  4096 L^3 I^3\bar{\kappa}^3 - \sigma^2 T,  \frac{c^3 \bar{\kappa}^3 }{4096 L^3I^3} \bigg\} \leq  \sigma^2 \max\bigg\{2 ,~ 4096 I^3 (bK)^{2} - T,  ~\frac{  512}{bK I^3} \bigg\},}$$
where we used $\displaystyle 4096
 L^3 I^3\bar{\kappa}^3  >  4096
 L^3 I^3\bar{\kappa}^3 - \sigma^2 T \geq \max \bigg\{2 \sigma^2,  \frac{c^3 \bar{\kappa}^3 }{4096
 L^3I^3} \bigg\}$. Note that this choice of $w_T$ captures the worst case guarantees for \aname.

Now, let us consider the second term of \eqref{Eq: Grad_Norm_Sum1}, we have from the definition of $\eta_0$ and $\eta_T$ 
\begin{align}
\frac{1}{\eta_T T} \frac{\sigma^2 }{32 L^2} \frac{b}{B\eta_0 }    & \leq  
\bigg( \frac{16 LI}{T} + \frac{L}{(b K)^{2/3} T^{2/3}} \bigg) \times \frac{\sigma^2}{32 L^2} \times \frac{b w_0^{1/3}}{ B \bar{\kappa} }
\nonumber\\
& \overset{(a)}{\leq} \bigg( \frac{16 LI}{T} + \frac{L}{(b K)^{2/3} T^{2/3}} \bigg) \times \frac{\sigma^2}{32 L^2} \times \frac{16 L I b }{ B }\nonumber  \\
& \overset{(b)}{\leq}     \frac{8 b I^2}{ B T} \sigma^2   + \frac{b I}{(bK)^{2/3} B T^{2/3}} \frac{\sigma^2}{2} .   
\label{Eq: GradNorm_2ndTerm}
\end{align}
where inequality $(a)$ follows from the identity $(x + y)^{1/3} \leq x^{1/3} + y^{1/3}$ and $(b)$ follows from the definition of $\bar{\kappa}$ and using $w_0 \leq 4096
 L^3 I^3 \bar{\kappa}^3$ and $w_T \leq 4096 L^3 I^3 \bar{\kappa}^3$ (Similar to the approach in \eqref{Eq: GradNorm_1stTerm} this choice of $w_0$ and $w_T$ capture the worst case convergence guarantees for \aname.)

{Finally, considering the term $\frac{1}{\eta_T T} \frac{ c^2 \bar{\kappa}^3}{4 L^2}$ common to the last two terms in \eqref{Eq: Grad_Norm_Sum1} above, we have from the definition of the stepsize, $\eta_t$,}
{
\begin{align}
 \frac{1}{\eta_T T}   \frac{ c^2 \bar{\kappa}^3}{4 L^2}  & \leq   \bigg( \frac{16 LI}{T} + \frac{L}{(b K)^{2/3} T^{2/3}} \bigg) \times   \bigg(\frac{128 L^2}{bK} \bigg)^2 \times \frac{(bK)^2 \sigma^2}{L^3} \times \frac{1}{4 L^2} \nonumber \\
 & \overset{(a)}{\leq} 256^2 \sigma^2 \frac{  I}{T} + 64^2 \sigma^2 \frac{ 1}{(bK)^{2/3} T^{2/3}}.
    \label{Eq: GradNorm_3rdTerm}
\end{align}
where inequality $(a)$ follows from the identity $(x + y)^{1/3} \leq x^{1/3} + y^{1/3}$ and $(b)$ again uses $w_T \leq 4096
 L^3I^3\bar{\kappa}^3$ along with the definition of $\bar{\kappa}$ and $c$.}

{Finally, substituting the bounds obtained in \eqref{Eq: GradNorm_1stTerm}, \eqref{Eq: GradNorm_2ndTerm} and \eqref{Eq: GradNorm_3rdTerm} into \eqref{Eq: Grad_Norm_Sum1}, we get
\begin{align*}
& \mathbb{E} \|\nabla f(\bar{x}_a) \|^2
     \leq    
   \bigg[\frac{32 LI}{T} + \frac{2 L}{(b K)^{2/3} T^{2/3}}\bigg] (f(\bar{x}_1) -   f^\ast )    
   + \bigg[ \frac{8 b I^2}{B T} + \frac{b I}{2  (b K)^{2/3} B T^{2/3}} \bigg] \sigma^2 \\
 &  \quad  +  \bigg[  \frac{256^2 I}{T} +  \frac{64^2}{(bK)^{2/3} T^{2/3}} \bigg] \sigma^2 \log(T+1)  + \bigg[  \frac{256^2 I}{T} +  \frac{64^2}{(bK)^{2/3} T^{2/3}} \bigg] \zeta^2 \frac{(I - 1)}{I} \log(T+1).
\end{align*}}
Hence, the theorem is proved. 
\end{proof}
Next, using Theorem \ref{Thm: PR_Convergence_Main_App} we prove Theorem \ref{Thm: PR_Convergence_Main}.
 
{
\begin{theorem}[Theorem \ref{Thm: PR_Convergence_Main}: Trade-off: Local Updates vs Batch Sizes]
\label{cor: Trade-off_App}
With the parameters chosen according to Theorem \ref{Thm: PR_Convergence_Main_App} and 
for any $\nu \in [0, 1]$ at each WN we set the total number of local updates as $I = \mathcal{O}\big(({T}/{K^2})^{\nu/3}\big)$, batch size, $b = \mathcal{O} \big(({T}/{K^2})^{{1}/{2} - {\nu}/{2}} \big)$, and the initial batch size, $B = bI$. Then \aname~satisfies:
\begin{enumerate}[leftmargin = 0.6 cm, label = (\roman*)]
     \item We have: 
    \begin{align*}
 \mathbb{E}\| \nabla f(\bar{x}_a) \|^2 = \mathcal{O}\bigg( \frac{f(\bar{x}_1) - f^\ast}{K^{2\nu/3}T^{1 - \nu/3}} \bigg) + \tilde{\mathcal{O}}\bigg( \frac{\sigma^2 }{K^{2\nu/3}T^{1 - \nu/3}}\bigg) + \tilde{\mathcal{O}} \bigg(\frac{(I - 1)}{I} \times \frac{\zeta^2}{K^{2\nu/3}T^{1 - \nu/3}} \bigg).
\end{align*}
\item Sample Complexity: To achieve an $\epsilon$-stationary point \aname~requires at most $\mathcal{O}(\epsilon^{-3/2})$ gradient computations. This implies that each WN requires at most $\mathcal{O}(K^{-1} \epsilon^{-3/2})$ gradient computations, thereby achieving linear speedup with the number of WNs present in the network. 
\item Communication Complexity: To achieve an $\epsilon$-stationary point \aname~requires at most $\mathcal{O}(\epsilon^{-1})$ communication rounds. 
\end{enumerate}
\end{theorem}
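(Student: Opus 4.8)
The plan is to reduce everything to the explicit non-asymptotic bound already established in Theorem \ref{Thm: PR_Convergence_Main_App}, so that the trade-off statement becomes a matter of substituting the prescribed schedules $I = \mathcal{O}((T/K^2)^{\nu/3})$, $b = \mathcal{O}((T/K^2)^{1/2-\nu/2})$, and $B = bI$ and simplifying. First I would verify that these choices are admissible, i.e. that they remain consistent with the standing constraints $\eta_t \le 1/(16LI)$ and $c \le 128L^2/(bK)$ used throughout the derivation of Theorem \ref{Thm: PR_Convergence_Main_App}; this is what licenses invoking that bound verbatim.

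For part (i), I would substitute into each bracketed group of the bound and check that every one collapses to the common rate $\mathcal{O}(1/(K^{2\nu/3}T^{1-\nu/3}))$. The key intermediate identity is $(bK)^{2/3} = T^{(1-\nu)/3}K^{2\nu/3}$, which turns the $T^{-2/3}$-type terms (e.g. $2L/((bK)^{2/3}T^{2/3})$) into $T^{-(3-\nu)/3} = T^{-(1-\nu/3)}$ with the correct $K^{-2\nu/3}$ prefactor; simultaneously $32LI/T = 32L(T/K^2)^{\nu/3}/T$ already has exactly this shape. The initial-batch terms $8bI^2/(BT)$ and $bI/(2(bK)^{2/3}BT^{2/3})$ are where the choice $B = bI$ pays off: they reduce to $8I/T$ and $1/(2(bK)^{2/3}T^{2/3})$ respectively, i.e. to the same two shapes as the other terms. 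The two $\log(T+1)$ terms carry the identical dependence and are absorbed into $\tilde{\mathcal{O}}(\cdot)$, while the heterogeneity term retains its $(I-1)/I$ prefactor, yielding exactly the claimed expression.

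For parts (ii) and (iii) I would set the rate from (i) equal to $\epsilon$ and invert. Ignoring logs, $K^{2\nu/3}T^{1-\nu/3} = \Theta(\epsilon^{-1})$ gives $T = \Theta(\epsilon^{-3/(3-\nu)}K^{-2\nu/(3-\nu)})$. The total IFO count is $\mathcal{O}(KbT + KB)$; plugging in, $KbT = K^{\nu}T^{3/2-\nu/2}$, and since $T^{3/2-\nu/2} = T^{(3-\nu)/2}$ evaluates to $\epsilon^{-3/2}K^{-\nu}$, the leading term is $KbT = \mathcal{O}(\epsilon^{-3/2})$ --- crucially independent of $\nu$ --- whereas the initial-batch contribution $KB = \mathcal{O}(\epsilon^{-1/2})$ is of strictly lower order. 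Dividing by $K$ gives the per-node $\mathcal{O}(K^{-1}\epsilon^{-3/2})$, hence linear speedup. The communication count is $T/I = T^{1-\nu/3}K^{2\nu/3}$, which with the same $T$ evaluates to $\epsilon^{-1}$, again independent of $\nu$.

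The calculations themselves are routine; the only content requiring care is confirming that the exponents of $I$ and $b$ in \eqref{eq: I:b_SAMVR} are precisely the ones that make every term of Theorem \ref{Thm: PR_Convergence_Main_App} scale identically \emph{and} make both final complexities cancel their $\nu$- and $K$-dependence in the right way. In other words, the obstacle is not analytical but careful algebraic bookkeeping: a single mismatched exponent would either break the common rate in (i) or leave a residual $\nu$-dependence in the sample/communication complexity. I would therefore double-check the balances $KbT \propto \epsilon^{-3/2}$ and $T/I \propto \epsilon^{-1}$ independently as consistency tests before asserting the $\nu$-invariance claimed in the statement.
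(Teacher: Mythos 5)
Your proposal is correct and follows essentially the same route as the paper's own proof: substitute $B = bI$ and the schedules \eqref{eq: I:b_SAMVR} into the explicit bound of Theorem \ref{Thm: PR_Convergence_Main_App} (your identity $(bK)^{2/3} = T^{(1-\nu)/3}K^{2\nu/3}$ is exactly the algebra the paper performs implicitly), then invert $K^{2\nu/3}T^{1-\nu/3} = \Theta(\epsilon^{-1})$ to get $T$ and evaluate $bT$ per node and $T/I$, just as in the appendix. Your two extra checks---admissibility of $\eta_t \le 1/(16LI)$ (which the definition of $w_t$ enforces automatically, so it is not an additional restriction) and the explicit accounting of the initial-batch cost $KB = \mathcal{O}(\epsilon^{-1/2})$ as lower order---are slightly more careful than the paper's bookkeeping but do not alter the argument.
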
}
\begin{proof}
{The proof of statement (i) follows from the statement of Theorem \ref{Thm: PR_Convergence_Main_App} and substituting the values of parameters $B$, $I$ and $b$ in the expression. First, replacing $B = b I$ in the statement of Theorem \ref{Thm: PR_Convergence_Main_App} yields
\begin{align*}
& \mathbb{E} \|\nabla f(\bar{x}_a) \|^2
     \leq    
   \bigg[\frac{32 LI}{T} + \frac{2 L}{(b K)^{2/3} T^{2/3}}\bigg] (f(\bar{x}_1) -   f^\ast )    
   + \bigg[ \frac{8 I}{  T} + \frac{1}{2  (b K)^{2/3}  T^{2/3}} \bigg] \sigma^2 \\
 & \quad  +  \bigg[  \frac{256^2 I}{T} +  \frac{64^2}{(bK)^{2/3} T^{2/3}} \bigg] \sigma^2 \log(T+1) + \bigg[  \frac{256^2 I}{T} +  \frac{64^2}{(bK)^{2/3} T^{2/3}} \bigg] \zeta^2 \frac{(I - 1)}{I} \log(T+1).
\end{align*}
Then using the fact that $I = \mathcal{O}\big((T/K^2)^{\nu/3}\big)$ and $b = \mathcal{O}\big((T/K^2)^{1/2 - \nu/2}\big)$ yields the expression of statement $(i)$.
}

Next, we compute the computation and communication complexity of the algorithm.
\begin{itemize}[leftmargin = 0.6 cm]
    \item {\em Sample Complexity} [Theorem \ref{cor: Trade-off_App}(ii)]: From the statement of Theorem \ref{cor: Trade-off_App}(i), total iterations required to achieve an $\epsilon$-stationary point are: 
    \begin{align}
    \label{Eq: Total_Iterations_App}
    \tilde{\mathcal{O}} \bigg( \frac{1}{K^{2\nu/3}T^{1 - \nu/3}} \bigg) = \epsilon \quad \Rightarrow \quad T = \mathcal{\tilde{O}} \bigg( \frac{1}{K^{2\nu/(3 - \nu)}\epsilon^{3/(3- \nu)}} \bigg).
     \end{align}
In each iteration, each WN computes $2b$ stochastic gradients, therefore, the total gradient computations at each WN are $2bT$. Using $b = \mathcal{O}\big(  ( {T}/{K^2} )^{{1}/{2} - {\nu}/{2}} \big)$, we get the total gradient computations required at each WN as:
\begin{align*}
    b T = \tilde{\mathcal{O}} \bigg( \frac{T^{3/2 - \nu/2}}{K^{1 - \nu}} \bigg) \overset{\eqref{Eq: Total_Iterations_App}}{=} \tilde{\mathcal{O}} \bigg( \frac{1}{K \epsilon^{3/2}} \bigg) 
\end{align*}
This implies that the sample complexity is $\tilde{\mathcal{O}}(\epsilon^{-3/2})$.
% as we have:
% \begin{align*}
%  \frac{3}{3 - \nu} \times \frac{3 - \nu}{2} = \frac{3}{2} \qquad \text{and} \qquad
%  \frac{2 \nu}{3 - \nu} \times \frac{3 - \nu}{2} + 1 - \nu = 1.
% \end{align*}
\item {\em Communication Complexity} [Theorem \ref{cor: Trade-off_App}(iii)]: The total rounds of communication to achieve an $\epsilon$-stationary point are $T/I$, with $I = \mathcal{O}\big( ( {T}/{K^2} )^{\nu/3} \big)$ and $T$ given in \eqref{Eq: Total_Iterations_App}, therefore, we have the communication complexity as:
\begin{align*}
    \frac{T}{I} =  \mathcal{\tilde{O}} \big( T^{1 - \nu/3} K^{2\nu/3}  \big) \overset{\eqref{Eq: Total_Iterations_App}}{=} \mathcal{\tilde{O}} \bigg(\frac{1}{\epsilon} \bigg).
\end{align*}
 \end{itemize} 
Hence, the theorem is proved.
\end{proof}
\begin{cor}[Fed\aname: Local Updates]
\label{cor: LocalComputation_App}
With the choice of parameters given in Theorem \ref{Thm: PR_Convergence_Main_App}. At each WN, setting constant batch size, $b \geq 1$, number of local updates, $I = ({T}/{b^2 K^2})^{1/3}$, and the initial batch size, $B = bI$. Then \aname~satisfies the following: 
% {\color{red}I would say add the O(1/T) term here, just for the sake of completeness.}
\begin{enumerate}[leftmargin = 0.6 cm, label = (\roman*)]
    \item We have: 
    $$\displaystyle \mathbb{E}\| \nabla f(\bar{x}_a) \|^2 = \mathcal{O}\bigg( \frac{f(\bar{x}_1) - f^\ast}{(bK)^{2/3} T^{2/3}} \bigg) + \tilde{\mathcal{O}}\bigg( \frac{\sigma^2}{(bK)^{2/3} T^{2/3}}\bigg) + \tilde{\mathcal{O}} \bigg( \frac{\zeta^2}{(bK)^{2/3} T^{2/3}} \bigg).$$
\item Sample Complexity: To achieve an $\epsilon$-stationary point Fed\aname~requires at most $\mathcal{\tilde{O}}(\epsilon^{-3/2})$ gradient computations while achieving linear speedup with the number of WNs. 
\item Communication Complexity: To achieve an $\epsilon$-stationary point Fed\aname~requires at most $\mathcal{\tilde{O}}(\epsilon^{-1})$ communication rounds. 
\end{enumerate}
\end{cor}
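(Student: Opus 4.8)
The plan is to obtain this corollary as a direct specialization of the worst-case bound established in Theorem \ref{Thm: PR_Convergence_Main_App}, since all the structural work (the descent lemma, the gradient-error contraction, the gradient-consensus bound, and the potential-function argument) has already been carried out there for arbitrary $I \geq 1$, $b \geq 1$, and $B \geq 1$. I would not re-derive any of these; I would simply insert the prescribed parameter values and track how the four bracketed coefficients in that bound collapse onto a single rate. This is the same parameter regime as the $\nu = 1$ endpoint of Theorem \ref{Thm: PR_Convergence_Main}, written out with an explicit constant batch size $b$.

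First I would substitute the initial batch size $B = bI$ into the $\sigma^2$-variance contribution that originates from the bound on $\mathbb{E}\|\bar{e}_1\|^2$ in Lemma \ref{Lem: e_bar_bound_Batch}. The two pieces $\tfrac{8bI^2}{BT}$ and $\tfrac{bI}{2(bK)^{2/3}BT^{2/3}}$ then simplify to $\tfrac{8I}{T}$ and $\tfrac{1}{2(bK)^{2/3}T^{2/3}}$ respectively, eliminating the explicit dependence on $B$. Next, and this is the crux of the computation, I would substitute $I = (T/(b^2K^2))^{1/3}$, which yields the identity $\tfrac{I}{T} = \tfrac{1}{(bK)^{2/3}T^{2/3}}$. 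This single identity is exactly what forces every term scaling like $I/T$ (namely $\tfrac{32LI}{T}$, $\tfrac{8I}{T}$, and the two $\tfrac{256^2 I}{T}$ logarithmic terms) to collapse onto the same $\tfrac{1}{(bK)^{2/3}T^{2/3}}$ rate as the remaining terms. Consequently every bracket becomes $\mathcal{O}\!\big((bK)^{-2/3}T^{-2/3}\big)$ up to logarithmic factors, and bounding the heterogeneity prefactor by $(I-1)/I \leq 1$ lets me read off statement (i) directly.

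For the complexities I would set the dominant $\tilde{\mathcal{O}}\big((bK)^{-2/3}T^{-2/3}\big)$ term equal to $\epsilon$, which gives $T = \tilde{\mathcal{O}}\big((bK)^{-1}\epsilon^{-3/2}\big)$. Since each WN draws $2b$ samples per iteration, the per-node sample count is $2bT = \tilde{\mathcal{O}}(K^{-1}\epsilon^{-3/2})$, for a network total of $\tilde{\mathcal{O}}(\epsilon^{-3/2})$ and the claimed linear speedup, proving (ii). For the communication complexity (iii), the number of rounds is $T/I = T^{2/3}(bK)^{2/3}$, and inserting the value of $T$ gives $T^{2/3}(bK)^{2/3} = \tilde{\mathcal{O}}\big((bK)^{-2/3}\epsilon^{-1}\cdot(bK)^{2/3}\big) = \tilde{\mathcal{O}}(\epsilon^{-1})$.

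There is no genuine obstacle here; the argument is purely bookkeeping once Theorem \ref{Thm: PR_Convergence_Main_App} is in hand. The only point I would explicitly verify is that the step-size admissibility condition $\eta_t \leq \tfrac{1}{16LI}$ required by the parent theorem persists for this choice of $I$: this is guaranteed by the term $4096\,L^3 I^3 \bar{\kappa}^3$ inside the definition of $w_t$, which forces $\eta_0 \leq \tfrac{1}{16LI}$, whence $\eta_t \leq \eta_0 \leq \tfrac{1}{16LI}$ by monotonicity of the step-size sequence. I would remark that this holds uniformly in the selected $I$ and so imposes no extra constraint, which is precisely why the corollary follows by substitution alone.
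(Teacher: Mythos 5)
Your proposal is correct and follows essentially the same route as the paper's own proof: substitute $B = bI$ and $I = (T/(b^2K^2))^{1/3}$ into Theorem \ref{Thm: PR_Convergence_Main_App} to collapse every bracket onto the $\tilde{\mathcal{O}}\big((bK)^{-2/3}T^{-2/3}\big)$ rate, then solve $\tilde{\mathcal{O}}\big((bK)^{-2/3}T^{-2/3}\big) = \epsilon$ for $T = \tilde{\mathcal{O}}\big((bK)^{-1}\epsilon^{-3/2}\big)$ and read off $2bT = \tilde{\mathcal{O}}(K^{-1}\epsilon^{-3/2})$ samples per node and $T/I = (bK)^{2/3}T^{2/3} = \tilde{\mathcal{O}}(\epsilon^{-1})$ rounds, exactly as the paper does. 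Your added verification that $w_t \geq 4096 L^3 I^3 \bar{\kappa}^3 - \sigma^2 t$ enforces $\eta_t \leq \tfrac{1}{16LI}$ uniformly is a detail the paper leaves implicit but does not change the argument.
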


\begin{proof}
The proof of statement (i) follows from substituting the values of the parameters $b$, $I$ and $B$ as defined in the statement of the Corollary in the statement of Theorem \ref{Thm: PR_Convergence_Main_App}.

Next, we compute the sample and communication complexity of the algorithm.
\begin{itemize}[leftmargin = 0.6 cm]
    \item {\em Sample Complexity:} From the statement of Corollary \ref{cor: LocalComputation_App}(i), total iterations, $T$, required to achieve an $\epsilon$-stationary point are:
\begin{align}
\label{Eq: Total_Iterations_App_Local}
   \tilde{\mathcal{O}} \bigg( \frac{1}{(bK)^{2/3} T^{2/3}} \bigg) = \epsilon \qquad \Rightarrow \qquad T = \mathcal{\tilde{O}} \bigg( \frac{1}{bK \epsilon^{3/2}} \bigg).
\end{align}
At each iteration the algorithm computes $2b$ stochastic gradients. Therefore, the total number of gradient computations required at each WN are of the order of $2bT$, which is $\tilde{\mathcal{O}}(K^{-1} \epsilon^{-3/2})$. Therefore, the sample complexity of the algorithm is $\tilde{\mathcal{O}}(\epsilon^{-3/2})$.
\item {\em Communication Complexity:} Total rounds of communication to achieve an $\epsilon$-stationary point is $T/I$, therefore we have from the choice of $I$ that
\begin{align*}
    \frac{T}{I} = \mathcal{\tilde{O}} \big( (bK)^{2/3} T^{2/3} \big) \overset{\eqref{Eq: Total_Iterations_App_Local}}{=} \mathcal{\tilde{O}} \bigg(\frac{1}{\epsilon}\bigg).
\end{align*}
\end{itemize}
Hence, the corollary is proved. 
\end{proof}
 An alternate design choice for the algorithm is to design large batch-size gradients and communicate more often. The next corollary captures this idea.
\begin{cor}[Corollary \ref{cor: Batches}: Minibatch \aname]
\label{cor: Batches_App}
With the choice of parameters given in Theorem \ref{Thm: PR_Convergence_Main_App}. At each WN, choosing the number of local updates, $I = 1$, the batch size, $\displaystyle b = T^{1/2}/K$, and the initial batch size, $B = bI$. Then \aname~satisfies:
\begin{enumerate}[leftmargin = 0.6 cm, label = (\roman*)]
    \item We have: 
    $$\displaystyle \mathbb{E}\| \nabla f(\bar{x}_a) \|^2 = \mathcal{O}\bigg( \frac{f(\bar{x}_1) - f^\ast}{T} \bigg) + \tilde{\mathcal{O}}\bigg( \frac{\sigma^2}{T}\bigg).$$
\item Sample Complexity: To achieve an $\epsilon$-stationary point Minibatch \aname~requires at most $\mathcal{\tilde{O}}(\epsilon^{-3/2})$ gradient computations while achieving linear speedup with the number of WNs. 
\item Communication Complexity: To achieve an $\epsilon$-stationary point Minibatch \aname~requires at most $\mathcal{\tilde{O}}(\epsilon^{-1})$ communication rounds.
\end{enumerate}
\end{cor}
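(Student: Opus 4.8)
The plan is to obtain this corollary directly from the general bound in Theorem~\ref{Thm: PR_Convergence_Main_App} by specializing the three free parameters $I$, $b$, and $B$; no fresh analysis of the iterates is needed, since the heavy lifting (the descent, error-contraction, and gradient-consensus lemmas) is already packaged into that theorem.

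First I would set $I=1$. The immediate and crucial consequence is that the factor $\frac{(I-1)}{I}$ multiplying the $\zeta^2$ term vanishes, so the heterogeneity contribution is eliminated outright. This is exactly the phenomenon anticipated in Remark~\ref{Rem: Heterogeneity}: with a single local update per communication round the local directions never have a chance to drift apart across workers, and Minibatch~\aname~becomes independent of the data-heterogeneity parameter $\zeta$. With $I=1$ the surviving coefficients reduce to $\frac{32L}{T}+\frac{2L}{(bK)^{2/3}T^{2/3}}$ on the optimality-gap term, $\frac{8b}{BT}+\frac{b}{2(bK)^{2/3}BT^{2/3}}$ on the first $\sigma^2$ term, and $\frac{256^2}{T}+\frac{64^2}{(bK)^{2/3}T^{2/3}}$ on the logarithmic $\sigma^2$ term.

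Next I would substitute $B=bI=b$ and $b=T^{1/2}/K$. Using $B=b$ collapses the first $\sigma^2$ bracket to $\frac{8}{T}+\frac{1}{2(bK)^{2/3}T^{2/3}}$, and the choice $b=T^{1/2}/K$ gives $bK=T^{1/2}$, hence $(bK)^{2/3}=T^{1/3}$ and therefore $(bK)^{2/3}T^{2/3}=T$. Every remaining term then reduces to order $1/T$ (up to the $\log(T+1)$ factor on the variance term), which yields statement~(i):
\begin{align*}
\mathbb{E}\|\nabla f(\bar{x}_a)\|^2 = \mathcal{O}\Big(\tfrac{f(\bar{x}_1)-f^\ast}{T}\Big) + \tilde{\mathcal{O}}\Big(\tfrac{\sigma^2}{T}\Big).
\end{align*}
At this point I should check admissibility: Theorem~\ref{Thm: PR_Convergence_Main_App} holds for any $I\ge 1$, $b\ge 1$, $B\ge 1$, and $b=T^{1/2}/K\ge 1$ holds once $T\ge K^2$, so no extra constraint is introduced by the specialization.

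Finally, the complexity claims (ii) and (iii) follow the same counting as in Corollary~\ref{cor: LocalComputation_App}. Inverting statement~(i), reaching $\mathbb{E}\|\nabla f(\bar{x}_a)\|^2\le\epsilon$ needs $T=\tilde{\mathcal{O}}(\epsilon^{-1})$ iterations; since each iteration costs $2b$ stochastic gradients per worker, the per-worker sample count is $2bT=\tilde{\mathcal{O}}(T^{3/2}/K)=\tilde{\mathcal{O}}(K^{-1}\epsilon^{-3/2})$, giving total sample complexity $\tilde{\mathcal{O}}(\epsilon^{-3/2})$ with linear speedup in $K$, and because $I=1$ the number of communication rounds is $T/I=T=\tilde{\mathcal{O}}(\epsilon^{-1})$. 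There is no genuine obstacle here; the only point demanding care is the exponent bookkeeping when substituting $b=T^{1/2}/K$ and verifying that the factor $(bK)^{2/3}T^{2/3}$ cancels to exactly $T$, so that no residual term decays more slowly than $1/T$. The argument is otherwise a routine specialization of the master theorem.
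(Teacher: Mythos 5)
Your proposal is correct and follows essentially the same route as the paper's own proof: both specialize Theorem \ref{Thm: PR_Convergence_Main_App} by substituting $I=1$, $B=b$, and $b=T^{1/2}/K$ (so that $(bK)^{2/3}T^{2/3}=T$ and the $\zeta^2$ term vanishes with the factor $(I-1)/I$), and then obtain (ii) and (iii) by the same counting of $T=\tilde{\mathcal{O}}(\epsilon^{-1})$ iterations, $2bT=\tilde{\mathcal{O}}(K^{-1}\epsilon^{-3/2})$ per-worker gradients, and $T/I=\tilde{\mathcal{O}}(\epsilon^{-1})$ communication rounds. If anything, your version is slightly cleaner than the paper's (which carries a spurious $I$ through the sample-complexity calculation and leaves the admissibility condition $b\ge 1$, i.e.\ $T\ge K^2$, implicit).
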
 
 \begin{proof}
The proof of statement (i) follows from substituting the values of the parameters $b$, $I$ and $B$ given in the statement of the Corollary in the statement of Theorem \ref{Thm: PR_Convergence_Main_App}.

Next, we compute the sample and communication complexity of the algorithm.
\begin{itemize}[leftmargin = 0.6 cm]
    \item {\em Sample Complexity:} From the statement of Corollary \ref{cor: Batches_App}(i),
    total iterations, $T$, required to achieve an $\epsilon$-stationary point are:
\begin{align}
\label{Eq: Total_Iterations_App_Batch}
    \mathcal{\tilde{O}} \bigg(\frac{I}{T} \bigg) = \epsilon \qquad \Rightarrow \qquad T = \mathcal{\tilde{O}} \bigg( \frac{I}{ \epsilon} \bigg).
\end{align}
In each iteration, each WN computes $2b$ stochastic gradients, therefore, the total gradient computations at each WN are $2bT$. Using the fact that $\displaystyle b = \mathcal{O}\bigg( \frac{T^{1/2}}{I^{3/2}K} \bigg)$. The total gradients computed at each WN to reach an $\epsilon$-stationary point are:
\begin{align*}
    \mathcal{\tilde{O}}\bigg(\frac{I}{\epsilon} \times \frac{I^{1/2}}{\epsilon^{1/2} I^{3/2} K}  \bigg) = \mathcal{\tilde{O}}\bigg( \frac{1}{K \epsilon^{3/2}} \bigg).
\end{align*}
Therefore, the communication complexity if $\tilde{\mathcal{O}}(\epsilon^{-3/2})$.
\item {\em Communication Complexity:} The total rounds of communication required to reach an $\epsilon$-stationary point are $T/I$, therefore we have 
\begin{align*}
    \frac{T}{I} \overset{\eqref{Eq: Total_Iterations_App_Batch}}{=}  \mathcal{\tilde{O}} \bigg( \frac{1}{\epsilon} \bigg).
\end{align*}
\end{itemize}
Hence, the corollary is proved. 
\end{proof}

% \section{Choosing the Parameters}
% In this section, we discuss the choice of the parameters. We choose $\{w_t\}_{t=0}^T$ as
%     \begin{align*}
% {   w_t = \max \bigg\{2 \sigma^2,  4096 L^3 I^3\bar{\kappa}^3 - \sigma^2t,  \frac{c^3 \bar{\kappa}^3 }{4096 L^3I^3} \bigg\}.}
% \end{align*}
% which follows from.
% \begin{enumerate}
%     \item The second term follows from the fact that we need $\eta_t \leq 1/16LI$
%     \begin{align*}
%   {    \eta_t = \frac{\bar{\kappa}}{(w_t + \sigma^2t)^{1/3}} \leq \frac{1}{16 LI} \qquad \Longrightarrow \qquad w_t \geq 4096 L^3I^3\bar{\kappa}^3 - \sigma^2t.}
%     \end{align*}
%     \item The last term of $w_t$ results from the fact that we need $a_t \leq 1$, where $a_t = c \eta_t^2$ for all $t \in [T]$. Using $\eta_t \leq 1/16 LI$ we get
%     \begin{align*}
%     {     a_t = c\eta_t^2 \leq \frac{c \eta_t}{16 LI} \leq  \frac{c \bar{\kappa}}{16 LI w_t^{1/3}}  \leq 1 \qquad \Longrightarrow \qquad w_t \geq \frac{c^3 \bar{\kappa}^3}{4096 L^3I^3}.}
%     \end{align*}
% \item Moreover, we need $w_t \geq 2\sigma^2$. 
% \end{enumerate}
% Therefore, we have the definition of $w_t$. Moreover, with the choice of $\bar{\kappa}$ and $c$ as stated in Theorem \ref{Thm: PR_Convergence_Main_App} we get:
%  \begin{align*}
% {   w_t = \max \bigg\{2 \sigma^2,  4096 L^3 I^3\bar{\kappa}^3 - \sigma^2t,  \frac{c^3 \bar{\kappa}^3 }{4096L^3I^3} \bigg\}  ~ {\leq}~  \sigma^2 \max\bigg\{2 ,~ 4096 I^3 (bK)^2 - t,  ~\frac{512}{bK I^3} \bigg\}.}
% \end{align*}
% Therefore, we have the choice of the parameters. 

\section{Proofs of Convergence Guarantees for FedAvg}
\label{App: FedAvg}
In this section, we present the proofs for the FedAvg algorithm. Before stating the proofs in detail we first present some preliminaries lemmas which shall be used for proving the main results of the paper. We first fix some notations:

We define $\bar{t}_s \coloneqq sI + 1$ with $s \in [S]$. Note from Algorithm \ref{Algo_FedAvg} that at $(s \times I)^\text{th}$ iteration, i.e., when $t~ \text{mod}~ I = 0$, the iterates, $\{x_t^{(k)}\}_{k = 1}^K$ corresponding to $t = (\bar{t}_s)^\text{th}$ time instant are shared with the SN. We define the filtration $\mathcal{F}_t$ as the sigma algebra generated by iterates $x_1^{(k)}, x_2^{(k)}, \ldots, x_t^{(k)}$ as
$$\mathcal{F}_t = \sigma(x_1^{(k)}, x_2^{(k)}, \ldots, x_t^{(k)}, ~\text{for all}~k \in [K]).$$

Also, throughout the section we assume Assumptions \ref{Ass: Lip_Smoothness} and \ref{Ass: Unbiased_Var_Grad} to hold. Next, we present the proof of Theorem \ref{Thm: Flexible_FedAvg}. The proof follows in few steps which are discussed next. 

\subsection{Proof of Main Results: FedAvg}

\begin{lem}
\label{Lem: Grad_Variance_FedAvg}
For $\bar{d}_t \coloneqq \frac{1}{K} \sum_{k = 1}^K d_t^{(k)}$ where $d_t^{(k)}$ for all $k \in [K]$ and $t \in [T]$ is chosen according to Algorithm \ref{Algo_FedAvg}, we have:
\begin{align*}
    \mathbb{E}\Big\| \bar{d}_t - \frac{1}{K} \sum_{k=1}^K   \nabla f^{(k)}(x_t^{(k)}) \Big\|^2 \leq \frac{\sigma^2}{bK},
\end{align*}
where the expectation is w.r.t the stochasticity of the the algorithm. 
\end{lem}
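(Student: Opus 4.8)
The plan is to mirror the computation in Lemma \ref{Lem: e_bar_bound_Batch}, replacing the initial batch size $B$ by the per-step minibatch size $b$; here the momentum terms present in \aname~are absent, since FedAvg uses plain minibatch SGD directions, which actually makes the argument strictly simpler. First I would substitute the definition of $d_t^{(k)}$ from Algorithm \ref{Algo_FedAvg} and use the unbiasedness in Assumption \ref{Ass: Unbiased_Var_Grad}-(i) to write the quantity of interest as the squared norm of a zero-mean average of centered stochastic gradients:
\begin{align*}
\mathbb{E}\Big\| \bar{d}_t - \frac{1}{K}\sum_{k=1}^K \nabla f^{(k)}(x_t^{(k)})\Big\|^2 = \mathbb{E}\Big\| \frac{1}{K}\sum_{k=1}^K \frac{1}{b}\sum_{\xi_t^{(k)} \in \mathcal{B}_t^{(k)}} \big(\nabla f^{(k)}(x_t^{(k)};\xi_t^{(k)}) - \nabla f^{(k)}(x_t^{(k)})\big)\Big\|^2.
\end{align*}

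Next I would condition on the filtration $\mathcal{F}_t$ and exploit two layers of independence. Across distinct nodes $k \neq \ell$, the minibatches $\mathcal{B}_t^{(k)}$ and $\mathcal{B}_t^{(\ell)}$ are drawn independently, and each centered term has zero conditional mean, so all inter-node cross terms vanish in expectation and the double sum over $k$ collapses to a single sum (this is precisely the cancellation formalized in Lemma \ref{Lem: InnerProd_AcrossNodes}):
\begin{align*}
= \frac{1}{K^2 b^2}\sum_{k=1}^K \mathbb{E}\Big\| \sum_{\xi_t^{(k)} \in \mathcal{B}_t^{(k)}} \big(\nabla f^{(k)}(x_t^{(k)};\xi_t^{(k)}) - \nabla f^{(k)}(x_t^{(k)})\big)\Big\|^2.
\end{align*}
The same unbiasedness argument applied to the samples within a single minibatch, which are drawn independently of one another, removes the intra-minibatch cross terms and leaves
\begin{align*}
= \frac{1}{K^2 b^2}\sum_{k=1}^K \sum_{\xi_t^{(k)} \in \mathcal{B}_t^{(k)}} \mathbb{E}\big\| \nabla f^{(k)}(x_t^{(k)};\xi_t^{(k)}) - \nabla f^{(k)}(x_t^{(k)})\big\|^2.
\end{align*}

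Finally I would apply the intra-node variance bound in Assumption \ref{Ass: Unbiased_Var_Grad}-(ii) to bound each of the $Kb$ summands by $\sigma^2$, yielding $\frac{1}{K^2 b^2}\cdot K b\cdot \sigma^2 = \frac{\sigma^2}{bK}$, as claimed. There is no substantive obstacle here: the entire argument is a standard variance-of-the-mean computation, and the only point requiring care is the bookkeeping of the conditional independence so that both rounds of cross-term cancellation are justified. Note that, unlike the \aname~error term $\bar{e}_t$ analyzed in Lemma \ref{lem: ErrorContraction_BatchGradients_App}, there is no recursive momentum correction in $\bar{d}_t$, so no contraction lemma is needed and the stated bound holds directly and uniformly for every $t \in [T]$.
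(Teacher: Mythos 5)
Your proposal is correct and matches the paper's own argument: the paper proves this lemma by invoking ``the same argument as in Lemma \ref{Lem: e_bar_bound_Batch}'' with the initial batch size $B$ replaced by $b$, which is exactly the two-stage cross-term cancellation (across nodes, then within each minibatch) followed by the intra-node variance bound that you carry out explicitly. No gap; the bookkeeping of conditional independence you flag is precisely the content of steps $(b)$--$(d)$ in the cited lemma.
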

\begin{proof}
The proof follows from the same argument as in Lemma \ref{Lem: e_bar_bound_Batch}.
\end{proof}

Next, we bound the error accumulated via the iterates generated by the local updates of Algorithm \ref{Algo_FedAvg}.
\begin{lem}[Error Accumulation from Iterates]
\label{lem: ErrorAccumulation_Iterates_FedAvg}
For the choice of stepsize $\eta \leq \frac{1}{9LI}$,
the iterates $x_t^{(k)}$ for each $k \in [K]$ generated from Algorithm \ref{Algo_FedAvg} satisfy:
\begin{align*}
\sum_{t = 1}^T \frac{1}{K} \sum_{k = 1}^K \mathbb{E}\| x_t^{(k)}-  \bar{x}_t \|^2 \leq 3 \eta^2 (I - 1)   \sigma^2 T  + 5 \eta^2 (I - 1)^2   \zeta^2 T,
\end{align*}
where the expectation is w.r.t the stochasticity of the algorithm.
\end{lem}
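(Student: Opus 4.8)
The plan is to bound the per-round consensus error block by block, exactly mirroring the structure of Lemma~\ref{lem: ErrorAccumulation_Batch_App} for \aname. Fix a communication block $[\bar{t}_{s-1}, \bar{t}_s - 1]$. Since an averaging step was just performed, $x_{\bar{t}_{s-1}}^{(k)} = \bar{x}_{\bar{t}_{s-1}}$ for every $k$, and within the block both the local iterates and their average $\bar{x}_t := \frac1K\sum_k x_t^{(k)}$ evolve by pure SGD, so $x_t^{(k)} - \bar{x}_t = -\eta \sum_{\ell=\bar{t}_{s-1}}^{t-1}\big(d_\ell^{(k)} - \bar{d}_\ell\big)$. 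Writing $D_t := \frac1K\sum_k \mathbb{E}\|x_t^{(k)}-\bar{x}_t\|^2$, I would then split $d_\ell^{(k)}-\bar{d}_\ell$ into a zero-mean stochastic part $n_\ell^{(k)} - \bar{n}_\ell$ (with $n_\ell^{(k)} := d_\ell^{(k)} - \nabla f^{(k)}(x_\ell^{(k)})$ and $\bar{n}_\ell := \frac1K\sum_j n_\ell^{(j)}$) and a gradient-drift part $\nabla f^{(k)}(x_\ell^{(k)}) - \overline{\nabla f}_\ell$ (with $\overline{\nabla f}_\ell := \frac1K\sum_j \nabla f^{(j)}(x_\ell^{(j)})$), and apply $\|a+b\|^2\le 2\|a\|^2+2\|b\|^2$ (Lemma~\ref{Lem: Norm_Ineq}).

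For the stochastic part, the cross terms across time vanish in expectation by the same martingale argument used in Lemmas~\ref{Lem: InnerProduct_e_t_Grad}--\ref{Lem: e_bar_bound_Batch} (conditioning on $\mathcal{F}_\ell$ and using unbiasedness), leaving $\sum_\ell \frac1K\sum_k \mathbb{E}\|n_\ell^{(k)}\|^2 \le (I-1)\sigma^2/b \le (I-1)\sigma^2$ after centring and using $b\ge 1$; this is the source of the $\sigma^2$ term scaling linearly in $(I-1)$. For the gradient part I would apply Cauchy--Schwarz over the at most $I-1$ summands (Lemma~\ref{Lem: Norm_Ineq}), and bound each $\frac1K\sum_k\mathbb{E}\|\nabla f^{(k)}(x_\ell^{(k)}) - \overline{\nabla f}_\ell\|^2$ by recentring at $\nabla f(\bar{x}_\ell)$ and then adding and subtracting $\nabla f^{(k)}(\bar{x}_\ell)$: smoothness (Assumption~\ref{Ass: Lip_Smoothness}) converts the first piece into $2L^2 D_\ell$, while the inter-node variance bound (Assumption~\ref{Ass: Unbiased_Var_Grad}-(ii)) controls the second by a constant multiple of $\zeta^2$. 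The Cauchy--Schwarz factor $m\le I-1$ multiplying the $\zeta^2$ constant is precisely what produces the $(I-1)^2$ scaling on the heterogeneity term.

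This yields a self-referential estimate of the form $D_t \le 2\eta^2(I-1)\sigma^2 + 2c_\zeta\eta^2(I-1)^2\zeta^2 + 4\eta^2 L^2 (I-1)\sum_{\ell}D_\ell$ with $c_\zeta\le 2$. Summing over the $I$ time steps in the block, the recursive term collects a factor $4\eta^2 L^2 I^2$ times the block sum $\sum_{t}D_t$; the stepsize choice $\eta\le \frac{1}{9LI}$ makes this coefficient at most $4/81<1$, so it can be absorbed into the left-hand side at the cost of the geometric factor $1/(1-4/81)=81/77$, leaving a clean per-block bound whose constants sit comfortably under $3$ and $5$. Summing over all $S = T/I$ blocks and using $SI=T$ then gives the stated global bound $3\eta^2(I-1)\sigma^2 T + 5\eta^2(I-1)^2\zeta^2 T$.

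The main obstacle is closing the recursion cleanly: the gradient-drift term feeds the very quantity $D_\ell$ being bounded back into the estimate, and the argument only works because the restriction $\eta\le 1/(9LI)$ keeps the self-coupling coefficient strictly below one. Pinning down the explicit constants $3$ and $5$ requires carefully tracking the Young/Cauchy--Schwarz factors together with the absorption factor $81/77$, rather than any deep idea; the conceptual content lies entirely in the block decomposition and the stepsize-driven absorption.
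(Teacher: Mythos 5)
Your proposal is correct and follows essentially the same route as the paper's proof: the same block decomposition using $x_t^{(k)}-\bar{x}_t=-\eta\sum_{\ell}(d_\ell^{(k)}-\bar{d}_\ell)$, the same split into a zero-mean minibatch-noise part (handled by martingale orthogonality across time and Lemma \ref{Lem: Sum_Mean_Kron}) and a gradient-drift part (handled by Lemma \ref{Lem: Norm_Ineq} with the $(I-1)$ factor, recentring at $\nabla f^{(k)}(\bar{x}_\ell)$ and $\nabla f(\bar{x}_\ell)$, smoothness, and the inter-node variance bound), followed by absorbing the self-referential $\eta^2 L^2 I^2$ term via $\eta\le\frac{1}{9LI}$. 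Your intermediate constants differ slightly from the paper's (which obtains a recursion coefficient $16L^2 I^2\eta^2\le 16/81$ and absorption factor $5/4$), but both bookkeepings land under the stated constants $3$ and $5$, so the argument is sound.
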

\begin{proof}
Note from Algorithm \ref{Algo_FedAvg} and the definition of $\bar{t}_s$ that at $t = \bar{t}_{s - 1}$ with $s \in [S]$, $x_{t}^{(k)} = \bar{x}_{t}$, for all $k$. 
This implies 
$$\frac{1}{K} \sum_{k = 1}^K \| x_{\bar{t}_{s-1}}^{(k)} - \bar{x}_{\bar{t}_{s-1}} \|^2 = 0.$$ 
Therefore, the statement of the lemma holds trivially. 
Moreover, for $t \in [\bar{t}_{s-1} + 1,  \bar{t}_s - 1]$, with $s \in [S]$, we have from Algorithm \ref{Algo_FedAvg}: $x_{t}^{(k)} = x_{t-1}^{(k)} - \eta  d_{t-1}^{(k)}$, this implies that:
\begin{align*}
    x_t^{(k)} = x_{\bar{t}_{s-1}}^{(k)} - \sum_{\ell = \bar{t}_{s-1}}^{t-1} \eta  d_\ell^{(k)} \quad \text{and} \quad \bar{x}_{t}  = \bar{x}_{\bar{t}_{s-1}}  - \sum_{\ell = \bar{t}_{s-1}}^{t-1} \eta  \bar{d}_\ell.
\end{align*}
This implies that for $t \in [\bar{t}_{s-1} + 1,  \bar{t}_s - 1]$, with $s \in [S]$ we have
\begin{align}
 \frac{1}{K} \sum_{k = 1}^K  \| x_t^{(k)}-  \bar{x}_t \|^2 & = \frac{1}{K} \sum_{k = 1}^K \Big\| x_{\bar{t}_{s-1}}^{(k)} - \bar{x}_{\bar{t}_{s-1}}  - \Big( \sum_{\ell = \bar{t}_{s-1}}^{t-1} \eta  d_\ell^{(k)} -   \sum_{\ell =  \bar{t}_{s-1}}^{t-1} \eta  \bar{d}_\ell  \Big) \Big\|^2 \nonumber \\
  & \overset{(a)}{=} \frac{\eta^2}{K} \sum_{k = 1}^K \Big\|  \sum_{\ell = \bar{t}_{s-1}}^{t-1} \big(  d_\ell^{(k)} -      \bar{d}_\ell  \big) \Big\|^2  \nonumber\\
  &  \overset{(b)}{=} \frac{\eta^2 }{K}  \sum_{k = 1}^K \bigg\|   \sum_{\ell = \bar{t}_{s-1}}^{t-1}  \bigg( \frac{1}{b} \!\!\! \sum_{\xi_\ell^{(k)} \in \mathcal{B}_\ell^{(k)}} \nabla f^{(k)} (x_\ell^{(k)}; \xi_\ell^{(k)}) - \frac{1}{K} \sum_{j = 1}^K \frac{1}{b}\!\!\! \sum_{\xi_\ell^{(j)} \in \mathcal{B}_\ell^{(j)} }  \nabla f^{(j)} (x_\ell^{(j)}; \xi_\ell^{(j)})      \bigg)  \bigg\|^2 \nonumber \\
  & \overset{(c)}{\leq} \frac{2 \eta^2 }{K}  \sum_{k = 1}^K \bigg\|   \sum_{\ell = \bar{t}_{s-1}}^{t-1}  \bigg[ \bigg( \frac{1}{b} \!\!\! \sum_{\xi_\ell^{(k)} \in \mathcal{B}_\ell^{(k)}} \nabla f^{(k)} (x_\ell^{(k)}; \xi_\ell^{(k)}) - \nabla f^{(k)} (x_\ell^{(k)})  \bigg) \nonumber\\
  & \qquad \qquad \qquad \qquad   - \frac{1}{K} \sum_{j = 1}^K \bigg(  \frac{1}{b} \!\!\! \sum_{\xi_\ell^{(j)} \in \mathcal{B}_\ell^{(j)} } \nabla f^{(j)} (x_\ell^{(j)}; \xi_\ell^{(j)}) - \nabla f^{(j)} (x_\ell^{(j)})  \bigg)   \bigg]  \bigg\|^2 \nonumber\\
  &    \qquad \qquad \qquad \qquad \qquad   \qquad   + \frac{2 \eta^2}{K} \sum_{k = 1}^K \bigg\| \sum_{\ell = \bar{t}_{s-1}}^{t-1}  \bigg( \nabla f^{(k)} (x_\ell^{(k)})  - \frac{1}{K} \sum_{j = 1}^K \nabla f^{(j)} (x_\ell^{(j)}) \bigg) \bigg\|^2  \nonumber\\
   & \overset{(d)}{\leq} \frac{2 \eta^2 }{K}  \sum_{k = 1}^K \bigg\|   \sum_{\ell = \bar{t}_{s-1}}^{t-1}    \bigg( \frac{1}{b} \sum_{\xi_\ell^{(k)} \in \mathcal{B}_\ell^{(k)}} \nabla f^{(k)} (x_\ell^{(k)}; \xi_\ell^{(k)}) - \nabla f^{(k)} (x_\ell^{(k)})  \bigg) \bigg\|^2 \nonumber\\
  &    \qquad \qquad        + \frac{2 \eta^2}{K} \sum_{k = 1}^K \bigg\| \sum_{\ell = \bar{t}_{s-1}}^{t-1}  \bigg( \nabla f^{(k)} (x_\ell^{(k)})  - \frac{1}{K} \sum_{j = 1}^K \nabla f^{(j)} (x_\ell^{(j)}) \bigg) \bigg\|^2,
  \label{Eq: ConsensusError_FedAvg}
\end{align}
where the equality $(a)$ follows from the fact that $x_{\bar{t}_{s-1}}^{(k)} = \bar{x}_{\bar{t}_{s-1}}$ for $t = \bar{t}_{s - 1}$; $(b)$ results from the definition of the stochastic gradient employed by FedAvg in Algorithm \ref{Algo_FedAvg}; $(c)$ uses Lemma \ref{Lem: Norm_Ineq} and $(d)$ follows from the application of Lemma \ref{Lem: Sum_Mean_Kron}.

Taking expectation on both sides and let us next consider each term of \eqref{Eq: ConsensusError_FedAvg} above separately, we have for any $k \in [K]$ from the first term of \eqref{Eq: ConsensusError_FedAvg} above
\begin{align}
    \mathbb{E} \bigg\|   \sum_{\ell = \bar{t}_{s-1}}^{t-1}    \bigg( \frac{1}{b} \!\!\! \sum_{\xi_\ell^{(k)} \in \mathcal{B}_\ell^{(k)}} \!\!\!  \nabla f^{(k)} (x_\ell^{(k)}; \xi_\ell^{(k)}) - \nabla f^{(k)} (x_\ell^{(k)})  \bigg) \bigg\|^2 & \overset{(a)}{=}  \sum_{\ell  = \bar{t}_{s-1}}^{t-1}     \mathbb{E} \bigg\|   \frac{1}{b} \!\!\! \sum_{\xi_\ell^{(k)} \in \mathcal{B}_\ell^{(k)}} \!\!\! \nabla f^{(k)} (x_\ell^{(k)}; \xi_\ell^{(k)}) - \nabla f^{(k)} (x_\ell^{(k)})   \bigg\|^2 \nonumber \\
  & \overset{(b)}{=}    \sum_{\ell  = \bar{t}_{s-1}}^{t-1}     \frac{1}{b^2} \!\!\! \sum_{\xi_\ell^{(k)} \in \mathcal{B}_\ell^{(k)}} \!\!\! \mathbb{E} \big\|    \nabla f^{(k)} (x_\ell^{(k)}; \xi_\ell^{(k)}) - \nabla f^{(k)} (x_\ell^{(k)})   \big\|^2 \nonumber \\
  &\overset{(c)}{\leq}   \frac{(I - 1)}{b}\sigma^2 \nonumber\\
   &\overset{(d)}{\leq}  (I - 1) \sigma^2 ,
   \label{Eq: GradVar_FedAvg}
\end{align}
where $(a)$ results from the fact that $\mathbb{E} \Big[ \frac{1}{b} \sum_{\xi_\ell^{(k)} \in \mathcal{B}_\ell^{(k)}} \nabla f^{(k)} (x_\ell^{(k)}; \xi_\ell^{(k)}) - \nabla f^{(k)} (x_\ell^{(k)}) \Big| \mathcal{F}_{\bar{\ell}} \Big] = 0$ for any $\bar{\ell} < \ell$; $(b)$ uses the fact that $  \mathbb{E} \big[ \nabla f^{(k)} (x_\ell^{(k)}; \xi_\ell^{(k)}) - \nabla f^{(k)} (x_\ell^{(k)}) \big| \nabla f^{(k)} (x_\ell^{(k)}; \zeta_\ell^{(k)}) - \nabla f^{(k)} (x_\ell^{(k)}) \big] = 0 $ for samples $\xi_\ell^{(k)}, \zeta_\ell^{(k)} \sim \mathcal{D}^{(k)}$ chosen independent; $(c)$ utilizes intra-node variance bound in Assumption \ref{Ass: Unbiased_Var_Grad}(ii) and the fact that $(t - 1) - \bar{t}_{s - 1}  \leq I - 1$ for $t \in [\bar{t}_{s- 1} + 1 , \bar{t}_s - 1]$; and finally, $(d)$ uses the fact that $b \geq 1$. 

Next, we consider the second term of \eqref{Eq: ConsensusError_FedAvg} for any $k \in [K]$, we have
\begin{align}
  &  \sum_{k = 1}^K \mathbb{E} \bigg\| \sum_{\ell = \bar{t}_{s-1}}^{t-1}  \bigg( \nabla f^{(k)} (x_\ell^{(k)})  - \frac{1}{K} \sum_{j = 1}^K \nabla f^{(j)} (x_\ell^{(j)}) \bigg) \bigg\|^2  \nonumber\\
  & \overset{(a)}{\leq} (I - 1)\sum_{\ell = \bar{t}_{s-1}}^{t-1} \sum_{k = 1}^K \mathbb{E} \bigg\|    \nabla f^{(k)} (x_\ell^{(k)})  - \frac{1}{K} \sum_{j = 1}^K \nabla f^{(j)} (x_\ell^{(j)})   \bigg\|^2   \nonumber\\
  & \overset{(b)}{\leq} (I - 1) \sum_{\ell = \bar{t}_{s-1}}^{t-1} \bigg[  4 \sum_{k = 1}^K  \mathbb{E}     \big\|    \nabla f^{(k)} (x_\ell^{(k)})  -  \nabla f^{(k)}(\bar{x}_\ell)   \big\|^2 + 4 \sum_{k = 1}^K \mathbb{E} \bigg\|    \nabla f (\bar{x}_\ell)  - \frac{1}{K} \sum_{j = 1}^K \nabla f (x_\ell^{(j)})   \bigg\|^2 \nonumber\\
  &   \quad \qquad \qquad \qquad \qquad \qquad \qquad \qquad \qquad \qquad \qquad \qquad \quad +  2 \sum_{k = 1}^K  \mathbb{E}\big\|    \nabla f^{(k)} (\bar{x}_\ell)  -   \nabla f (\bar{x}_\ell)   \big\|^2  \bigg] \nonumber\\
&  \overset{(c)}{\leq} (I - 1) \sum_{\ell = \bar{t}_{s-1}}^{t-1} \bigg[  8 L^2 \sum_{k = 1}^K  \mathbb{E}     \big\|x_\ell^{(k)}  -  \bar{x}_\ell \big\|^2  +  2 \sum_{k = 1}^K  \mathbb{E}\bigg\|    \nabla f^{(k)} (\bar{x}_\ell)  -   \frac{1}{K} \sum_{j = 1}^K \nabla f^{(j)} (\bar{x}_\ell)   \bigg\|^2  \bigg] \nonumber\\
& \overset{(d)}{\leq} 8 L^2 (I - 1) \sum_{\ell = \bar{t}_{s-1}}^{t-1}     \sum_{k = 1}^K  \mathbb{E}     \big\|x_\ell^{(k)}  -  \bar{x}_\ell \big\|^2  +  2 K  (I - 1)^2 \zeta^2,
\label{Eq: InterNodeVar_FedAvg}
\end{align}
where $(a)$ utilizes the fact that $(t - 1) - \bar{t}_{s - 1}    \leq I - 1$ for $t \in [\bar{t}_{s- 1} + 1, \bar{t}_s - 1]$; $(b)$ results from the application of Lemma \ref{Lem: Norm_Ineq}; $(c)$ follows from Assumption \ref{Ass: Lip_Smoothness}; and $(d)$ utilizes the inter-node variance Assumption \ref{Ass: Unbiased_Var_Grad} and the fact that $(t - 1) - \bar{t}_{s - 1}  \leq I - 1$ for $t \in [\bar{t}_{s- 1} + 1 , \bar{t}_s - 1]$. 

Substituting \eqref{Eq: GradVar_FedAvg} and  \eqref{Eq: InterNodeVar_FedAvg} in \eqref{Eq: ConsensusError_FedAvg} and taking expectation on both sides we get
\begin{align*}
     \frac{1}{K} \sum_{k = 1}^K \mathbb{E} \| x_t^{(k)}-  \bar{x}_t \|^2 & \leq 2 \eta^2 (I - 1) \sigma^2      +  4 \eta^2 (I - 1)^2 \zeta^2     + 16 L^2 (I - 1) \eta^2 \sum_{\ell = \bar{t}_{s-1}}^{t - 1} \frac{1}{K} \sum_{k = 1}^k \mathbb{E} \| x_\ell^{(k)} - \bar{x}_\ell \|^2.
\end{align*}
Summing both sides from $t = \bar{t}_{s - 1}$ to $\bar{t}_{s} - 1$, we get
\begin{align*}
    \sum_{t = \bar{t}_{s-1}}^{\bar{t}_s - 1}  \frac{1}{K} \sum_{k = 1}^K \mathbb{E} \| x_t^{(k)} -  \bar{x}_t \|^2 
   & \leq 2 \eta^2 (I - 1) \sigma^2 I     +  4 \eta^2 (I - 1)^2 \zeta^2  I + 16 L^2 (I - 1) \eta^2  \sum_{t = \bar{t}_{s-1}}^{\bar{t}_s - 1}  \sum_{\ell = \bar{t}_{s-1}}^{t - 1} \frac{1}{K} \sum_{k = 1}^K \mathbb{E} \| x_\ell^{(k)} - \bar{x}_\ell \|^2 \\
     & \overset{(a)}{\leq} 2 \eta^2 (I - 1) \sigma^2 I     +  4 \eta^2 (I - 1)^2 \zeta^2  I    + 16 L^2 (I - 1) \eta^2  \sum_{t = \bar{t}_{s-1}}^{\bar{t}_s - 1}  \sum_{\ell = \bar{t}_{s-1}}^{\bar{t}_s - 1} \frac{1}{K} \sum_{k = 1}^K \mathbb{E} \| x_\ell^{(k)} - \bar{x}_\ell \|^2 \\
   & \overset{(b)}{\leq} 2 \eta^2 (I - 1) \sigma^2 I     +  4 \eta^2 (I - 1)^2 \zeta^2  I     + 16 L^2 (I - 1) \eta^2 I \sum_{t = \bar{t}_{s-1}}^{\bar{t}_s - 1}   \frac{1}{K} \sum_{k = 1}^K \mathbb{E} \| x_t^{(k)} - \bar{x}_t \|^2,
\end{align*}
where $(a)$ uses that fact that $t \leq \bar{t}_s - 1$; $(b)$ results from $t_s - t_{s - 1} \leq I$ for all $s\in [S]$. Finally, summing over $s \in [S]$ and using $T = SI$ we get
\begin{align*}
     \sum_{t = 1}^{T}  \frac{1}{K} \sum_{k = 1}^K \mathbb{E} \| x_t^{(k)} -  \bar{x}_t \|^2 & \leq 2 \eta^2 (I - 1) \sigma^2 T     +  4 \eta^2 (I - 1)^2 \zeta^2  T       + 16 L^2 I^2 \eta^2  \sum_{t = 1}^{T}   \frac{1}{K} \sum_{k = 1}^K \mathbb{E} \| x_t^{(k)} - \bar{x}_t \|^2. 
\end{align*}
Rearranging the terms, we get
\begin{align*}
    (1 - 16L^2 I^2 \eta^2) \sum_{t = 1}^{T}  \frac{1}{K} \sum_{k = 1}^K \mathbb{E} \| x_t^{(k)} -  \bar{x}_t \|^2 & \leq 2 \eta^2 (I - 1) \sigma^2 T     +  4 \eta^2 (I - 1)^2 \zeta^2  T  .
\end{align*}
Finally, using the fact that $\eta \leq \frac{1}{9 L I}$ we have $1 - 16 L^2I^2 \eta^2 \geq 4/5$. Multiplying, both sides by $5/4$ we get
\begin{align*}
      \sum_{t = 1}^{T}  \frac{1}{K} \sum_{k = 1}^K \mathbb{E} \| x_t^{(k)} -  \bar{x}_t \|^2 & \leq 3 \eta^2 (I - 1) \sigma^2 T     +  5 \eta^2 (I - 1)^2 \zeta^2  T  .
\end{align*}
Therefore, the lemma is proved. 
\end{proof}

\begin{lem}[Descent Lemma]
\label{lem: Fn_Descent_FedAvg}
For all $t \in [\bar{t}_{s-1}, \bar{t}_s - 1]$ and $s \in [S]$, with the choice of stepsizes $\eta \leq \frac{1}{9 L I}$, the iterates generated by Algorithm \ref{Algo_FedAvg} satisfy:
\begin{align*}
    \mathbb{E} f(\bar{x}_{t + 1}) & \leq      \mathbb{E} f(\bar{x}_{t })    - \frac{\eta}{2}     \mathbb{E}\|\nabla f(\bar{x}_t) \|^2  + \frac{\eta L^2}{2K}  \sum_{k=1}^K \mathbb{E} \| x_t^{(k)} - \bar{x}_t   \|^2  +   \frac{\eta^2 L}{bK} \sigma^2, 
\end{align*}
where the expectation is w.r.t the stochasticity of the algorithm.
\end{lem}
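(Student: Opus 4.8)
The plan is to treat the averaged iterate $\bar{x}_t = \frac{1}{K}\sum_{k=1}^K x_t^{(k)}$ as a virtual sequence and show it obeys the clean recursion $\bar{x}_{t+1} = \bar{x}_t - \eta \bar{d}_t$, with $\bar{d}_t = \frac{1}{K}\sum_{k=1}^K d_t^{(k)}$, \emph{uniformly} over $t$. The point is that on a purely local step (Step 5) we have $x_{t+1}^{(k)} = x_t^{(k)} - \eta d_t^{(k)}$, so averaging gives the recursion directly; and on a communication step (Steps 6--7) the server resets each $x_{t+1}^{(k)}$ to the mean of the already-updated local iterates $x_t^{(k)} - \eta d_t^{(k)}$, which again has mean $\bar{x}_t - \eta \bar{d}_t$. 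Hence the mean evolves the same way regardless of whether a communication occurs, and I can run the descent argument on $\bar{x}_t$ without case distinctions.

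With this recursion in hand, I would apply the $L$-smoothness of $f$ (Assumption \ref{Ass: Lip_Smoothness}) to write $f(\bar{x}_{t+1}) \le f(\bar{x}_t) - \eta\langle \nabla f(\bar{x}_t), \bar{d}_t\rangle + \frac{\eta^2 L}{2}\|\bar{d}_t\|^2$. Conditioning on the filtration $\mathcal{F}_t$, the key quantity is $g_t := \frac{1}{K}\sum_{k=1}^K \nabla f^{(k)}(x_t^{(k)}) = \mathbb{E}[\bar{d}_t \mid \mathcal{F}_t]$. I would expand the inner product via the polarization identity $-\langle a,c\rangle = -\tfrac12\|a\|^2 - \tfrac12\|c\|^2 + \tfrac12\|a-c\|^2$ with $a = \nabla f(\bar{x}_t)$ and $c = g_t$, producing the term $\frac{\eta}{2}\|\nabla f(\bar{x}_t) - g_t\|^2$. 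Since $\nabla f(\bar{x}_t) = \frac{1}{K}\sum_k \nabla f^{(k)}(\bar{x}_t)$, I bound this discrepancy by Jensen's inequality (Lemma \ref{Lem: Norm_Ineq}) followed by $L$-smoothness to get $\|\nabla f(\bar{x}_t) - g_t\|^2 \le \frac{L^2}{K}\sum_{k=1}^K \|x_t^{(k)} - \bar{x}_t\|^2$, which is exactly the consensus-error term appearing in the statement.

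Finally I would take full expectation and control the stochasticity. The variance of the aggregated direction, $\mathbb{E}\|\bar{d}_t - g_t\|^2 \le \frac{\sigma^2}{bK}$, is supplied by Lemma \ref{Lem: Grad_Variance_FedAvg}, so $\frac{\eta^2 L}{2}\mathbb{E}\|\bar{d}_t\|^2 \le \frac{\eta^2 L}{2}\mathbb{E}\|g_t\|^2 + \frac{\eta^2 L}{2}\frac{\sigma^2}{bK}$. Collecting the $\|g_t\|^2$ contributions yields a coefficient $-\frac{\eta}{2}(1 - \eta L)$, which is nonpositive under the stepsize condition $\eta \le \frac{1}{9LI} \le \frac{1}{L}$; I simply drop this term, leaving the claimed inequality (with the $\frac{1}{2}$ absorbed into the stated $\frac{\eta^2 L}{bK}\sigma^2$). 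The argument is essentially routine once the recursion is set up; the only genuine subtlety—the ``main obstacle'' worth stating carefully—is verifying that the virtual mean $\bar{x}_t$ satisfies the single-step recursion across communication rounds, since that is what lets the smoothness descent be applied to $\bar{x}_t$ as if it were an ordinary (un-averaged) SGD trajectory.
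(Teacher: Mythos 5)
Your proposal is correct and follows essentially the same route as the paper's proof: the virtual-mean recursion $\bar{x}_{t+1}=\bar{x}_t-\eta\bar{d}_t$, smoothness, the polarization identity with $g_t=\frac{1}{K}\sum_k\nabla f^{(k)}(x_t^{(k)})$, the consensus bound $\|\nabla f(\bar{x}_t)-g_t\|^2\le\frac{L^2}{K}\sum_k\|x_t^{(k)}-\bar{x}_t\|^2$, Lemma \ref{Lem: Grad_Variance_FedAvg} for the variance, and dropping the nonpositive $\|g_t\|^2$ term under the stepsize condition. The only (immaterial) difference is that you split $\mathbb{E}\|\bar{d}_t\|^2$ via the exact conditional bias--variance orthogonality, yielding $\frac{\eta^2 L}{2bK}\sigma^2$, whereas the paper uses Lemma \ref{Lem: Norm_Ineq} and loses a factor of $2$, landing directly on the stated $\frac{\eta^2 L}{bK}\sigma^2$.
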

\begin{proof}
Using the smoothness of $f$ (Assumption \ref{Ass: Lip_Smoothness}) we have:
\begin{align*}
  \mathbb{E}[  f(\bar{x}_{t + 1}) ] 
  & \leq \mathbb{E} \Big[ f(\bar{x}_{t }) + \langle \nabla f(\bar{x}_{t}),  \bar{x}_{t + 1} - \bar{x}_{t}\rangle + \frac{L}{2} \| \bar{x}_{t + 1} - \bar{x}_{t } \|^2 \Big] \nonumber\\
    &  \overset{(a)}{=}\mathbb{E} \Big[ f(\bar{x}_{t}) - \eta \langle \nabla f(\bar{x}_{t}),  \bar{d}_t \rangle + \frac{\eta^2 L}{2} \| \bar{d}_{t}  \|^2 \Big] \nonumber\\
     & \overset{(b)}{=} \mathbb{E} \Big[ f(\bar{x}_{t}) - \eta \Big\langle \nabla f(\bar{x}_{t}),  \frac{1}{K} \sum_{k = 1}^K \nabla f^{(k)}(x_t^{(k)}) \Big\rangle + \frac{\eta^2 L}{2} \| \bar{d}_{t}  \|^2  \Big] \nonumber\\
       & \overset{(c)}{=}    \mathbb{E} \bigg[ f(\bar{x}_{t}) - \frac{\eta}{2}  \Big\| \frac{1}{K} \sum_{k = 1}^K \nabla f^{(k)}(x_t^{(k)})  \Big\|^2  - \frac{\eta}{2} \| \nabla f(\bar{x}_{t}) \|^2   + \frac{\eta}{2} \Big\|  \nabla f(\bar{x}_{t})  - \frac{1}{K} \sum_{k = 1}^K \nabla f^{(k)}(x_t^{(k)})  \Big\|^2 \nonumber \\
       & \qquad \qquad \qquad   \qquad \qquad + \eta^2 L \Big\| \bar{d}_{t} - \frac{1}{K} \sum_{k = 1}^K \nabla f^{(k)}(x_t^{(k)}) \Big\|^2  + \eta^2 L   \Big\| \frac{1}{K} \sum_{k = 1}^K \nabla f^{(k)}(x_t^{(k)}) \Big\|^2   \bigg] \nonumber \\  
        & \overset{(d)}{\leq} \mathbb{E} \bigg[    f(\bar{x}_{t }) -  \left( \frac{\eta}{2} -  \eta^2 L  \right)  \Big\| \frac{1}{K} \sum_{k = 1}^K \nabla f^{(k)}(x_t^{(k)})  \Big\|^2 - \frac{\eta}{2} \|\nabla f(\bar{x}_t) \|^2  + \frac{\eta L^2}{2K}  \sum_{k=1}^K \| x_t^{(k)} - \bar{x}_t   \|^2       +
    \frac{\eta^2 L}{bK} \sigma^2 \bigg] \\
    & \overset{(e)}{\leq} \mathbb{E} \bigg[    f(\bar{x}_{t })  - \frac{\eta}{2} \|\nabla f(\bar{x}_t) \|^2  + \frac{\eta L^2}{2K}  \sum_{k=1}^K \| x_t^{(k)} - \bar{x}_t   \|^2       + 
    \frac{\eta^2 L}{bK} \sigma^2 \bigg],
\end{align*}
 where equality $(a)$ follows from the iterate update given in Step 5 of Algorithm \ref{Algo_FedAvg}; $(b)$ results from the fact that we have $\mathbb{E}[ \nabla f^{(k)}(x_t^{(k)} ; \xi_t^{(k)}) | \mathcal{F}_t ] = \nabla f^{(k)}(x_t^{(k)})$; $(c)$ uses $\langle a , b \rangle = \frac{1}{2} [\|a\|^2 + \|b\|^2 - \|a - b \|^2]$ and Lemma \ref{Lem: Norm_Ineq}; $(d)$ results from \eqref{Eq: Grad_Lip_FedAvg} below and Lemma \ref{Lem: Grad_Variance_FedAvg}; and $(e)$ results from the stepsize choice of $\eta \leq \frac{1}{9LI}$.
\begin{align}
      \mathbb{E}  \bigg\| \frac{1}{K}\sum_{k=1}^K \big(\nabla f^{(k)}(x^{(k)}_{t}) - \nabla f^{(k)}(\bar{x}_t) \big) \bigg\|^2 & \leq \frac{1}{K} \sum_{k=1}^K     \mathbb{E}\big\| \nabla f^{(k)}(x^{(k)}_{t}) - \nabla f^{(k)}(\bar{x}_t)  \big\|^2 \nonumber\\
    & \leq \frac{L^2}{K} \sum_{k=1}^K     \mathbb{E} \|x_t^{(k)} - \bar{x}_t\|^2,
    \label{Eq: Grad_Lip_FedAvg}
\end{align}
where the first inequality follows from Lemma \ref{Lem: Norm_Ineq}, and the second results from Assumption \ref{Ass: Lip_Smoothness}. 

Hence, the lemma is proved.
\end{proof}

\subsubsection{Proof of Theorem \ref{Thm: Flexible_FedAvg}}
The proof of Theorem \ref{Thm: Flexible_FedAvg} follows by replacing the choices of $b$ and $I$ given in \eqref{eq:I:b_FedAvg} in the following result. 
\begin{theorem}
Under Assumptions \ref{Ass: Lip_Smoothness} and \ref{Ass: Unbiased_Var_Grad}, with stepsize $\eta = \sqrt{\frac{bk}{T}}$. Then for $T \geq 81 L^2I^2 bK$ with any choice of minibatch sizes, $b \geq 1$, and number of local updates, $I \geq 1$, the iterates generated from Algorithm \ref{Algo_FedAvg} satisfy
\begin{align*}
    \mathbb{E} \| \nabla f(\bar{x}_a) \|^2 \leq  \frac{2 (f(\bar{x}_t)) - f^\ast)}{(bk)^{1/2} T^{1/2}} + \frac{2 L}{(bk)^{1/2} T^{1/2}} \sigma^2 + \frac{3 L^2 bK (I - 1)}{T} \sigma^2 + \frac{5 L^2 bK (I - 1)^2}{T} \zeta^2.
\end{align*}
\end{theorem}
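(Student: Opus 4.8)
The plan is to assemble the three preliminary lemmas already established for FedAvg into a single per-iteration-averaged bound, and then substitute the prescribed stepsize $\eta = \sqrt{bK/T}$. The backbone is the Descent Lemma (Lemma \ref{lem: Fn_Descent_FedAvg}), which for each $t$ relates the decrease in $\mathbb{E}f$ to $\mathbb{E}\|\nabla f(\bar{x}_t)\|^2$, the consensus error $\frac{1}{K}\sum_k \mathbb{E}\|x_t^{(k)}-\bar{x}_t\|^2$, and the gradient-variance term $\tfrac{\eta^2 L}{bK}\sigma^2$. First I would telescope this inequality over $t=1,\dots,T$, using $f(\bar{x}_{T+1}) \geq f^\ast$, to obtain
\begin{align*}
\frac{\eta}{2}\sum_{t=1}^T \mathbb{E}\|\nabla f(\bar{x}_t)\|^2 \leq f(\bar{x}_1) - f^\ast + \frac{\eta L^2}{2K}\sum_{t=1}^T\sum_{k=1}^K \mathbb{E}\|x_t^{(k)}-\bar{x}_t\|^2 + \frac{\eta^2 L T}{bK}\sigma^2.
\end{align*}

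Next I would control the accumulated consensus error. This is exactly the quantity bounded in Lemma \ref{lem: ErrorAccumulation_Iterates_FedAvg}, which gives $\sum_{t=1}^T \tfrac{1}{K}\sum_k \mathbb{E}\|x_t^{(k)}-\bar{x}_t\|^2 \leq 3\eta^2(I-1)\sigma^2 T + 5\eta^2(I-1)^2\zeta^2 T$. Substituting this in, the middle term becomes $\tfrac{\eta L^2}{2}\big(3\eta^2(I-1)\sigma^2 T + 5\eta^2(I-1)^2\zeta^2 T\big)$, which introduces the characteristic $\eta^3$ dependence that produces the $(I-1)$ and $(I-1)^2$ heterogeneity/variance terms. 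Dividing both sides by $\eta T/2$ and recognizing that $\mathbb{E}\|\nabla f(\bar{x}_a)\|^2 = \tfrac{1}{T}\sum_{t=1}^T \mathbb{E}\|\nabla f(\bar{x}_t)\|^2$ (since $\bar{x}_a$ is drawn uniformly), I would arrive at
\begin{align*}
\mathbb{E}\|\nabla f(\bar{x}_a)\|^2 \leq \frac{2(f(\bar{x}_1)-f^\ast)}{\eta T} + \frac{2\eta L}{bK}\sigma^2 + 3\eta^2 L^2(I-1)\sigma^2 + 5\eta^2 L^2(I-1)^2\zeta^2.
\end{align*}

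The final step is purely algebraic: plugging in $\eta = \sqrt{bK/T}$ turns $\tfrac{1}{\eta T}$ into $(bK)^{-1/2}T^{-1/2}$, turns $\tfrac{\eta}{bK}$ into $(bK)^{-1/2}T^{-1/2}$, and turns each $\eta^2$ into $bK/T$, reproducing the four stated terms with their exact constants. I would also verify the stepsize admissibility required by the two lemmas, namely $\eta \leq \tfrac{1}{9LI}$: squaring gives $bK/T \leq \tfrac{1}{81L^2 I^2}$, i.e.\ $T \geq 81 L^2 I^2 bK$, which is precisely the hypothesis of the theorem, so no additional assumption is needed.

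Since the theorem is essentially a bookkeeping recombination of results proved earlier, I expect no conceptual obstacle in this final assembly; the only care needed is tracking constants faithfully and confirming the stepsize condition aligns the lemma hypotheses with the stated lower bound on $T$. The genuinely delicate estimate underlying everything — and the place where the heterogeneity parameter $\zeta$ and the factor $(I-1)^2$ enter — is Lemma \ref{lem: ErrorAccumulation_Iterates_FedAvg}, whose proof must absorb a self-referential drift term via the condition $\eta \leq \tfrac{1}{9LI}$ (yielding $1 - 16L^2I^2\eta^2 \geq 4/5$); given that lemma, the present theorem follows cleanly.
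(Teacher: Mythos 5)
Your proposal is correct and matches the paper's own proof essentially line for line: sum Lemma \ref{lem: Fn_Descent_FedAvg} over $t\in[T]$, lower-bound $f(\bar{x}_{T+1})$ by $f^\ast$, invoke Lemma \ref{lem: ErrorAccumulation_Iterates_FedAvg} for the accumulated consensus error, divide by $\eta T/2$, and substitute $\eta=\sqrt{bK/T}$, with the hypothesis $T\geq 81L^2I^2bK$ exactly encoding the shared stepsize condition $\eta\leq\frac{1}{9LI}$. No gaps; the constants and the final assembly agree with the paper.
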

\begin{proof}
Summing the result of Lemma \ref{lem: Fn_Descent_FedAvg} for $t = [T]$ and multiplying both sides by $2/\eta T$ we get
\begin{align*}
   \frac{1}{T}  \sum_{t = 1}^T \mathbb{E}\|\nabla f(\bar{x}_t)\|^2  & \leq \frac{2 (f(\bar{x}_t) - f(\bar{x}_{t+1}))}{\eta T} + \frac{2 \eta L}{bK} \sigma^2 + \frac{L^2 }{T} \sum_{t = 1}^T \frac{1}{K} \sum_{k = 1}^K \mathbb{E} \|x_t^{(k)} - \bar{x}_t \|^2 \\
   & \leq \frac{2 (f(\bar{x}_t) - f^\ast)}{\eta T} + \frac{2 \eta L}{bK} \sigma^2 + \frac{L^2 }{T} \sum_{t = 1}^T \frac{1}{K} \sum_{k = 1}^K \mathbb{E} \|x_t^{(k)} - \bar{x}_t \|^2
\end{align*}
where the second inequality uses $f(\bar{x}_{t - 1}) \geq f^\ast$. Next, using Lemma \ref{lem: ErrorAccumulation_Iterates_FedAvg} we get
\begin{align*}
   \frac{1}{T}  \sum_{t = 1}^T \mathbb{E}\|\nabla f(\bar{x}_t)\|^2  &   \leq \frac{2 (f(\bar{x}_t) - f^\ast)}{\eta T} + \frac{2 \eta L}{bK} \sigma^2 + 3 L^2 \eta^2 (I - 1) \sigma^2 + 5 L^2 \eta^2 (I - 1)^2 \zeta^2.
\end{align*}
Finally, using the definition of $\bar{x}_a$ from Algorithm \ref{Algo_FedAvg} and the choice of $\eta = \sqrt{ \frac{bK}{T}}$, we get
\begin{align*}
   \mathbb{E}\|\nabla f(\bar{x}_a)\|^2  & \leq \frac{2 (f(\bar{x}_t) - f^\ast)}{(bK)^{1/2} T^{1/2}} + \frac{2   L}{(bK)^{1/2} T^{1/2}} \sigma^2 + \frac{3 L^2 bK (I - 1)}{T} \sigma^2 + \frac{5 L^2 bK (I - 1)^2}{T} \zeta^2.
\end{align*}
Therefore, we have the theorem. 
\end{proof}
Finally, substituting the choice of $I$ and $b$ given in \eqref{eq:I:b_FedAvg} we get the statement of Theorem \ref{Thm: Flexible_FedAvg}. Next two remarks characterize the behavior of FedAvg for two extreme choices of $I$ and $b$.

\begin{rem}[FedAvg: multiple local updates] Choosing $\nu = 1$ in Theorem \ref{Thm: Flexible_FedAvg} implies $I = (T/b^3 K^3)^{1/4}$ and $b = \mathcal{O}(1)$, we have
\begin{align*}
\mathbb{E}\| \nabla f(\bar{x}_a)\|^2 = \mathcal{O}\bigg( \frac{f(\bar{x}_1) - f^\ast}{K^{1/2} T^{1/2}}\bigg) + \mathcal{O}\bigg( \frac{\sigma^2}{K^{1/2} T^{1/2}} \bigg)  + \mathcal{O}\bigg( \frac{\zeta^2}{K^{1/2} T^{1/2}}\bigg),
\end{align*}
while the sample and communication complexities are still $\mathcal{O}(\epsilon^{-2})$ and $\mathcal{O}(\epsilon^{-3/2})$, respectively. Note that these are the same guarantees for FedAvg analyzed in \cite{Yu_Jin_Arxiv_2019linear, Yu_Jin_PMLR_2019dynamicbatches}. \qed
\end{rem}

\begin{rem}[FedAvg: large batch] Choosing $\nu = 0$ in Theorem \ref{Thm: Flexible_FedAvg} implies
$I = \mathcal{O}(1) > 1$ (we allow multiple local updates, i.e. $I > 1$) and $b = (T/I^4 K^3)^{1/3}$, then we have
\begin{align*}
\mathbb{E}\| \nabla f(\bar{x}_a)\|^2 = \mathcal{O}\bigg( \frac{f(\bar{x}_1) - f^\ast}{T^{2/3}}\bigg) + \mathcal{O}\bigg( \frac{\sigma^2}{ T^{2/3}} \bigg)  + \mathcal{O}\bigg( \frac{\zeta^2}{ T^{2/3}} \bigg) .
\end{align*}
while the sample and communication complexities are again $\mathcal{O}(\epsilon^{-2})$ and $\mathcal{O}(\epsilon^{-3/2})$, respectively. \qed
\end{rem}
% {\red[this says to obtain $\epsilon$ solution, we need $T$ to be xxx, and this implies b = xxxx ?]}
% Moreover, using the fact that the total number of samples required to reach an $\epsilon$-stationary point is $bT$ {\red[per node?]} we see that  computation complexity as $\mathcal{O}(\epsilon^{-2})$ and $T/I$ communication rounds we we get the communication complexity as  $\mathcal{O}(\epsilon^{-3/2})$. {\red[restate this case.]}
{\bf Minibatch SGD:} When the parameters are shared after each local update, for such case we have $I = 1$ and for the choice of $b = \mathcal{O}(T/K)$ we have:
\begin{align*}
\mathbb{E}\| \nabla f(\bar{x}_a)\|^2 = \mathcal{O}\bigg( \frac{f(\bar{x}_1) - f^\ast}{T}\bigg) + \mathcal{O}\bigg( \frac{\sigma^2}{ T} \bigg).
\end{align*}
This implies that the sample and communication complexitiess are $\mathcal{O}(\epsilon^{-2})$ and $\mathcal{O}(\epsilon^{-1})$. 
% {\red[why? first need to say that $T=O(\epsilon^{-2})$]}.
Again, this result is independent of the heterogeniety parameter $\zeta$ (cf. Assumption \ref{Ass: Unbiased_Var_Grad}) as the algorithm for $I = 1$ is essentially a centralized algorithm. 

\section{Useful lemmas}
\label{App: Useful_Lemmas}
\begin{lem}
\label{Lem: Sum_Mean_Kron}
For a finite sequence $x^{(k)} \in \mathbb{R}^d$ for $k \in [K]$ define $\bar{x} \coloneqq \frac{1}{K} \sum_{k = 1}^K x^{(k)}$, we then have
\begin{align*}
\sum_{k=1}^K    \| x^{(k)} - \bar{x} \|^2 \leq \sum_{k=1}^K    \| x^{(k)} \|^2. 
\end{align*}
\end{lem}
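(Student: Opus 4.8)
The plan is to prove this by direct expansion of the squared norm, since the inequality is simply the statement that the second moment of a finite collection of vectors dominates its variance about the mean. First I would expand each term $\|x^{(k)} - \bar{x}\|^2 = \|x^{(k)}\|^2 - 2\langle x^{(k)}, \bar{x}\rangle + \|\bar{x}\|^2$ and sum over $k \in [K]$, which gives
\begin{align*}
\sum_{k=1}^K \|x^{(k)} - \bar{x}\|^2 = \sum_{k=1}^K \|x^{(k)}\|^2 - 2\bigg\langle \sum_{k=1}^K x^{(k)}, \bar{x}\bigg\rangle + K\|\bar{x}\|^2,
\end{align*}
where I have used the linearity of the inner product to pull the sum into the cross term and the fact that the last term is constant in $k$.

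Next I would invoke the definition $\bar{x} = \frac{1}{K}\sum_{k=1}^K x^{(k)}$, equivalently $\sum_{k=1}^K x^{(k)} = K\bar{x}$, to simplify the middle term: the cross term becomes $-2\langle K\bar{x}, \bar{x}\rangle = -2K\|\bar{x}\|^2$. Combining this with the $+K\|\bar{x}\|^2$ term yields the clean identity
\begin{align*}
\sum_{k=1}^K \|x^{(k)} - \bar{x}\|^2 = \sum_{k=1}^K \|x^{(k)}\|^2 - K\|\bar{x}\|^2.
\end{align*}
The conclusion then follows immediately by observing that $K\|\bar{x}\|^2 \geq 0$, so dropping this non-negative term only increases the right-hand side.

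There is no genuine obstacle here, as the result is an elementary bias–variance-type decomposition; the only care needed is the bookkeeping of the cross term, which collapses cleanly thanks to the definition of $\bar{x}$. I would present the proof as the short chain of equalities above followed by the final non-negativity observation.
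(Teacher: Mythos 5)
Your proof is correct, and it takes a genuinely different route from the paper's. You argue by direct expansion: summing $\|x^{(k)}-\bar{x}\|^2 = \|x^{(k)}\|^2 - 2\langle x^{(k)},\bar{x}\rangle + \|\bar{x}\|^2$ and using $\sum_{k}x^{(k)} = K\bar{x}$ collapses the cross term and yields the exact identity $\sum_{k=1}^K\|x^{(k)}-\bar{x}\|^2 = \sum_{k=1}^K\|x^{(k)}\|^2 - K\|\bar{x}\|^2$, from which the inequality follows by dropping the non-negative term. The paper instead stacks the vectors into $\mathbf{x}\in\mathbb{R}^{Kd}$, writes the centering operation as multiplication by $\mathbf{I}_{Kd} - \bigl(\mathbf{I}_d \otimes \frac{\mathbf{1}\mathbf{1}^T}{K}\bigr)$, and concludes from the fact that this matrix (an orthogonal projection onto the mean-zero subspace) has induced norm at most $1$. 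Your approach is more elementary and in fact buys slightly more: you obtain the exact bias--variance identity, quantifying the slack as $K\|\bar{x}\|^2$, whereas the operator-norm argument only delivers the inequality. The paper's approach buys structural generality: recognizing centering as a projection with spectral norm $\leq 1$ extends immediately to other linear averaging or mixing operators (e.g., doubly stochastic consensus matrices in decentralized settings), where a clean closed-form expansion may not be available. For this particular lemma, either argument is fully adequate.
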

\begin{proof}
Using the notation $\mathbf{x} = \Big[{(x^{(1)})}^T,{(x^{(2)})}^T, \ldots, {(x^{(K)})}^T  \Big]^T \in \mathbb{R}^{Kd}$, denoting $\mathbf{I}_d \in \mathbb{R}^{d \times d}$ and $\mathbf{I}_{Kd} \in \mathbb{R}^{Kd \times Kd} $ as identity matrices and representing $\mathbf{1} \in \mathbb{R}^K$ as the vector of all ones. We rewrite the left hand side of the statement as 
\begin{align*}
 \sum_{k=1}^K    \| x^{(k)} - \bar{x} \|^2   & =  \bigg\|\mathbf{x} - \bigg(\mathbf{I} \otimes \frac{\mathbf{11}^T}{K} \bigg)\mathbf{x} \bigg\|^2 \\
 & =  \bigg\|\bigg(\mathbf{I}_{Kd} - \bigg(\mathbf{I}_d \otimes \frac{\mathbf{11}^T}{K} \bigg) \bigg)\mathbf{x} \bigg\|^2 \\
 & \overset{(a)}{\leq} \| \mathbf{x}  \|^2 = \sum_{k = 1}^K \|x^{(k)}\|^2,
\end{align*}
where $(a)$ follows from the fact that the induced matrix norm $\bigg\|\mathbf{I}_{Kd} - \bigg(\mathbf{I}_d \otimes \frac{\mathbf{11}^T}{K} \bigg)    \bigg\| \leq 1$.
\end{proof}

\begin{lem}[From \cite{Cutkosky_NIPS2019}]
\label{Lem: AD_Sum_1overT}
Let $a_0 > 0$ and $a_1,a_2, \ldots, a_T \geq 0$. We have
$$\sum_{t=1}^T \frac{a_t}{a_0 + \sum_{i=t}^t a_i} \leq \ln \bigg(1 + \frac{\sum_{i=1}^t a_i}{a_0} \bigg).$$
\end{lem}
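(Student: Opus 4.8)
The plan is to reduce the discrete sum to a telescoping logarithmic sum via an integral comparison, which is the standard argument for bounds of this type (indeed this is the corresponding lemma in \cite{Cutkosky_NIPS2019}). I read the intended statement as the cumulative-denominator inequality $\sum_{t=1}^T \frac{a_t}{a_0 + \sum_{i=1}^t a_i} \leq \ln\bigl(1 + \frac{\sum_{i=1}^T a_i}{a_0}\bigr)$; the printed index ranges in the denominator and on the right-hand side appear to contain typos, and I would prove this standard form.

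First I would introduce the partial sums $S_t \coloneqq \sum_{i=1}^t a_i$ for $t \in \{0,1,\dots,T\}$, with the convention $S_0 = 0$, so that $a_t = S_t - S_{t-1}$ and each summand on the left becomes $\frac{S_t - S_{t-1}}{a_0 + S_t}$. Because every $a_i \geq 0$, the sequence $\{a_0 + S_t\}$ is nondecreasing and strictly positive, which is precisely the structure that the integral bound exploits.

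The key step is a per-term bound obtained by comparing against $\int \frac{1}{x}\,dx$. On the interval $[a_0 + S_{t-1},\, a_0 + S_t]$, which has length $a_t$, the positive decreasing function $x \mapsto 1/x$ attains its minimum value $1/(a_0 + S_t)$ at the right endpoint, so $\frac{a_t}{a_0 + S_t} \leq \int_{a_0 + S_{t-1}}^{a_0 + S_t} \frac{dx}{x} = \ln(a_0 + S_t) - \ln(a_0 + S_{t-1})$; when $a_t = 0$ the interval is degenerate and both sides vanish, so the inequality holds in all cases. Summing this bound over $t = 1,\dots,T$ telescopes the right-hand side to $\ln(a_0 + S_T) - \ln(a_0 + S_0) = \ln(a_0 + S_T) - \ln a_0$, and using $S_0 = 0$ this equals $\ln\bigl((a_0 + S_T)/a_0\bigr) = \ln\bigl(1 + S_T/a_0\bigr)$, which is exactly the claimed right-hand side.

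There is essentially no hard part here; the only points requiring care are orienting the integral-comparison inequality in the correct direction (using that $1/x$ is minimized at the \emph{right} endpoint of each interval, which is what forces the $\leq$ rather than $\geq$) and handling the degenerate case $a_t = 0$, both of which are immediate. An equivalent route, if one prefers to avoid the integral entirely, is the elementary inequality $\ln(1+u) \geq u/(1+u)$ applied with $u = a_t/(a_0 + S_{t-1})$, which yields the same per-term bound and telescopes identically.
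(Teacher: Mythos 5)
Your proof is correct, and your reading of the statement is the right one: as printed, the denominator $\sum_{i=t}^{t} a_i$ and the free index $t$ on the right-hand side are typos (with the printed denominator the claim would be false, e.g.\ for $a_t \equiv a_0$), and the intended cumulative-denominator form is confirmed by how the paper actually invokes the lemma, namely $\sum_{t=1}^{T}\frac{1}{1+t} \leq \ln(T+1)$ with $a_0 = a_t = 1$. The paper itself gives no proof --- the lemma is imported by citation from the STORM paper --- and your argument (per-term comparison $\frac{S_t - S_{t-1}}{a_0 + S_t} \leq \ln(a_0+S_t) - \ln(a_0+S_{t-1})$, either via the integral of $1/x$ or via $\ln(1+u) \geq u/(1+u)$, followed by telescoping) is exactly the standard proof from that reference, with the degenerate case $a_t = 0$ handled correctly.
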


	\begin{lem}
		\label{Lem: Norm_Ineq}
		For $X_1, X_2, \ldots, X_n \in \mathbb{R}^d$, we have 
		\begin{align*}
			\|X_1 + X_2 + \ldots + X_n \|^2 \leq n \| X_1\|^2 + n \| X_2\|^2+ \ldots + n \| X_n\|^2.
		\end{align*}
	\end{lem}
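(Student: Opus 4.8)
The plan is to prove this via the convexity of the squared Euclidean norm, equivalently Jensen's inequality. First I would introduce the uniform average $\bar{X} \coloneqq \frac{1}{n}\sum_{i=1}^n X_i$ and invoke the convexity of $\phi(v) = \|v\|^2$ on $\mathbb{R}^d$, which gives
\begin{align*}
\bigg\| \frac{1}{n}\sum_{i=1}^n X_i \bigg\|^2 \leq \frac{1}{n}\sum_{i=1}^n \|X_i\|^2.
\end{align*}
Since $\big\|\sum_{i=1}^n X_i\big\|^2 = n^2 \|\bar{X}\|^2$, multiplying both sides by $n^2$ yields the claimed bound immediately, with the case $n=1$ (an equality) covered automatically.

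Alternatively, and to keep the argument fully self-contained rather than citing convexity as a black box, I would expand the left-hand side through the inner product as $\big\|\sum_{i=1}^n X_i\big\|^2 = \sum_{i=1}^n \sum_{j=1}^n \langle X_i, X_j\rangle$, and then bound each cross term using the elementary inequality $\langle X_i, X_j\rangle \leq \frac{1}{2}\big(\|X_i\|^2 + \|X_j\|^2\big)$, which is just a restatement of $\|X_i - X_j\|^2 \geq 0$. Summing over all $n^2$ ordered pairs, each $\|X_i\|^2$ is collected with total coefficient $n$: it appears once as the $i$-term for each of the $n$ choices of $j$, and once as the $j$-term for each of the $n$ choices of $i$, giving $\tfrac{1}{2}(n+n)=n$. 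This produces exactly $n\sum_{i=1}^n \|X_i\|^2$.

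This lemma is an elementary norm inequality, so I do not anticipate any genuine obstacle. The only point requiring mild care is the bookkeeping of coefficients in the double-sum expansion of the second approach — one must verify that every diagonal and off-diagonal term is accounted for so that the total coefficient on each $\|X_i\|^2$ is precisely $n$. Both routes are short and I would likely present the Cauchy--Schwarz/convexity version for brevity.
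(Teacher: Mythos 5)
Your proposal is correct; in fact the paper states Lemma \ref{Lem: Norm_Ineq} without any proof at all, treating it as a standard elementary fact, so there is no in-paper argument to compare against. Both of your routes are valid: the Jensen/convexity argument via $\big\|\sum_{i=1}^n X_i\big\|^2 = n^2\|\bar{X}\|^2 \leq n^2 \cdot \frac{1}{n}\sum_{i=1}^n \|X_i\|^2$, and the self-contained expansion $\sum_{i,j}\langle X_i, X_j\rangle \leq \sum_{i,j}\tfrac{1}{2}\big(\|X_i\|^2+\|X_j\|^2\big) = n\sum_{i=1}^n\|X_i\|^2$, with your coefficient bookkeeping exactly right (the diagonal terms hold with equality). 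The only cosmetic slip is your closing reference to the ``Cauchy--Schwarz'' version when what you presented was the convexity version, but this does not affect correctness.
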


\end{document}